\documentclass{article} 
\usepackage{iclr2026_conference,times}

\usepackage[utf8]{inputenc} 
\usepackage[T1]{fontenc}    
\usepackage{hyperref}       
\usepackage{url}            
\usepackage{booktabs}       
\usepackage{amsfonts}       
\usepackage{nicefrac}       
\usepackage{microtype}      
\usepackage{xcolor}         

\usepackage{amsmath}
\usepackage{amssymb}
\usepackage{mathtools}
\usepackage{amsthm}
\usepackage{parskip}
\usepackage{multicol, multirow}
\usepackage[textsize=tiny]{todonotes}
\usepackage{array}
\usepackage{caption}
\usepackage{subcaption}
\usepackage{algorithm, algorithmic}
\usepackage{enumitem}
\usepackage{wrapfig}

\usepackage[capitalize,noabbrev]{cleveref}

\theoremstyle{plain}
\newtheorem{theorem}{Theorem}[section]
\newtheorem{proposition}[theorem]{Proposition}
\newtheorem{lemma}[theorem]{Lemma}

\theoremstyle{definition}

\newtheorem{assumption}[theorem]{Assumption}
\theoremstyle{remark}
\newtheorem{remark}[theorem]{Remark}

\usepackage[textsize=tiny]{todonotes}

\newcommand{\bftheta}{\boldsymbol{\theta}}
\newcommand{\bfphi}{\boldsymbol{\phi}}
\newcommand{\loss}[1]{\ell^{\mathrm{#1}}}
\newcommand{\metaloss}{\mathcal{L}}
\newcommand{\reg}{\mathcal{R}}
\newcommand{\data}[1]{\mathcal{D}^{\mathrm{#1}}}
\newcommand{\estgrad}[1]{\hat{\nabla}^{\mathrm{#1}}}
\newcommand{\bfI}{\mathbf{I}}
\newcommand{\bfH}{\mathbf{H}}
\newcommand{\bfP}{\mathbf{P}}
\newcommand{\bfx}{\mathbf{x}}
\newcommand{\bfy}{\mathbf{y}}
\newcommand{\bfg}{\mathbf{g}}
\newcommand{\oprt}{\mathbb{B}}
\newcommand{\real}{\mathbb{R}}
\newcommand{\mcO}{\mathcal{O}}
\newcommand{\bfu}{\mathbf{u}}
\newcommand{\bfv}{\mathbf{v}}
\newcommand{\bfV}{\mathbf{V}}
\newcommand{\bfU}{\mathbf{U}}
\newcommand{\bfone}{\mathbf{1}}
\newcommand{\bfT}{\mathbf{T}}
\newcommand{\blue}[1]{#1}  

\newenvironment{proofsketch}{%
  \begin{proof}[Proof (sketch)]%
}{%
  \end{proof}%
}

\DeclareMathOperator*{\argmin}{arg\,min}
\allowdisplaybreaks

\usepackage{hyperref}
\usepackage{url}

\author{Yilang Zhang\thanks{Equal contribution}$\,\,^1$,~ Abraham Jaeger Mountain\footnotemark[1]$\,\,^1$,~ Bingcong Li$^2$ \& Georgios B. Giannakis$^1$\\ \\
$^1$Department of Electrical and Computer Engineering\\
University of Minnesota\\
Minneapolis, MN 55455, USA \\
\texttt{\{zhan7453,jaege252,georgios\}@umn.edu} \\ \\
$^2$Department of Computer Science \\
ETH Zürich \\
8092 Zürich, Switzerland \\
\texttt{\{bingcong.li\}@inf.ethz.ch} \\
}

\title{Binomial Gradient-Based Meta-Learning for Enhanced Meta-Gradient Estimation}

\iclrfinalcopy

\begin{document}
\maketitle
\begin{abstract}
Meta-learning offers a principled framework leveraging \emph{task-invariant} priors from related tasks, with which \emph{task-specific} models can be fine-tuned on downstream tasks, even with limited data records. Gradient-based meta-learning (GBML) relies on gradient descent (GD) to adapt the prior to a new task. Albeit effective, these methods incur high computational overhead that scales linearly with the number of GD steps. To enhance efficiency and scalability, existing methods approximate the gradient of prior parameters (meta-gradient) via truncated backpropagation, yet suffer large approximation errors. Targeting accurate approximation, this work puts forth binomial GBML (BinomGBML), which relies on a truncated binomial expansion for meta-gradient estimation. This novel expansion endows more information in the meta-gradient estimation via efficient parallel computation. As a running paradigm applied to model-agnostic meta-learning (MAML), the resultant BinomMAML provably enjoys error bounds that not only improve upon existing approaches, but also decay super-exponentially under mild conditions. Numerical tests corroborate the theoretical analysis and showcase boosted performance with slightly increased computational overhead.
\end{abstract}

\section{Introduction}
\label{introduction}

Deep learning (DL) has well-documented success in recent years across various domains~\citep{vaswani2017attention,he2016deep,brown2020language}. With its merits granted, the effectiveness of DL relies markedly on high-dimensional models that require massive datasets for training. 
In applications where data are scarce, acquiring massive training datasets can be prohibitively expensive. Examples of such ``data-limited'' regimes include medical imaging~\citep{varoquaux2022machine}, social sciences~\citep{grimmer2022text}, and robot manipulation~\citep{kroemer2021review}. 

As opposed to data-hungry DL, a hallmark of human intelligence is the ability to exploit past knowledge to swiftly acquire new skills through limited experiences and trials.
\emph{Meta-learning} aims to impart humans' knowledge-generalizing ability into DL by identifying some \emph{task-invariant} prior knowledge, which can be subsequently transferred to aid the learning of new tasks, even when training data are limited. 
Akin to human learning from past experiences, meta-learning accumulates the sought prior knowledge from a set of related tasks. 
Model-based approaches, which attempt to encode prior task knowledge in some additional meta-NN, have seen some success, but are prone to poor robustness and require hand-tooled design.


In contrast, gradient-based meta-learning (GBML) uses an iterative optimizer, such as gradient descent (GD), to train task-specific models, with prior information embedded in the learnable hyperparameters of the optimizer. 
By learning efficient optimization hyperparameters, the model can rapidly adapt to new tasks in a few optimization iterations~\citep{andrychowicz2016learning}. 
A representative example is model-agnostic meta-learning (MAML)~\citep{finn2017model}, which encodes the prior using an initialization shared across tasks and relies on GD to adapt the task-specific models. By starting from an informative initial point, the model can easily converge to some local minimum even with limited training data. Building upon MAML, many GBML variants have been developed to further improve performance~\citep{yoon2018bayesian, Khodak2019AdaptiveGM, nichol2018first, raghu2019rapid}.

While GBML has gained popularity in various application domains~\citep{Zhu2020PersonalizedIA, Zang2020MetaLightVM, Chen2021MetalearningVL}
the process of learning the prior can incur high computational complexity. In particular,
the complexity for calculating the gradient of prior (a.k.a. meta-gradient) is linear with the number of adaptation steps, rendering these algorithms barely scalable. 
To lower the computational overhead of GBML, meta-gradient estimators have been investigated as an alternative to vanilla full backpropagation. 
For instance, first-order approaches~\citep{finn2017model, nichol2018first} 
do not make explicit use of higher-order information.
However, this ``over-simplification'' can lead to large estimation errors, thereby decelerating meta-learning convergence and resulting in degraded performance. For a desirable trade-off between accuracy and complexity, truncated backpropagation~\citep{shaban2019truncated} is advocated, in the more general context of gradient-based bilevel optimization, to retain a specific portion of second-order information.
Another line of work~\citep{rajeswaran2019meta, ravi2019amortized, zhang2023scalable} utilizes implicit differentiation to replace full backpropagation, but the implicit function theorem~\citep{krantz2002implicit} involved relies markedly on the stationarity of the solution found by the iterative optimizer. 

While first-order and truncated backpropagation lessen the computational load of GBML, we will demonstrate with analysis and empirical observations that they can lead to large approximation errors.
With this motivation, we revisit the GBML meta-gradient formulation and develop a novel meta-gradient estimate, binomial gradient-based meta-learning (BinomGBML), which earns its name from the keystone binomial theorem. Our contribution is three-fold:
\begin{itemize}[leftmargin=*]
    \item Embracing efficient parallelization, BinomGBML improves upon state-of-the-art meta-gradient estimators by retaining more information to markedly reduce the estimation error.
    \item Theoretical analyses under three different assumptions reveal BinomMAML's reduced estimation error, which decays super-exponentially. Extensive numerical tests on synthetic and real datasets further validate these claims. 
    \item As a side benefit, when combined with MAML~\citep{finn2017model}, BinomMAML addresses the memory scalability of MAML through dynamic management of computational graphs. 
\end{itemize}

\textbf{Notation.} Bold lowercase (uppercase) letters denote vectors (matrices); $\| \cdot \|$ and $\langle \cdot, \cdot \rangle$ stand for $\ell_2$-norm and inner product; and $[\cdot]_i$ for the $i$-th entry (column) of a vector (matrix). 

\section{Preliminaries}
\subsection{Problem statement}
Meta-learning first acquires a task-invariant prior parameterized by $\bftheta \in \real^D$ that is common to a set of related tasks. Let $t = 1,\ldots,T$ index these tasks, and $\data{}_t := \{ (\bfx_t^n, y_t^n) \}_{n=1}^{N_t}$ denote the per-task datasets comprising $N_t$ (limited) input-label pairs. For compactness, let matrix $\mathbf{X}_t \in \real^{d \times N_t}$ be formed with columns $\bfx_t^n$, and $\bfy_t \in \real^{N_t}$ be the corresponding label vector. Each $\data{}_t$ is partitioned into a training subset $\data{trn}_t \subset \data{}_t$ and a validation subset $\data{val}_t := \data{}_t \setminus \data{trn}_t$. 
Instead, the meta-training process first extracts a task-invariant prior from these tasks by leveraging their associated data. Then, to measure the quality of the meta-trained prior, an unseen meta-testing task indexed by $\star$ is provided with limited training data $\data{trn}_\star$, and some test inputs $\{ \bfx_\star^n \}$.
As $N_\star$ can be markedly smaller than required for DL, directly training a model over $\data{trn}_\star$ could easily lead to overfitting. 
The meta-testing process transfers the acquired prior to the new task, at which point a DL model can be fine-tuned over the small $\data{trn}_\star$ to predict the corresponding test labels $\{ y_\star^n \}$. 

The meta-training process amounts to a nested empirical risk minimization problem. The inner level (task level) trains the per-task model parameter $\bfphi_t \in \real^d$ over $\data{trn}_t$ using the task-invariant prior provided by the outer level. The outer level (meta level) assesses the trained model on $\data{val}_t$, and adjusts the prior accordingly. 
Recall $\bftheta \in \real^D$ is the shared prior parameter (a.k.a. meta-parameter). This nested structure is naturally expressed as a bilevel objective
\begin{subequations}
\label{eq:meta-learning-obj}
\begin{align}
	&\min_{\bftheta} \;\sum_{t=1}^T \loss{val}_t (\bfphi_t^* (\bftheta)) \label{eq:bgml-upper-level-obj}\\
	&~\text{s.t.} ~~\bfphi_t^* (\bftheta) = \argmin_{\bfphi_t} \loss{trn}_t (\bfphi_t) + \reg (\bfphi_t; \bftheta) \label{eq:bgml-lower-level-obj}
\end{align}
\end{subequations}
where $\loss{trn}_t(\bfphi_t) := \loss{trn}(\bfphi_t;\data{trn}_t) := -\log p(\bfy_t^{\mathrm{trn}} | \bfphi_t;\allowbreak \mathbf{X}_t^{\mathrm{trn}})$ is the training loss (negative log-likelihood) evaluating the fit of the task-specific model to $\data{trn}_t$, 
and regularizer $\reg (\bfphi_t; \bftheta) := -\log p(\bfphi_t; \bftheta)$ represents the negative log-prior probability density function. 
Using Bayes' rule, one can verify that $\bfphi_t^* (\bftheta) = \argmin_{\bfphi_t} {-}\log p(\bfphi_t | \data{trn}_t; \bftheta)$ is the maximum a posteriori estimator.

While formulation~\eqref{eq:meta-learning-obj} is appealing, the global optimum $\bfphi_t^*$ in~\eqref{eq:bgml-lower-level-obj} is generally intractable for highly nonlinear NNs.
A feasible alternative is to rely on a manageable solver to approximate $\bfphi_t^*$. 

Gradient-based meta-learning (GBML) poses the solver as a few cascaded optimization steps. 
The meta-parameter $\bftheta$ acts as the optimizer hyperparameters shared among all tasks. 
The first work along this line is termed model-agnostic meta-learning (MAML)~\citep{finn2017model}. It relies on a $K$-step GD optimizer as the solver, with a task-invariant initialization being the meta-parameter (thus $D = d$), which renders the alternative objective
\begin{subequations}
\label{eq:MAML-obj}
\begin{align}
	&\min_{\bftheta} ~\frac{1}{T}\sum_{t=1}^T \metaloss_t (\bftheta) := \frac{1}{T}\sum_{t=1}^T \loss{val}_t (\bfphi_t^K (\bftheta)) \label{subeq:maml-meta-grad} \\
    &~\text{s.t.} ~~\bfphi_t^0 (\bftheta) = \bftheta \label{subeq:maml-gd},~~\bfphi_t^{k+1} (\bftheta) = \bfphi_t^k (\bftheta) - \alpha \nabla \loss{trn}_t (\bfphi_t^k (\bftheta)),~k=0,\ldots,K-1 
\end{align}
\end{subequations}
where $\alpha > 0$ is a constant learning rate. Although~\eqref{eq:MAML-obj} contains no explicit $\reg$, it is pointed out that under second-order approximation~\citep{grant2018recasting},~\eqref{subeq:maml-gd} satisfies
\begin{equation*}
    \bfphi_t^K (\bftheta) \approx \bfphi_t^* (\bftheta) = \argmin_{\bfphi_t} \loss{trn}_t (\bfphi_t) + \frac{1}{2}\| \bfphi_t - \bftheta \|_{\mathbf{\Lambda}_t }^2
\end{equation*}
where $\mathbf{\Lambda}_t$ is determined by $\alpha$, $K$, and $\nabla^2 \loss{trn}_t (\bftheta)$. In other words, MAML's optimization strategy is equivalent to an implicit Gaussian prior $p(\bfphi_t; \bftheta) = \mathcal{N} (\bfphi_t; \bftheta, \mathbf{\Lambda}_t)$, whose mean is the task-invariant initialization. 

\subsection{Computational challenges in meta-learning} \label{subsec:computational-challenge}
This section examines the time and space complexities of GBML. For illustration, the discussion will particularly focus on MAML, while similar analysis can be readily applied to other GBML methods including~\citep{li2017meta, lee2018gradient, rusu2018meta, park2019meta}.

Optimizing $\bftheta$ in~\eqref{subeq:maml-meta-grad} requires computing the gradient
\begin{align} \label{eq:maml-explicit-backprop}
    \nabla \metaloss_t(\bftheta)  &= \nabla_{\bftheta} \loss{val}_t (\bfphi_t^K (\bftheta)) {\overset{(a)}{=}} \prod_{k=0}^{K-1} \nabla \bfphi_t^k (\bfphi_t^{k-1}) \nabla\loss{val}_t(\bfphi_t^{K}) \overset{(b)}{=} \prod_{k=0}^{K-1}\bigl[ \bfI_d- \alpha \nabla^2 \loss{trn}_t(\bfphi_t^{k}) \bigr] \nabla\loss{val}_t(\bfphi_t^{K})
\end{align}
where $(a)$ utilizes the chain rule, and $(b)$ is from~\eqref{subeq:maml-gd}. For compactness, let $\bfH_t^k := \nabla^2 \loss{trn}_t(\bfphi_t^{k})$ denote the Hessian, and $\bfg_t^K := \nabla\loss{val}_t(\bfphi_t^{K})$ the validation gradient. It follows from~\eqref{eq:maml-explicit-backprop} that 
\begin{equation} \label{eq:maml-meta-grad-expanded}
    \nabla \metaloss_t(\bftheta) = \prod_{k=0}^{K-1}\bigl[ \bfI_d- \alpha \bfH_t^k \bigr] \bfg_t^K.
\end{equation}
For any vector $\bfg \in \real^d$, one can compute $[\bfI_d- \alpha \bfH_t^k] \bfg$ by first expanding $[ \bfI_d - \alpha \bfH_t^k ] \bfg = \bfg - \alpha \bfH_t^k \bfg$ and then using the Hessian-vector product (HVP) to efficiently obtain $\bfH_t^k \bfg = \nabla_{\bfphi_t^k} \langle \nabla \loss{trn}_t (\bfphi_t^k), \bfg \rangle$. As a result, $\mcO (Kd)$ time and space are required to compute~\eqref{eq:maml-meta-grad-expanded} through backpropagation.

Note that acquiring $\bfphi_t^K$ via~\eqref{subeq:maml-gd} also incurs $\mcO(Kd)$ time complexity, but the constant hidden inside $\mcO$ is much smaller than~\eqref{eq:maml-explicit-backprop}. Thus, the overall time complexity is dominated by the backpropagation process~\citep{griewank1993some}.
Given that GD converges sub-linearly when the gradient is Lipschitz, it requires $K = \mcO (1/\epsilon)$ iterations for $\bfphi_t^K$ to reach an $\epsilon$-stationary point~\citep{bertsekas1997nonlinear}.
This implies that a sufficiently large $K$ is required to accurately solve \eqref{eq:MAML-obj}, raising concerns about the scalability of MAML as its time and space complexities both grow linearly with $K$.

To address this issue, first-order (FO)MAML~\citep{finn2017model} was proposed alongside `vanilla MAML,' providing a simpler but less accurate approximation to $\nabla \metaloss_t (\bftheta)$. 
FOMAML discards all second-order information by simply setting $\bfH_t^k = \mathbf{0}_{d\times d},~\forall\ k$ , thus resulting in the $\mcO(d)$ time and space estimate
\begin{equation*}
\label{eq:FOMAML}
	\estgrad{FO} \metaloss_t (\bftheta) = \bfg_t^K. 
\end{equation*}
This ``brutal'' simplification heavily prioritizes reduction of computations at the expense of estimation accuracy, which can degrade model quality and slow meta-training convergence~\citep{sow2022convergence}.
A different first-order approach, Reptile \citep{nichol2018first}, proposes the meta-update rule $\bftheta \leftarrow \bftheta + \epsilon \frac{1}{T} \sum_{t=1}^{T} (\bfphi_t^K - \bftheta)$, with $\epsilon\in(0,1]$, rather than performing GD on $\metaloss(\bftheta)$. This method implicitly retains higher-order information, but suffers from sensitivity to training setup in practice and, under favorable conditions, results in comparable performance to FOMAML.

To address the limitations of first-order methods, Truncated MAML (TruncMAML)~\citep{shaban2019truncated} suggests retaining a portion of second-order information by truncating the backpropagation~\eqref{eq:maml-meta-grad-expanded}. 
Upon setting $\bfH_t^k=\mathbf{0},~k = 0, 1,\ldots, K\!-\!L\!-\!1$, where $L \in [0, K]$ is a user-defined truncation constant, the gradient estimate boils down to
\begin{equation}
\label{eq:TruncMAML}
    \estgrad{Tr}\metaloss_t (\bftheta) = \prod_{k=K-L}^{K-1}\bigl[ \bfI_d- \alpha \bfH_t^k \bigr] \bfg_t^K.
\end{equation}
Since~\eqref{eq:TruncMAML} involves $L \le K$ HVP computations, the time and space complexities of TruncMAML both decrease to $\mcO(Ld)$. It is seen that TruncMAML reduces to FOMAML when $L = 0$, and recovers vanilla MAML when $L=K$.
It is empirically observed, however, that the gradient error of TruncMAML decreases slowly when $L$ is small; cf. Figure~\ref{subfig:averaged-sine-grad-errors}. More rigorous performance analyses will be provided in Section~\ref{subsec:theoretical-error-bounds}, with error bounds illustrated in Figure~\ref{fig:error-upper-bounds}. It will turn out that large $L$ sufficiently close to $K$ will be needed to ensure an accurate meta-gradient estimate.

Another approach for meta-gradient estimation, termed implicit MAML (iMAML), is to rely on the implicit function theorem~\citep{krantz2002implicit}, which yields a closed-form solution to $\nabla \metaloss_t (\bfphi_t^*)$. For instance, it has been shown in~\citep{rajeswaran2019meta} that, with an explicit regularizer $\reg (\bfphi_t; \bftheta) = \frac{\lambda}{2} \| \bfphi_t - \bftheta \|^2$ induced by an isotropic Gaussian prior, the per-task validation gradient is
$
    \nabla_{\bftheta} \loss{val}_t (\bfphi_t^* (\bftheta)) {=} \Big[ \mathbf{I}_d + \frac{1}{\lambda} \nabla^2 \loss{trn}_t (\bfphi_t^* (\bftheta)) \Big]^{-1} \nabla \loss{val}_t (\bfphi_t^* (\bftheta)). 
$
Similarly, as the global optimum $\bfphi_t^*$ is intractable, it must be approximated with a nearly optimal $\bfphi_t^K$ in the calculation. Moreover, the inverse Hessian-vector product requires another approximate solver, such as conjugate gradient iterations~\citep{rajeswaran2019meta}. \blue{As memory does not grow with the number of solver iterations (denoted $L$), iMAML has time complexity $\mcO(Ld)$ and memory complexity $\mcO(d)$.}
\blue{However, the iterative solver} introduces additional errors in the meta-gradient estimation, making the algorithms numerically less stable than FOMAML and TruncMAML~\citep{rajeswaran2019meta, zhang2023scalable}. Consequently, reverse-mode explicit automatic differentiation (i.e., backpropagation) has become more prevalent in practice, and this work will primarily focus on it.

\section{Meta-gradient via truncated binomial expansion} \label{sec:binomGBML}
While they reduce computational complexity compared to vanilla MAML, FOMAML and TruncMAML can suffer from high estimation errors that impair the meta-learning convergence and performance. 
Our motivation stems from the observation that both~\eqref{eq:maml-explicit-backprop} and~\eqref{eq:TruncMAML} entail a sequential chain of HVP computations that cannot be parallelized. To enhance the accuracy of meta-gradient estimation, our key idea is to incorporate more information by performing parallel computations concurrent with each HVP. 

To this end, we will first develop a meta-gradient estimator by truncating the binomial expansion of~\eqref{eq:maml-meta-grad-expanded}. This leads to an approach that we naturally term binomial gradient-based meta-learning (BinomGBML). In the ensuing Section 3.1, the truncated expansion will be reformulated as a cascade of $L$ vector operators, each corresponding to several parallel HVP computations. Then, Section~\ref{subsec:theoretical-error-bounds} will provably show that BinomGBML not only leads to low estimation error bounds under various assumptions, but also diminishes at a \emph{super-exponential} rate as $L$ increases, as opposed to the slow decay of TruncGBML. Consequently, a small $L$ suffices for accurate gradient estimation. 

Again, the discussion will be exclusive to MAML for simplicity, while our approach can be broadened to other GBML algorithms, along the lines adopted by  FOMAML and TruncMAML extensions. All proofs of this section are deferred to the Appendix. 

\subsection{Meta-gradient estimation via binomial expansion}\label{subsec:binom-expansion}
Our approach is based on the binomial expansion of~\eqref{eq:maml-meta-grad-expanded}. Recall the binomial theorem~\citep{Beyer1984Standard}
\begin{equation} \label{eq:binomial-theorem}
    (1+z)^K = \sum_{l=0}^{K} \binom{K}{l} z^l,~\forall z \in \real. 
\end{equation}
This expansion can be generalized to multivariate product $\prod_{i=1}^n (1 + z_i)$ and even the matrix product in~\eqref{eq:maml-meta-grad-expanded} to yield
\begin{equation} \label{eq:binom-theorem-matrix}
    \prod_{k=0}^{K-1}\bigl[ \bfI_d- \alpha \bfH_t^k \bigr] = \mathbf{I}_d + \sum_{l=1}^K \sum_{0\le k_{1:l} \uparrow < K} \prod_{i=1}^l (-\alpha \bfH_t^{k_i}).
\end{equation}
where we define $\{ 0\le k_{1:l} \uparrow < K \} := \{ 0 \le k_1 < k_2 < \ldots < k_l < K \}$ as a combinatorial set containing $\binom{K}{l}$ terms. 
In~\eqref{eq:binom-theorem-matrix}, $\bfI_d$ corresponds to the $l=0$ term $\binom{K}{0} z^0 = 1$ in~\eqref{eq:binomial-theorem}; the summation over $0\le k_{1:l} \uparrow < K$ and product $\prod_{i=1}^l (-\alpha \bfH_t^{k_i})$ respectively align with the binomial factor $\binom{K}{l}$ and the $l$-th power $z^l$ in~\eqref{eq:binomial-theorem}. 
Intuitively, if the learning rate $\alpha$ is sufficiently small, the product $\prod_{i=1}^l (-\alpha \bfH_t^{k_i}) = \mcO(\alpha^l)$ can decrease exponentially with $l$. This motivates our meta-gradient estimator based on the truncated binomial expansion as 
\begin{align}\label{eq:BinomMAML}
    &\estgrad{Bi}\metaloss_t (\bftheta) = \Bigg[ \bfI_d + \sum_{l=1}^L \sum_{0\le k_{1:l} \uparrow < K} \prod_{i=1}^l (-\alpha \bfH_t^{k_i}) \Bigg] \bfg_t^K.
\end{align}
As the ignored higher-order terms can be as small as $\mcO(\alpha^{L+1})$, the gradient error is expected to diminish exponentially as truncation $L$ increases.  
This expansion is also amenable to parallelization. With $L = 1$ for instance, it follows from~\eqref{eq:BinomMAML} that
$\estgrad{Bi}\metaloss_t (\bftheta) = \bfg_t^K - \alpha \sum_{k=0}^{K-1} \bfH_t^k \bfg_t^K$,
where the $K$ HVPs in the summation can be efficiently performed in parallel thanks to their computational independence. However, the double sum in~\eqref{eq:BinomMAML} involves a total of $\sum_{l=1}^L \binom{K}{l}$ terms. Directly computing each of them using parallel HVPs can be prohibitive. 
To improve efficiency, we advocate reducing redundancy in the summation by merging common terms of different matrix products; see Appendix~\ref{app:L=2} for an illustrative example. 
Using this idea, the next Proposition shows that the truncated binomial expansion can be rewritten as a cascade of matrix operators. 
\begin{proposition}[matrix operator $\oprt_t^i$]
\label{prop:binom-partial-sum}
Let $\bfP_t^0 := \bfI_d$, $\bfP_t^i := \prod_{k=K-i}^{K-1} (\bfI_d - \alpha \bfH_t^k), \;i=1,\ldots,K$, and dummy index $k_0=-1$. Define operator $\oprt_t^i: \real^{d \times d} \mapsto \real^{d \times d}$ such that for any matrix $\mathbf{M} \in \real^{d\times d}$, $\oprt_t^i \mathbf{M} := \bfP_t^i - \alpha \sum_{k_{L-i}=k_{L-1-i}+1}^{K-1-i} \bfH_t^{k_{L-i}} \mathbf{M}, \;i=0,\ldots,L-1$. For $0 \le L \le K$, it holds that
\begin{equation}\label{eq:prop-3-1}
	\bfI_d + \sum_{l=1}^{L} \sum_{0\le k_{1:l} \uparrow < K} \prod_{i=1}^l (-\alpha \bfH_t^{k_i}) = \oprt_t^{L-1} \oprt_t^{L-2} \ldots \oprt_t^0 \bfI_d.
\end{equation}
\end{proposition}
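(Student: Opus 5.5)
The plan is to read the cascade in \eqref{eq:prop-3-1} as a nested sum in which the free index $k_{L-1-i}$ appearing in $\oprt_t^i$ is bound by the next operator applied on the left. With that reading, the identity becomes a recursion on truncated partial products. For $0\le m\le K$ and $j\ge 0$ I define the order-$j$ truncated expansion of the tail product
\begin{equation*}
\mathbf{S}_j(m) := \bfI_d + \sum_{l=1}^{j} \sum_{m\le k_{1}<\cdots<k_l < K} \prod_{i=1}^l (-\alpha \bfH_t^{k_i}).
\end{equation*}
The left side of \eqref{eq:prop-3-1} is then exactly $\mathbf{S}_L(0)$. I would establish three facts about $\mathbf{S}_j(m)$ and then match them to the operators.

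\emph{Step 1 (recursion).} Group the terms of $\mathbf{S}_j(m)$ by the smallest index $k_1=k$. The remaining indices form an increasing sequence of length at most $j-1$ in $\{k+1,\dots,K-1\}$. This gives
\begin{equation*}
\mathbf{S}_j(m)=\bfI_d-\alpha\sum_{k=m}^{K-1}\bfH_t^{k}\,\mathbf{S}_{j-1}(k+1),\qquad \mathbf{S}_0(m)=\bfI_d .
\end{equation*}

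\emph{Step 2 (saturation).} If $K-m\le j$, no subset of $\{m,\dots,K-1\}$ is excluded by the truncation. Hence the full binomial identity \eqref{eq:binom-theorem-matrix}, applied to the factors with indices $m,\dots,K-1$, gives $\mathbf{S}_j(m)=\prod_{k=m}^{K-1}(\bfI_d-\alpha\bfH_t^k)=\bfP_t^{K-m}$.

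\emph{Step 3 (tail telescoping).} For every $i\ge 0$,
\begin{equation*}
\bfI_d-\alpha\sum_{k=K-i}^{K-1}\bfH_t^k\,\bfP_t^{K-1-k}=\bfP_t^i .
\end{equation*}
I would prove this by induction on $i$ using $\bfP_t^{i+1}=(\bfI_d-\alpha\bfH_t^{K-1-i})\bfP_t^{i}$. Alternatively, it is Steps 1 and 2 applied with $m=K-i$ and $j=i$.

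\emph{Step 4 (matching).} Now split the sum in Step 1 with $j=i+1$ at $k=K-1-i$. For $k\ge K-i$ we have $K-(k+1)\le i-1<i$, so by Step 2 the term $\mathbf{S}_{i}(k+1)$ equals $\bfP_t^{K-1-k}$, and by Step 3 these tail terms together with $\bfI_d$ sum to $\bfP_t^i$. This yields
\begin{equation*}
\mathbf{S}_{i+1}(m)=\bfP_t^i-\alpha\sum_{k=m}^{K-1-i}\bfH_t^k\,\mathbf{S}_i(k+1).
\end{equation*}
This is precisely $\oprt_t^i$ applied to $\mathbf{S}_i(\cdot)$, with $m=k_{L-1-i}+1$ and summation index $k=k_{L-i}$. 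Starting from $\mathbf{S}_0=\bfI_d$ and inducting on $i=0,\dots,L-1$ then gives $\oprt_t^{i}\cdots\oprt_t^0\bfI_d=\mathbf{S}_{i+1}(k_{L-1-i}+1)$. At $i=L-1$ the outer index is the dummy $k_0=-1$, so the result is $\mathbf{S}_L(0)$, which proves \eqref{eq:prop-3-1}. The edge cases are immediate: $L=0$ is trivial, and when the upper limit $K-1-i$ falls below the lower limit the sum is empty, consistent with $L\le K$.

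The main obstacle is not any computation but getting the bookkeeping right. One must interpret the free index in the definition of $\oprt_t^i$ correctly as a function of the enclosing summation variable. One must also verify that the saturation condition in Step 2 holds exactly on the tail range $k\ge K-i$, so that the truncated terms recombine into $\bfP_t^i$ without double counting. I would check this carefully against the $L=2$ example in Appendix~\ref{app:L=2} before writing out the general induction.
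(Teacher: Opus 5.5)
Your proof is correct, and it is organized differently from the paper's. The paper inducts on the truncation order $L$. It splits the order-$(L'+1)$ partial sum into the order-$L'$ partial sum plus the top-order terms, and replaces the former by the $L'$-cascade via the induction hypothesis. It then uses Lemma~\ref{lemma:operator-recurse} to fold the boundary term $k_1=K-L'$ of the outermost operator into $\bfP_t^{L'}$, so that the top-order terms regroup under the outermost operator of the $(L'+1)$-cascade. The inner bracket is handled by ``recursively applying'' the same steps. Because the labels shift with $L$, the paper must carry the operators as $\oprt_t^{L,i}$. That last step also tacitly applies the induction hypothesis to the tail sequence $\bfH_t^{k_1+1},\dots,\bfH_t^{K-1}$, i.e., to sums starting at a general index rather than at $0$, which the written proof leaves implicit.

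You instead fix $L$, build the general starting index into the claim through $\mathbf{S}_j(m)$, and induct on the depth $i$ of the cascade. First-index recursion, saturation and tail telescoping then show that $\oprt_t^i$ maps $\mathbf{S}_i(\cdot+1)$ to $\mathbf{S}_{i+1}(\cdot)$. This makes the needed generalization explicit and avoids relabeling. As a by-product, it identifies the stored vectors $\bfv_t^{j,m}$ of Appendix~\ref{app:compute-vec-oprt} as $\mathbf{S}_j(m)\bfg_t^K$, and your saturation step is essentially Lemma~\ref{lemma:saved-hvp}. The paper's route, in turn, shows directly how the cascade changes when one more order is retained.

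One detail you should state explicitly: the identity in Step 4 requires $m\le K-i$. For $m>K-i$ the tail $\{K-i,\dots,K-1\}$ is not contained in $\{m,\dots,K-1\}$. Saturation then gives $\mathbf{S}_{i+1}(m)=\bfP_t^{K-m}$, which differs from $\bfP_t^{i}$ in general. In the cascade the condition always holds. Since $k_{L-1-i}$ is the summation index of $\oprt_t^{i+1}$, we have $k_{L-1-i}+1\le K-1-i$. At the outermost level, $m=k_0+1=0\le K-L+1$ because $L\le K$. In particular, the empty sums you mention never actually occur.
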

\blue{
\begin{proofsketch}
    The proof is carried out by induction on $L$. The base case $L=1$ is readily shown by applying the definition of $\oprt_t^{i}$. Assuming the cases $L=1,\dots,L'$ (where $L'<K$) hold, the sums of~\eqref{eq:prop-3-1} can be recursively expanded and rewritten in terms of $\oprt_t^{i}$ to obtain
    \begin{align*}
        \bfI_d + \sum_{l=1}^{L'+1} \sum_{0\le k_{1:l} \uparrow < K} \prod_{i=1}^l (-\alpha \bfH_t^{k_i}) &= \oprt_t^{L'} \oprt_t^{L'-1} \ldots \oprt_t^{1} \Bigg[ \bfI_d + \sum_{k_{L'+1}=k_{L'}+1}^{K-1} (-\alpha \bfH_t^{k_{L'+1}}) \Bigg] \\
        & =  \oprt_t^{L'} \oprt_t^{L'-1} \ldots \oprt_t^{1} \oprt_t^{0} \bfI_d
    \end{align*}
\end{proofsketch}}
Notably, with the definition of matrix product $\bfP_t^k$, the meta-gradients~\eqref{eq:maml-meta-grad-expanded} and~\eqref{eq:TruncMAML} can be further simplified as $\nabla \metaloss_t (\bftheta) = \bfP_t^K \bfg_t^K$ and $\estgrad{Tr} \metaloss_t (\bftheta) = \bfP_t^L \bfg_t^K$. Essentially, each matrix $\bfI_d - \alpha \bfH_t^k$ represents a mapping $\real^d \mapsto \real^d$ when multiplying with $\bfg_t^K$, and thus can be viewed as a vector operator in $\real^d$ that requires one HVP computation. 

Building upon these insights and Proposition~\ref{prop:binom-partial-sum}, we have established the following result. 
\begin{theorem}
[vector operator $\oprt_t^{\bfg, i}$]
\label{thm:binom-maml-cascade}
With the notational convention in Proposition~\ref{prop:binom-partial-sum}, and given vector $\bfg \in \real^d$, consider operator $\oprt_t^{\bfg, i}: \real^d \mapsto \real^d$ such that for any vector $\mathbf{v} \in \real^{d}$,  $\oprt_t^{\bfg,i} \mathbf{v} := \bfP_t^i \bfg - \alpha \sum_{k_{L-i}=k_{L-1-i}+1}^{K-1-i} \bfH_t^{k_{L-i}} \mathbf{v}$, for $i=0,1,\ldots,L-1$. For $0 \le L \le K$, it holds  that 
\begin{equation} \label{eq:binom-operator-cascade}
	\estgrad{Bi} \metaloss (\bftheta) = \oprt_t^{\bfg_t^K, L-1} \oprt_t^{\bfg_t^K, L-2} \ldots \oprt_t^{\bfg_t^K,0} \bfg_t^K. 
\end{equation}
\end{theorem}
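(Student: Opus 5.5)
The plan is to deduce the theorem from Proposition~\ref{prop:binom-partial-sum} by right-multiplying the matrix identity~\eqref{eq:prop-3-1} by $\bfg_t^K$. By definition~\eqref{eq:BinomMAML}, the left-hand side of~\eqref{eq:prop-3-1} times $\bfg_t^K$ is exactly $\estgrad{Bi}\metaloss_t(\bftheta)$. It therefore suffices to show
\[
\bigl(\oprt_t^{L-1}\cdots\oprt_t^{0}\bfI_d\bigr)\bfg_t^K = \oprt_t^{\bfg_t^K,L-1}\cdots\oprt_t^{\bfg_t^K,0}\bfg_t^K .
\]

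\emph{Key step.} The matrix and vector operators should intertwine under right multiplication by $\bfg:=\bfg_t^K$. For every $i$ and every matrix $\mathbf{M}$ we expect
\[
(\oprt_t^i\mathbf{M})\bfg = \oprt_t^{\bfg,i}(\mathbf{M}\bfg).
\]
This follows directly from the definitions and linearity:
\[
(\oprt_t^i\mathbf{M})\bfg = \bfP_t^i\bfg - \alpha\sum_{k_{L-i}}\bfH_t^{k_{L-i}}\mathbf{M}\bfg = \oprt_t^{\bfg,i}(\mathbf{M}\bfg).
\]
The summation range depends only on $i$, $L$, $K$ and the outer index bookkeeping, and is the same for both operators. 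With this identity, I would run an induction on $j=0,\dots,L-1$ with the claim
\[
\bigl(\oprt_t^{j}\cdots\oprt_t^{0}\bfI_d\bigr)\bfg = \oprt_t^{\bfg,j}\cdots\oprt_t^{\bfg,0}(\bfI_d\bfg).
\]
The base case is the identity with $\mathbf{M}=\bfI_d$. For the inductive step, apply the identity with $\mathbf{M}=\oprt_t^{j-1}\cdots\oprt_t^0\bfI_d$ and then use the inductive hypothesis. At $j=L-1$, and since $\bfI_d\bfg=\bfg$, this gives~\eqref{eq:binom-operator-cascade}. The degenerate case $L=0$ reads $\estgrad{Bi}\metaloss_t=\bfg_t^K$ with an empty cascade, and is consistent with the same convention.

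\emph{Main obstacle.} The only delicate point is the free-looking index $k_{L-1-i}$ in the lower summation limit. In both the proposition and the theorem it is not a fixed number: it is the index bound by the next-outer operator, so the operators are implicitly nested sums. For this reason I would write out the intertwining at the level of the full nested expression rather than operator by operator. Concretely, I would expand $\oprt_t^{L-1}\cdots\oprt_t^{0}\bfI_d$ as the nested sum over $k_1<k_2<\dots$ with the $\bfP_t^i$ terms inserted, as produced in the proof of Proposition~\ref{prop:binom-partial-sum}. I would then check that right-multiplying by $\bfg$ distributes through every summand. Each summand is a product of $\bfH$'s followed by either $\bfP_t^i$ or $\bfI_d$, so multiplying by $\bfg$ on the right just replaces each trailing $\bfP_t^i$ by $\bfP_t^i\bfg$ and $\bfI_d$ by $\bfg$, which is exactly the definition of $\oprt_t^{\bfg,i}$. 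Since matrix multiplication is associative and distributes over the finite sums, the bound-index dependence causes no real difficulty. The work is in keeping the notation consistent, not in any substantive estimate.
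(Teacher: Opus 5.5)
Your proposal is correct and follows essentially the same route as the paper: right-multiply the identity of Proposition~\ref{prop:binom-partial-sum} by $\bfg_t^K$, then apply the intertwining relation $[\oprt_t^{i}\mathbf{M}]\bfg = \oprt_t^{\bfg,i}[\mathbf{M}\bfg]$ recursively (the paper peels off the outermost operator first rather than inducting from the innermost, which makes no difference). Your point about the bound index $k_{L-1-i}$ is fair and the paper glosses over it, but as you say it is settled simply because right-multiplication by $\bfg$ distributes over the index-dependent nested sums.
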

Theorem~\ref{thm:binom-maml-cascade} reformulates the complicated binomial expansion in~\eqref{eq:BinomMAML} as a series of $L$ vector operators. Note that the vector operator in Theorem~\ref{thm:binom-maml-cascade} is distinguished by a superscript from the matrix operator in Proposition~\ref{prop:binom-partial-sum}. Similar to the cascaded matrix-vector multiplication in~\eqref{eq:maml-meta-grad-expanded} and~\eqref{eq:TruncMAML}, these vector operators must be carried out from right to left sequentially. However, each operator in~\eqref{eq:binom-operator-cascade} may contain multiple computationally independent HVPs that can be readily parallelized on the GPU. The next remark elaborates on this parallelization and analyzes the complexity of the resultant BinomMAML algorithm.
\begin{wrapfigure}{r}{0.6\textwidth}
    \centering
    \vspace{-0.3cm}
    \includegraphics[width=0.6\textwidth]{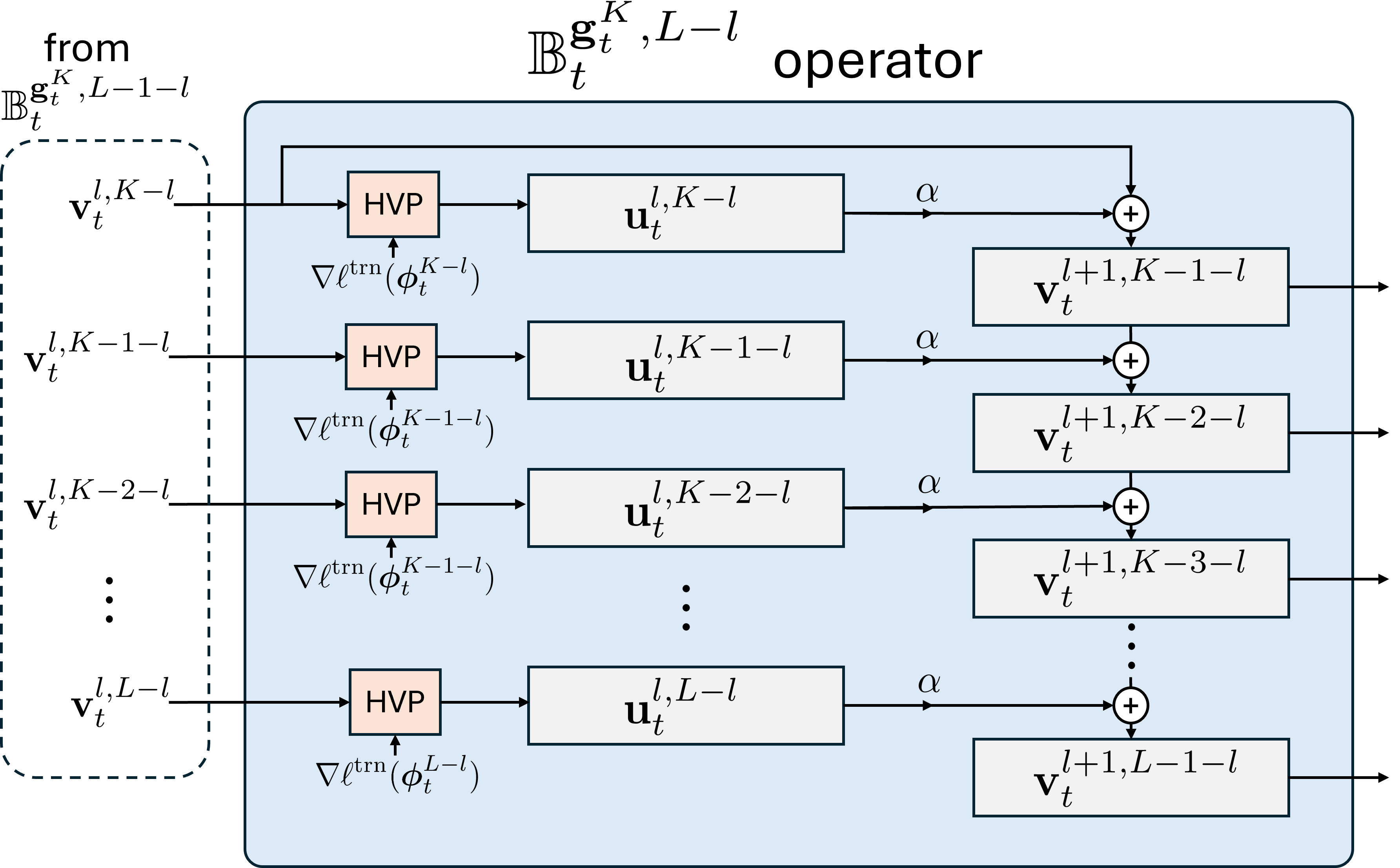}
    \vspace{-0.25cm}
    \caption{\blue{Depiction of $\oprt_t^{\bfg_t^K,L-l}$ operator}}
    \vspace{-0.8cm}
    \label{fig:b-oprt-illustration}
\end{wrapfigure}
\begin{remark}
In Theorem~\ref{thm:binom-maml-cascade}, each operator requires computing HVPs with $\bfH_t^{k_{L-i}}$, where the index $k_{L-i} = k_{L-1-i}{+}1, \ldots, K{-}1{-}i$. By the recursive relationship of $k_{L-i}$ and the definition $k_0 = -1$, it can be deduced that $k_{L-i}$ ranges from $L{-}i{-}1$ to $K{-}1{-}i$. 
Therefore, $K{-}L{+}1$ parallel HVPs must be performed per operator, so the overall complexity of~\eqref{eq:binom-operator-cascade} is $\mcO(Ld)$ time and $\mcO((K{-}L{+}1)d)$ space.
\end{remark}

The Step-by-step pseudocode is presented in Algorithm~\ref{alg:binommaml-for-loop},  where each iteration (that is, computing \blue{$\{\bfu_t^{l,k}\}_{k=L-l}^{K-l}$} and \blue{$\{\bfv_t^{l+1,k}\}_{k=L-1-l}^{K-1-l}$}) corresponds to one vector operator $\oprt_t^{\bfg, i}$, with \blue{$\{\bfu_t^{l,k}\}_{k=L-l}^{K-l}$} involving $K{-}L{+}1$ parallel HVPs. \blue{An illustrative diagram of the $\oprt_t^{\bfg_t^K,L-l}$ operator is provided in fig.~\ref{fig:b-oprt-illustration}, while a detailed derivation of Algorithm~\ref{alg:binommaml-for-loop} is provided in appendix~\ref{app:binom-maml-deriv}.}

Complexity considerations bring up two limitations of BinomGBML. First, the parallelization requires $(K - L + 1)$ computational cores compared to a single HVP. While modern GPUs are abundant in such resources, applying BinomGBML to systems lacking cores or without GPUs can be challenging. Second, implementing parallel computations could incur an extra \blue{overhead} cost. Fortunately, our numerical tests  on \blue{real datasets} suggest this cost is relatively small compared to the HVPs \blue{themselves}; cf. Figure~\ref{fig:space-time-complexities}. 

Next, we compare the advocated BinomMAML to MAML and its variants outlined in Section~\ref{subsec:computational-challenge}.

\begin{remark} \label{remark:BinomMAML-compare}
First, it is easily seen that BinomMAML conforms with FOMAML when $L = 0$. Moreover, BinomMAML with $L = K$ incurs the same time complexity as MAML, yet a markedly reduced space complexity. This is due to MAML creating the $K$ computation graphs for HVPs when computing $\bfphi_t^K$, which are saved in memory.
In contrast, BinomMAML creates the HVP computational graphs on the fly, and frees up the associated memory once completed. As a side benefit, thus, BinomMAML addresses the space scalability issue in vanilla MAML. For general $L$, BinomMAML and TruncMAML have identical time complexity, but the space complexity of BinomMAML decreases affinely with $L$. 
\end{remark}
\blue{
{\centering
\vspace{-0.3cm}
  \begin{algorithm}[H]
        \caption{\blue{BinomMAML's meta-gradient estimation}}
        \label{alg:binommaml-for-loop}
    \begin{algorithmic}\blue{
        \STATE {\bfseries Input:} training gradients $\{\nabla \loss{trn}_t(\bfphi_t^k)\}_{k=0}^{K-1}$, validation gradient $\bfg_t^K$,\\ ~~~~~~~~~~~~step size $\alpha$, truncation $L\in\{1,\dots,K-1\}$ 
        \STATE Initialize $\bfv_t^{0,k} = \bfg_t^K,\ k=L,\dotsm,K$
        \FOR{$l=0,\dots,L-1$} 
            \STATE $[\bfu_t^{l,L-l},\dots,\bfu_t^{l,K-l}] = \!
            \begin{aligned}[t]
            & [\nabla_{\phi_{\blue{t}}^{L-l}}\langle\nabla\ell^{\text{trn}}(\phi_{\blue{t}}^{L-l}),\bfv_{t}^{l,L-l}\rangle, \dots, \nabla_{\phi_{\blue{t}}^{K-l}}\langle\nabla\ell^{\text{trn}}(\phi_{\blue{t}}^{K-l}),\bfv_{t}^{l,K-l}\rangle]
            \end{aligned}$
            \STATE $\bfv_t^{l+1,K-l-1}=\bfv_t^{l,K-l} - \alpha \bfu_t^{l,K-l}$
            \FOR{$k = K-2-l,\dots,L-1-l$}
                \STATE $\bfv_t^{l+1,k}=\bfv_t^{l+1,k+1} - \alpha \bfu_t^{l,k+1}$
            \ENDFOR
        \ENDFOR 
        \STATE {\bfseries Output:} $\estgrad{Bi}\metaloss_t(\bftheta) = \bfv_t^{L,0}$
    }\end{algorithmic}
    \end{algorithm}
\par
\vspace{-0.5cm}
}}
\subsection{Error bounds} \label{subsec:theoretical-error-bounds}
Having developed the BinomMAML algorithm and elaborated on its computation, this section delves into the estimation errors of FOMAML, TruncMAML, and BinomMAML. 
The analysis will be performed under three different assumptions that are common and useful in meta-learning. 
\begin{assumption}
\label{as:smooth}
Loss $\loss{trn}_t$ has $H$-Lipschitz gradient $\forall t$, i.e., $\| \nabla \loss{trn}_t (\bfphi_t) - \nabla \loss{trn}_t (\bfphi_t') \| \le H \| \bfphi_t - \bfphi_t' \|, \;\forall \bfphi_t, \bfphi_t' \in \real^d$.
\end{assumption}
Assumption~\ref{as:smooth} is very mild and widely used not only for meta-learning~\citep{fallah2020convergence, wang2022global}, but also more general machine learning~\citep{jain2017nonconvex, ji2021bilevel}, and can be readily satisfied by a wide range of loss functions and NNs~\citep{NEURIPS2018_d54e99a6}.
Under Assumption~\ref{as:smooth}, three bounds can be established as follows. 
\begin{theorem}
\label{thm:err-smooth}
With Assumption~\ref{as:smooth} in effect, it holds that
\begin{subequations}
\label{eq:err-smooth}
\begin{align}
	\label{eq:err-smooth-FO}
	&\| \nabla \metaloss_t (\bftheta) - \estgrad{FO} \metaloss_t (\bftheta) \| \le [(1+\alpha H)^K {-} 1] \| \bfg_t^K \|, \\
	\label{eq:err-smooth-Tr}
    &\|\nabla \metaloss_t (\bftheta) - \estgrad{Tr} \metaloss_t (\bftheta) \| \le[(1+\alpha H)^K - (1+\alpha H)^L] \| \bfg_t^K \|,\\
	\label{eq:err-smooth-Bin}
	&\| \nabla \metaloss_t (\bftheta) - \estgrad{Bi} \metaloss_t (\bftheta) \| {\le} \sum_{l=L+1}^K \binom{K}{l} (\alpha H)^l \| \bfg_t^K \|. 
\end{align}
\end{subequations}
With $e_t^\mathrm{FO}, e_t^\mathrm{Tr}, e_t^\mathrm{Bin}$ denoting these bounds, it follows that $e_t^\mathrm{Bin} < e_t^\mathrm{Tr} < e_t^\mathrm{FO}$. 
\end{theorem}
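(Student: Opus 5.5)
The plan is to write each estimator, via the binomial expansion \eqref{eq:binom-theorem-matrix}, as a partial sum of the full expansion of $\bfP_t^K$. We then bound the discarded terms one at a time, using only $\|\bfH_t^k\|\le H$. Under Assumption~\ref{as:smooth}, the Hessian $\bfH_t^k=\nabla^2\loss{trn}_t(\bfphi_t^k)$ (where it exists, i.e.\ as the Jacobian of an $H$-Lipschitz map) has spectral norm at most $H$. By submultiplicativity,
\[
\Bigl\|\prod_{i=1}^l(-\alpha\bfH_t^{k_i})\Bigr\|\le(\alpha H)^l
\]
for every index tuple. All three bounds follow from this single estimate combined with the triangle inequality.

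\textbf{FOMAML and BinomMAML.} We have $\nabla\metaloss_t(\bftheta)-\estgrad{FO}\metaloss_t(\bftheta)=(\bfP_t^K-\bfI_d)\bfg_t^K$. By \eqref{eq:binom-theorem-matrix}, this equals $\sum_{l=1}^K\sum_{0\le k_{1:l}\uparrow<K}\prod_{i}(-\alpha\bfH_t^{k_i})\bfg_t^K$. The triangle inequality, together with the count of $\binom{K}{l}$ tuples, gives $\sum_{l=1}^K\binom{K}{l}(\alpha H)^l\|\bfg_t^K\|=[(1+\alpha H)^K-1]\|\bfg_t^K\|$ by \eqref{eq:binomial-theorem}. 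For BinomMAML, subtracting \eqref{eq:BinomMAML} leaves exactly the terms with $l\ge L+1$, and the same argument yields \eqref{eq:err-smooth-Bin}.

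\textbf{TruncMAML.} Write $\bfP_t^K=\bfQ\,\bfP_t^L$ with $\bfQ:=\prod_{k=0}^{K-L-1}(\bfI_d-\alpha\bfH_t^k)$, so the error is $(\bfQ-\bfI_d)\bfP_t^L\bfg_t^K$. Expanding $\bfQ-\bfI_d$ binomially over $K-L$ factors gives $\|\bfQ-\bfI_d\|\le(1+\alpha H)^{K-L}-1$, and trivially $\|\bfP_t^L\|\le(1+\alpha H)^L$. The product is $(1+\alpha H)^K-(1+\alpha H)^L$.

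An alternative route for TruncMAML would be to expand $\bfP_t^K-\bfP_t^L$ directly as the sum of all monomials involving at least one index $k<K-L$. Their count weighted by $(\alpha H)^l$ is $\sum_l[\binom{K}{l}-\binom{L}{l}](\alpha H)^l$, which gives the same bound.

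\textbf{Ordering.} The main step requiring care is the ordering $e_t^{\mathrm{Bin}}<e_t^{\mathrm{Tr}}<e_t^{\mathrm{FO}}$, which presumably assumes $\alpha H>0$, $\bfg_t^K\neq\mathbf 0$, and $0<L<K$; the inequalities become equalities at the endpoints $L=0$ or $L=K$. The inequality $e^{\mathrm{Tr}}<e^{\mathrm{FO}}$ is immediate since $(1+\alpha H)^L>1$ for $L\ge1$. For $e^{\mathrm{Bin}}<e^{\mathrm{Tr}}$, write both as polynomials in $z=\alpha H$:
\[
e^{\mathrm{Tr}}/\|\bfg_t^K\|=\sum_{l=1}^K\Bigl[\binom{K}{l}-\binom{L}{l}\Bigr]z^l,\qquad e^{\mathrm{Bin}}/\|\bfg_t^K\|=\sum_{l=L+1}^K\binom{K}{l}z^l.
\]
The coefficients agree for $l>L$, since $\binom{L}{l}=0$ there. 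For $1\le l\le L<K$ the Tr coefficient is $\binom{K}{l}-\binom{L}{l}>0$, and in particular the $l=1$ coefficient is $K-L>0$. Hence the difference $e^{\mathrm{Tr}}-e^{\mathrm{Bin}}=\sum_{l=1}^L[\binom{K}{l}-\binom{L}{l}]z^l\,\|\bfg_t^K\|$ is strictly positive for $z>0$, which completes the ordering.
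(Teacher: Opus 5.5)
Your proof is correct and follows the paper's argument. It uses the same expansion-and-counting bound $\sum_{l>L}\binom{K}{l}(\alpha H)^l$ for BinomMAML, the same factorization $\bfP_t^K-\bfP_t^L=\bigl(\prod_{k=0}^{K-L-1}(\bfI_d-\alpha\bfH_t^k)-\bfI_d\bigr)\bfP_t^L$ for TruncMAML, and the same coefficient comparison $\sum_{l}\bigl[\binom{K}{l}-\binom{L}{l}\bigr](\alpha H)^l$ for the ordering. The only deviation is that you bound $\|\bfP_t^K-\bfI_d\|$ (and its $(K-L)$-factor analogue) by expanding and counting monomials, which makes FOMAML the $L=0$ case of the BinomMAML bound. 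The paper instead uses the telescoping identity $\bfP_t^i-\bfI_d=-\alpha\sum_{k=K-i}^{K-1}\bfH_t^k\bfP_t^{K-1-k}$ with a geometric sum, and both give $(1+\alpha H)^i-1$. Your caveats ($0<L<K$, $\alpha H>0$, $\bfg_t^K\neq\mathbf{0}$, and the vanishing $l=0$ coefficient) are more careful than the paper's strict-inequality step, which claims positivity at $l=0$ where the coefficient is zero.
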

Theorem~\ref{thm:err-smooth} compares the error bounds of the three meta-gradient estimators, where BinomMAML enjoys the smallest loss due to the additional information injected. 
Note that all three upper bounds are sharp, and can be attained upon setting $\bfH_t^k = - H \bfI_d, \; \forall k$.
Figure~\ref{subfig:error-upper-bounds-lip} shows the three bounds in Theorem~\ref{thm:err-smooth} across $L$, with $K=5$, $\alpha=0.25$, and $H=1.0$. One can observe that the error of TruncMAML decreases slowly when $L$ is small. In comparison, the error of BinomMAML diminishes swiftly with $L$, and a small error can be readily achieved even with $L = 1$. 

Next, a stricter yet important assumption will be considered. 
\begin{assumption}
\label{as:cvx}
The loss function $\loss{trn}_t$ is convex $\forall t$. 
\end{assumption}
Although Assumption~\ref{as:cvx} hardly holds for highly non-convex NNs, it is especially useful if the task-specific part of the model is linear; e.g., when adapting solely the last layer of an NN~\citep{revaud2019r2d2, lee2019meta, oreshkin2018tadam}. 
This extra assumption on convexity allows us to establish tighter error bounds as follows. 
\begin{theorem}
\label{thm:err-cvx}
Under Assumptions~\ref{as:smooth} and~\ref{as:cvx}, and with $0 < \alpha \le 1/H$, it holds that
\begin{subequations}
\label{eq:err-cvx}
\begin{align}
	\label{eq:err-cvx-FO}
	&\| \nabla \metaloss_t (\bftheta) - \estgrad{FO} \metaloss_t (\bftheta) \| \le [1 - (1 - \alpha H)^K] \| \bfg_t^K \|, \\
	\label{eq:err-cvx-Tr}
	&\| \nabla \metaloss_t (\bftheta) - \estgrad{Tr} \metaloss_t (\bftheta) \| \le [1 - (1 - \alpha H)^{K-L}] \| \bfg_t^K \|, \\
	\label{eq:err-cvx-Bin}
	&\| \nabla \metaloss_t (\bftheta) - \estgrad{Bi} \metaloss_t (\bftheta) \| \le \binom{K}{L+1} (\alpha H)^{L+1} \| \bfg_t^K \|. 
\end{align}
\end{subequations}
Moreover, there exists a constant $C_\alpha = \mathcal{O} (K / (L+1))$ such that if $0 < \alpha \le 1/(C_\alpha H)$, the upper bound in~\eqref{eq:err-cvx-Bin} decreases super-exponentially with $L$. 
\end{theorem}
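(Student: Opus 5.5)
Under Assumptions~\ref{as:smooth} and~\ref{as:cvx} every Hessian satisfies $\mathbf{0}\preceq\bfH_t^k\preceq H\bfI_d$. Since $0<\alpha\le 1/H$, each factor $\bfI_d-\alpha\bfH_t^k$ is symmetric with spectrum in $[1-\alpha H,1]\subset[0,1]$, so $\|\bfI_d-\alpha\bfH_t^k\|\le 1$. The three bounds then follow from three different exact representations of the errors.

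\textbf{FOMAML and TruncMAML.} The FO error is $(\bfP_t^K-\bfI_d)\bfg_t^K$ and the Tr error is $(\bfP_t^K-\bfP_t^L)\bfg_t^K$. Factor the latter as $\bigl[\prod_{k=0}^{K-L-1}(\bfI_d-\alpha\bfH_t^k)-\bfI_d\bigr]\bfP_t^L\bfg_t^K$. Because $\|\bfP_t^L\|\le1$, it suffices to bound $\|\prod_{k=0}^{m-1}(\bfI_d-\bfA_k)-\bfI_d\|$ with $\bfA_k:=\alpha\bfH_t^k$, where $\mathbf{0}\preceq\bfA_k\preceq\alpha H\bfI_d$ and $m=K-L$. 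The factors do not commute, so I cannot read off eigenvalues of the product. Instead I use a telescoping identity:
\[
\prod_{k=0}^{m-1}(\bfI_d-\bfA_k)-\bfI_d=\sum_{j}\Bigl[\prod_{k<j}(\bfI_d-\bfA_k)\Bigr](-\bfA_j).
\]
The naive triangle inequality only gives $m\alpha H$. To get the sharp $1-(1-\alpha H)^m$, I write $q:=1-\alpha H$, set $\bfB_j:=\bfA_j/(\alpha H)$ so that $\mathbf{0}\preceq \bfB_j\preceq \bfI_d$, and view each factor $\bfI_d-\bfA_j=q\bfI_d+(1-q)(\bfI_d-\bfB_j)$ as a convex combination. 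Expanding the product gives $\sum_{S}q^{m-|S|}(1-q)^{|S|}\prod_{j\in S}(\bfI_d-\bfB_j)$. The $S=\emptyset$ term is $q^m\bfI_d$. Subtracting $\bfI_d=\sum_S q^{m-|S|}(1-q)^{|S|}\bfI_d$ leaves
\[
\sum_{S\neq\emptyset}q^{m-|S|}(1-q)^{|S|}\Bigl[\prod_{j\in S}(\bfI_d-\bfB_j)-\bfI_d\Bigr].
\]
Each bracket has norm at most $2$, which falls short of the target by a factor of two. I expect this to be the main obstacle.

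My first attempt at closing the gap is to use positive semidefiniteness more carefully. For contractions $\mathbf{C}$ with spectrum in $[0,1]$, I would try to show $\|\prod_j\mathbf{C}_j-\bfI_d\|\le 1$. I have not verified this for non-commuting products. If it fails, the fallback is to prove the bound for commuting Hessians (the sharp case $\bfH_t^k=H\bfI_d$, where the bound is attained) and argue separately for the general case. The FO bound is the special case $L=0$.

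\textbf{BinomMAML.} By \eqref{eq:binom-theorem-matrix} and \eqref{eq:BinomMAML}, the error is $\sum_{l=L+1}^K\sum_{k_{1:l}\uparrow}\prod_i(-\alpha\bfH_t^{k_i})\bfg_t^K$. I regroup this by the position of the $(L+1)$-th selected index, which gives the exact remainder formula
\[
\sum_{0\le k_1<\dots<k_{L+1}<K}\prod_{i=1}^{L+1}(-\alpha\bfH_t^{k_i})\,\prod_{k=k_{L+1}+1}^{K-1}(\bfI_d-\alpha\bfH_t^k).
\]
To check it, expand the trailing product binomially: every term with at least $L+1$ factors is counted once, indexed by its first $L+1$ chosen indices. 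Taking norms, each of the $L+1$ Hessian factors contributes at most $\alpha H$, the trailing product is a contraction, and there are $\binom{K}{L+1}$ terms. This yields \eqref{eq:err-cvx-Bin}.

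\textbf{Super-exponential decay.} Set $b_L:=\binom{K}{L+1}(\alpha H)^{L+1}$. The ratio is $b_{L+1}/b_L=\frac{K-L-1}{L+2}\alpha H$. Choosing $\alpha\le 1/(C_\alpha H)$ with $C_\alpha$ of order $K/(L+1)$ makes each ratio at most a constant below one, decreasing in $L$. Alternatively, I can bound $b_L\le\bigl(eK\alpha H/(L+1)\bigr)^{L+1}$, which decays like $(c/(L+1))^{L+1}$ once $\alpha H\lesssim 1/K$, and this is super-exponential.
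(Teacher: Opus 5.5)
Your BinomMAML bound and your decay argument are correct and match the paper's. The paper's Lemma~\ref{lemma:BinomMAML-metagrad-diff} groups the tail of the expansion by the $(L+1)$-th selected index $k_{L+1}=K-l$, leaving a trailing factor $\bfP_t^{l-1}$ with $\|\bfP_t^{l-1}\|\le 1$. It then sums $\sum_{l=1}^{K-L}\binom{K-l}{L}=\binom{K}{L+1}$, where you simply count $(L+1)$-subsets.

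The gap is in \eqref{eq:err-cvx-FO}--\eqref{eq:err-cvx-Tr}, and it cannot be closed: the inequality you hope to prove is false for non-commuting factors. Take $d=2$, $\mathbf{u}=(\mathbf{e}_1+\mathbf{e}_2)/\sqrt2$, $\mathbf{C}_1=\mathbf{e}_1\mathbf{e}_1^\top$ and $\mathbf{C}_2=\mathbf{u}\mathbf{u}^\top$. Both are symmetric with spectrum in $[0,1]$, yet
\[
\mathbf{C}_1\mathbf{C}_2-\bfI_2=\begin{pmatrix}-1/2 & 1/2\\ 0 & -1\end{pmatrix},\qquad \|\mathbf{C}_1\mathbf{C}_2-\bfI_2\|=\frac{1+\sqrt5}{2\sqrt2}\approx 1.144>1.
\]
This is the case $K=2$, $\alpha=1/H$, $\bfH_t^0=H\mathbf{e}_2\mathbf{e}_2^\top$, $\bfH_t^1=H(\bfI_2-\mathbf{u}\mathbf{u}^\top)$. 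It is realized by an actual convex training loss with $H$-Lipschitz gradient:
\begin{itemize}
\item $H$ times a mollified $\tfrac12\mathrm{dist}^2$ to a wedge with edge normals $\mathbf{e}_2$ and $(\mathbf{e}_1-\mathbf{e}_2)/\sqrt2$ has exactly these Hessians on open regions;
\item adding a linear term lets you place $\bfphi_t^1$ wherever you like;
\item a linear validation loss makes $\bfg_t^K$ the top right singular vector.
\end{itemize}
The FO error is then about $1.144\|\bfg_t^K\|$, while \eqref{eq:err-cvx-FO} allows only $\|\bfg_t^K\|$. By continuity the violation persists for $\alpha H$ slightly below $1$. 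At the level of the Hessian constraints, setting the remaining Hessians to zero breaks \eqref{eq:err-cvx-Tr} in the same way whenever $K-L\ge 2$.

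So \eqref{eq:err-cvx-FO}--\eqref{eq:err-cvx-Tr} hold only under extra structure such as commuting Hessians (e.g.\ quadratic training losses). There $\bfP_t^i$ is symmetric with spectrum in $[(1-\alpha H)^i,1]$, and the bounds are immediate. In general, your telescoping identity and convex-combination expansion prove only $\min\{m\alpha H,\,2[1-(1-\alpha H)^m]\}$, with $m=K$ or $m=K-L$.

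The paper's proof does not get around this. It asserts $\lambda_{\min}(\bfP_t^i)\ge(1-\alpha H)^i$ and concludes $\mathbf{0}\preceq\bfI_d-\bfP_t^i\preceq[1-(1-\alpha H)^i]\bfI_d$, which treats the non-symmetric product $\bfP_t^i$ as if it were symmetric. In the example above, $\bfP_t^2=\mathbf{C}_1\mathbf{C}_2$ has eigenvalues $\tfrac12$ and $0$, consistent with that eigenvalue claim, yet $\|\bfI_2-\bfP_t^2\|>1$. Your suspicion was right, and your commuting-case fallback is exactly what the paper's argument actually establishes.
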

In addition to convexity, Theorem~\ref{thm:err-cvx} also assumes the learning rate $\alpha$ is not too ``aggressive.'' It is known that a sufficient condition for GD to ``descend'' is $0 < \alpha < 2 / H$, with $\alpha = 1 / H$ being an oracle choice~\citep{bertsekas1997nonlinear}.
Since $H$ is unknown or difficult to estimate in practice, it is common to assume $\alpha = \mcO(1 / H)$ in general, which is also necessary for the analysis of TruncGBML; cf.~\citep{shaban2019truncated}. 
\begin{figure*}[t]
    \vspace{-0.4cm}
    \centering
    \begin{subfigure}[t]{0.30\textwidth}
        \includegraphics[width=1.0\linewidth]{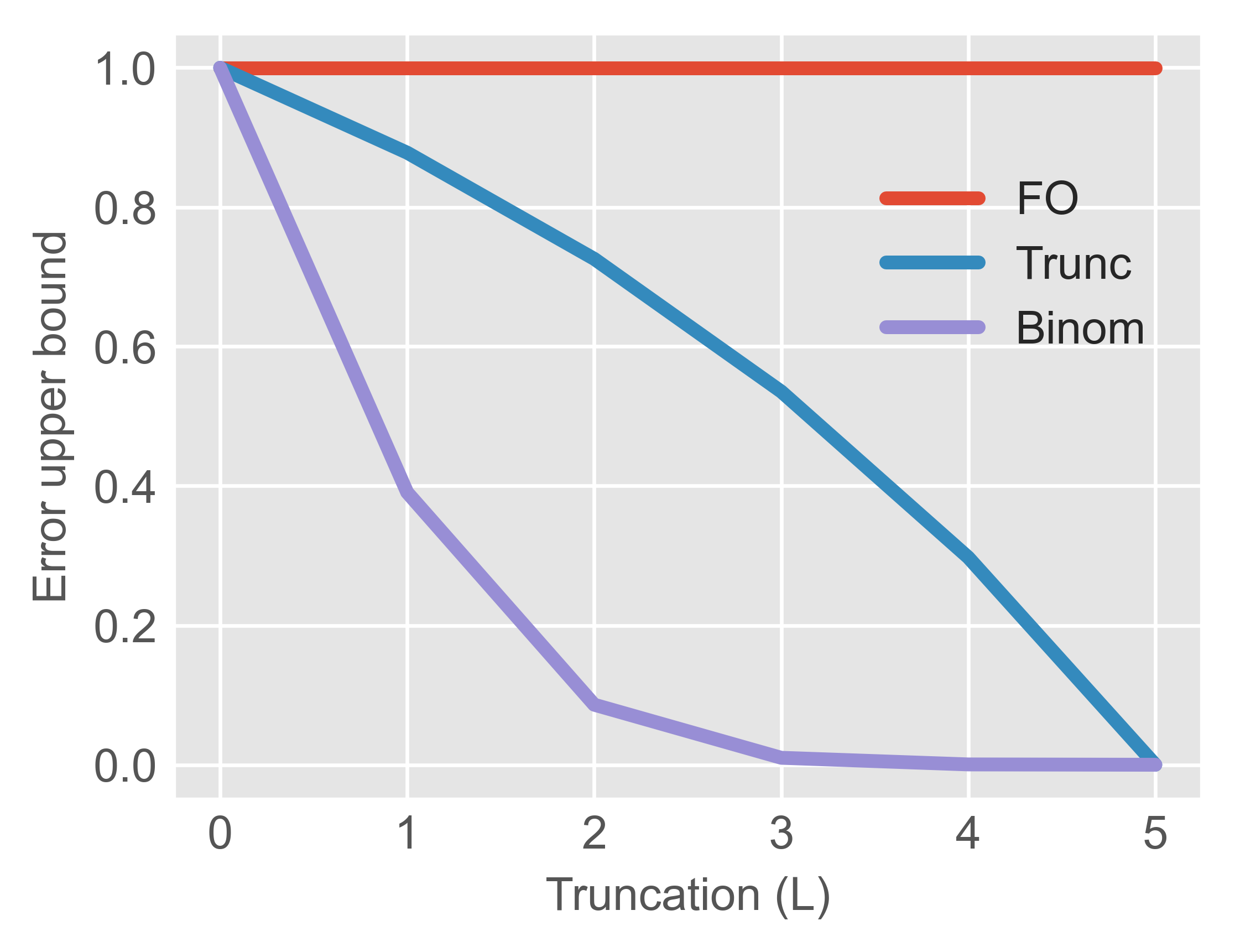}
        \vspace{-0.70cm}
        \caption{}
        \label{subfig:error-upper-bounds-lip}
    \end{subfigure}%
    \begin{subfigure}[t]{0.30\textwidth}
        \includegraphics[width=1.0\linewidth]{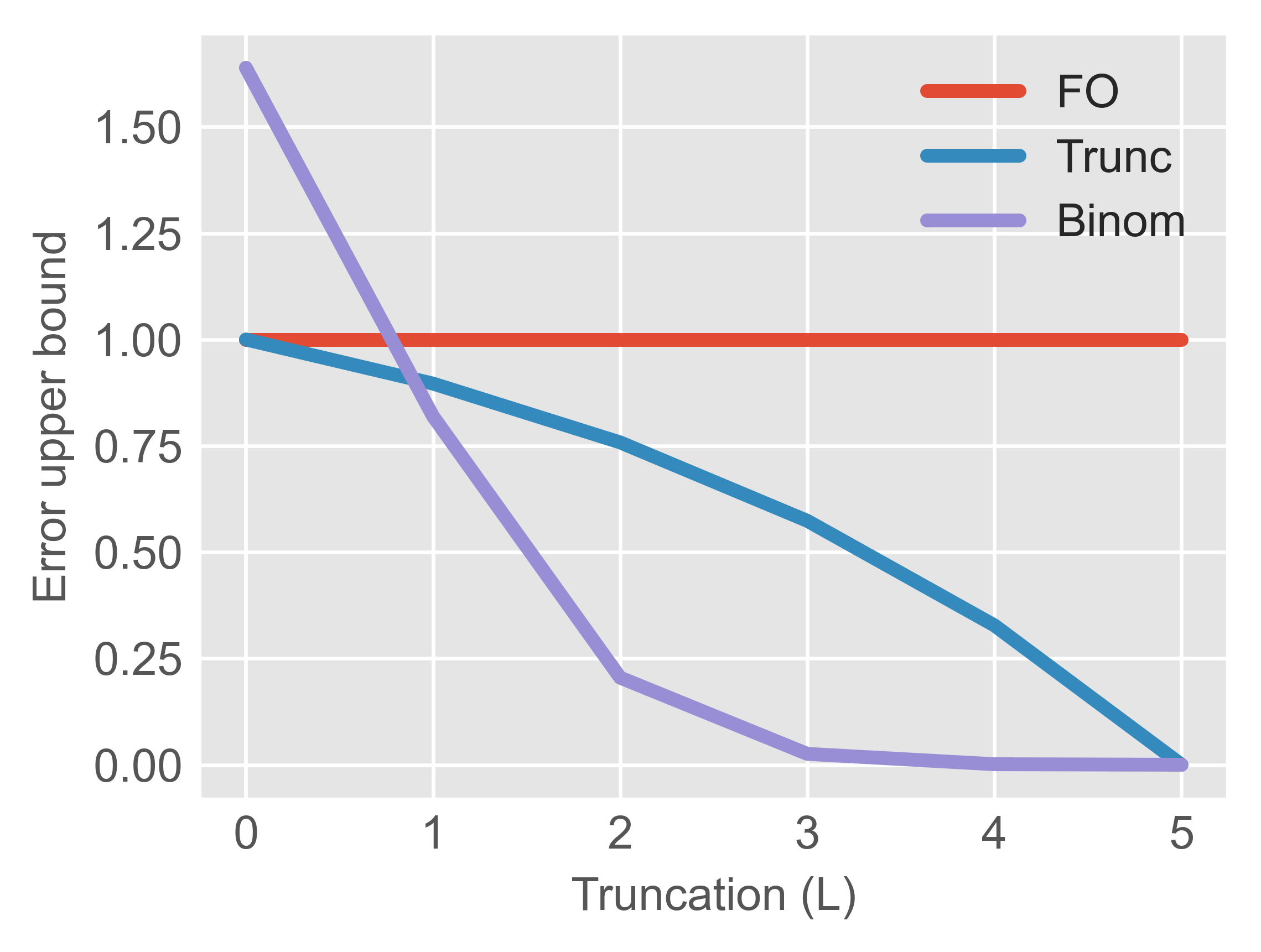}
        \vspace{-0.70cm}
        \caption{}
        \label{subfig:error-upper-bounds-cvx}
    \end{subfigure}%
    \begin{subfigure}[t]{0.30\textwidth}
        \includegraphics[width=1.0\linewidth]{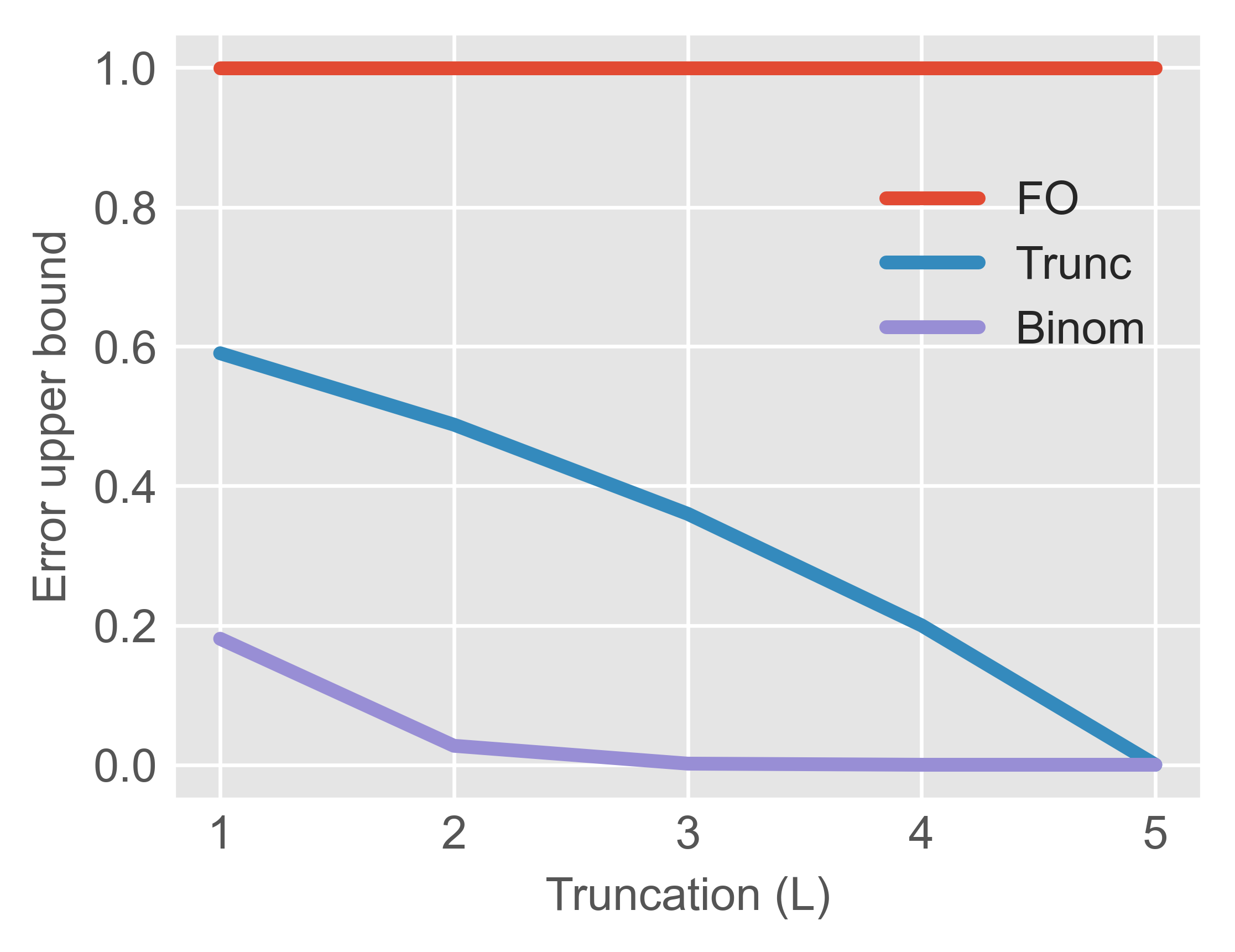}
        \vspace{-0.70cm}
        \caption{}
        \label{subfig:error-upper-bounds-strong-cvx}
    \end{subfigure}%
    \vspace{-0.28cm}
    \caption{Estimation error upper bounds in Theorems (a) \ref{thm:err-smooth}, (b) \ref{thm:err-cvx}, and (c) \ref{thm:err-strong-cvx}, normalized to FOMAML error.}
    \label{fig:error-upper-bounds}
    \vspace{-0.5cm}
\end{figure*}
Estimation error bounds~\eqref{eq:err-cvx} are depicted in Figure~\ref{subfig:error-upper-bounds-cvx}. It is important to note that the bounds for FOMAML and TruncMAML are sharp, yet the BinomMAML bound is loose. To be specific,~\eqref{eq:err-cvx-FO} can be attained by letting $\bfH_t^k = H \bfI_d,\; \forall k$, and~\eqref{eq:err-cvx-Tr} is reached when $\bfH_t^k = H \bfI_d,\; k=0,\ldots,K-L-1$ and $\bfH_t^k = \mathbf{0}_{d \times d}, \; k=K-L,\ldots,K-1$. Regardless, the error bound of BinomMAML decreases super-exponentially and outperforms TruncMAML when $L > 0$. Recall that both TruncMAML and BinomMAML are exactly FOMAML with $L = 0$, so their actual errors are identical in this case. 

Aside from global convexity, another assumption worth probing is local strong convexity. 
\begin{assumption}
\label{as:strong-cvx}
The loss $\loss{trn}_t$ is locally $h$-strongly convex $\forall t$ around $\{ \bfphi_t^k \}_{k=K-M}^{K-1}$, where $M \le \min \{L, K - L\}$. 
\end{assumption}
Assumption~\ref{as:strong-cvx} is milder than Assumption~\ref{as:cvx} as it presumes strong convexity around only the last $M$ points of the optimization trajectory, which can be sufficiently adjacent to a local optimum. 
Moreover, such an assumption has been widely used for analyzing bilevel optimization; see e.g.,~\citep{Ghadimi2018ApproximationMF, shaban2019truncated}. 
\begin{theorem}
\label{thm:err-strong-cvx}
If Assumptions~\ref{as:smooth} and~\ref{as:strong-cvx} hold, and $0 < \alpha \le 1/H$, it then follows that
\begin{subequations}
\label{eq:err-strong-cvx}
\begin{align}
    &\| \nabla \metaloss_t (\bftheta) - \estgrad{FO} \metaloss_t (\bftheta) \| \le  \max\{   (1 + \alpha H)^{K-M} (1 - \alpha h)^M - 1, 1 - (1 - \alpha H)^K \} \| \bfg_t^K \|, \label{eq:err-strong-cvx-FO}\\
    &\| \nabla \metaloss_t (\bftheta) - \estgrad{Tr} \metaloss_t (\bftheta) \| \le [(1 + \alpha H)^{K-M} - (1 + \alpha H)^{L - M}] (1 - \alpha h)^M \| \bfg_t^K \|, \label{eq:err-strong-cvx-Tr}\\
    &\| \nabla \metaloss_t (\bftheta) - \estgrad{Bi} \metaloss_t (\bftheta) \| \le  \Bigg[ (\alpha H)^{L+1} \sum_{l=1}^M \binom{K-l}{L} (1 - \alpha h)^{l-1} \label{eq:err-strong-cvx-Bin} \\ &~~~~~~~~~~~~~~~~~~~~~~~~~~~~~~~~~~~~~~~~~~~~~~~~~~~~~~~~~~~~~~~~~~~ + (1 - \alpha h)^M \sum_{l=L+1}^{K-M} \binom{K-M}{l} (\alpha H)^l \Bigg] \| \bfg_t^K \| \nonumber.
\end{align}
\end{subequations}
Moreover, there exists a constant $C_\alpha' = \mathcal{O} ((K-1) / L)$ such that if $0 < \alpha \le 1/(C_\alpha' H)$, the upper bound in~\eqref{eq:err-strong-cvx-Bin} decreases super-exponentially with $L$. 
\end{theorem}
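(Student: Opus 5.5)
We work with the matrices $\bfA_k := \bfI_d - \alpha \bfH_t^k$ and write every error as $\|(\text{matrix})\,\bfg_t^K\| \le \|\text{matrix}\|\,\|\bfg_t^K\|$. We split the indices into a \emph{tail} $k \in \{K-M,\dots,K-1\}$ and a \emph{head} $k \in \{0,\dots,K-M-1\}$.

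\textbf{Spectral facts.}
Assumption~\ref{as:smooth} gives $-H\bfI_d \preceq \bfH_t^k \preceq H\bfI_d$ for every $k$, so $\|\alpha\bfH_t^k\| \le \alpha H$ and $\|\bfA_k\| \le 1+\alpha H$. On the tail, Assumption~\ref{as:strong-cvx} gives $h\bfI_d \preceq \bfH_t^k \preceq H\bfI_d$. Together with $\alpha \le 1/H$ this yields $0 \preceq \bfA_k \preceq (1-\alpha h)\bfI_d$ and $\|\alpha\bfH_t^k\| \le \alpha H$. Write $\bfP_t^K = \bfQ\,\bfP_t^M$, where $\bfQ := \prod_{k=0}^{K-M-1}\bfA_k$. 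Then $\|\bfP_t^M\| \le (1-\alpha h)^M$.

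\textbf{TruncMAML.}
Since $L \ge M$, $\bfP_t^L = \bfQ'\bfP_t^M$ with $\bfQ' := \prod_{k=K-L}^{K-M-1}\bfA_k$. The error matrix is $(\bfQ - \bfQ')\bfP_t^M$. Now $\bfQ - \bfQ' = \bigl(\prod_{k<K-L}\bfA_k - \bfI_d\bigr)\bfQ'$. Expanding binomially as in~\eqref{eq:binom-theorem-matrix} gives $\|\prod_{k<K-L}\bfA_k - \bfI_d\| \le (1+\alpha H)^{K-L}-1$, exactly as in Theorem~\ref{thm:err-smooth}. Also $\|\bfQ'\| \le (1+\alpha H)^{L-M}$. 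Multiplying the three factors gives~\eqref{eq:err-strong-cvx-Tr}.

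\textbf{FOMAML.}
The error matrix is $\bfP_t^K - \bfI_d$. We bound it via a two-sided argument. Using $\bfP_t^M \preceq (1-\alpha h)^M$ and $\|\bfQ\| \le (1+\alpha H)^{K-M}$, the upward deviation is at most $(1+\alpha H)^{K-M}(1-\alpha h)^M - 1$. The downward deviation is bounded as in Theorem~\ref{thm:err-cvx} by $1-(1-\alpha H)^K$. The maximum of the two gives~\eqref{eq:err-strong-cvx-FO}.

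I expect this step to need care, because the matrices $\bfA_k$ do not commute. If the two-sided argument fails, I would fall back on the commuting worst case $\bfH_t^k = c_k\bfI_d$ and argue extremality in each factor separately.

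\textbf{BinomMAML: exact error decomposition.}
The error is $\sum_{l=L+1}^K \sum_{k_{1:l}\uparrow} \prod_i(-\alpha\bfH_t^{k_i})$. Classify each term by $j := $ the number of tail indices it contains.

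For fixed head indices, summing over all tail subsets reproduces $\bfP_t^M$. The terms whose head part already has degree $l \ge L+1$ therefore sum to $\bigl(\sum_{l\ge L+1}\sum_{\text{head}}\prod(-\alpha\bfH)\bigr)\bfP_t^M$. Its norm is at most $(1-\alpha h)^M\sum_{l=L+1}^{K-M}\binom{K-M}{l}(\alpha H)^l$, which is the second term of~\eqref{eq:err-strong-cvx-Bin}.

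The remaining terms have head degree at most $L$ but total degree at least $L+1$. Group each such term by the position $l$ (counted from the end) of the last tail index $K-l$ that is \emph{not} in the subset, or more precisely by the largest tail step that is "switched off". The steps $K-l+1,\dots,K-1$ then appear either through a product of full $\bfA_k$ factors, of norm at most $(1-\alpha h)^{l-1}$, or are absorbed in a counting argument. The remaining index set has $K-l$ elements, from which exactly $L+1$ factors of $-\alpha\bfH$ are forced. This yields a term bounded by $\binom{K-l}{L}(\alpha H)^{L+1}(1-\alpha h)^{l-1}$.

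This telescoping/grouping, making the overcount from choosing $L$ versus $L+1$ consistent, is the main obstacle. I would first verify it for $M=1$, where the identity is $\bfP_t^1 = \bfA_{K-1}$ split into its $\bfI_d$ and $-\alpha\bfH$ parts, and then induct on $M$.

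\textbf{Super-exponential decay.}
Use $\binom{K-l}{L} \le \binom{K-1}{L} \le \bigl(e(K-1)/L\bigr)^L$ and $(1-\alpha h)^{l-1} \le 1$. The first term is then at most $M\,(\alpha H)\,\bigl(e(K-1)\alpha H/L\bigr)^{L}$, and the second term is handled analogously. Taking $C_\alpha' = e(K-1)/L$, the base becomes at most $1$ times a decreasing ratio. The remaining factor behaves like $L^{-L}$-type decay, mirroring the argument in Theorem~\ref{thm:err-cvx}.
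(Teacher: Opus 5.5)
Your TruncMAML argument is the paper's. So is your isolation of the terms whose head part already has degree at least $L+1$, which sum to that head polynomial times $\bfP_t^M$.

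\textbf{The gap is in the remaining BinomMAML terms.} Grouping by the largest \emph{unselected} tail index $K-l$ cannot work. By construction every later step $K-l+1,\dots,K-1$ is then selected. Those steps therefore contribute products of $-\alpha\bfH_t^k$, not full factors $\bfI_d-\alpha\bfH_t^k$, so no $(1-\alpha h)^{l-1}$ appears. Also, forcing $L+1$ factors from $K-l$ indices counts $\binom{K-l}{L+1}$, not $\binom{K-l}{L}$.

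The missing idea is to group by the $(L+1)$-th smallest \emph{selected} index, $k_{L+1}=K-l$:
\begin{itemize}
\item the factor $-\alpha\bfH_t^{K-l}$ is forced;
\item exactly $L$ indices come from $\{0,\dots,K-l-1\}$, contributing norm at most $\binom{K-l}{L}(\alpha H)^L$;
\item since the degree threshold is already met, every subset of $\{K-l+1,\dots,K-1\}$ occurs, so those factors sum to $\bfP_t^{l-1}$, with $\|\bfP_t^{l-1}\|\le(1-\alpha h)^{l-1}$ for $l\le M$;
\item the case $k_{L+1}<K-M$ is your second group.
\end{itemize}

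This is Lemma~\ref{lemma:BinomMAML-metagrad-diff}. Let $\mathbf{M}_t[n]$ be the sum over ordered products indexed by subsets of $\{0,\dots,n-1\}$ of size at least $L+1$. Let $\mathbf{S}_t[n]$ be the same sum over subsets of size exactly $L$. Splitting on whether $n-1$ is selected gives
\begin{equation*}
\mathbf{M}_t[n]=\mathbf{S}_t[n-1](-\alpha\bfH_t^{n-1})+\mathbf{M}_t[n-1](\bfI_d-\alpha\bfH_t^{n-1}).
\end{equation*}
Unrolling this $M$ times gives $\mathbf{M}_t[K]=\sum_{l=1}^{M}\mathbf{S}_t[K-l](-\alpha\bfH_t^{K-l})\bfP_t^{l-1}+\mathbf{M}_t[K-M]\bfP_t^{M}$, and the third bound then follows by the triangle inequality. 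Your plan to do $M=1$ and induct on $M$ is exactly this recursion, so the step is repairable. As written, though, the central identity is not established.

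\textbf{On FOMAML, you follow the paper, and both arguments have the same flaw.} The paper asserts $\bfP_t^K\succeq 0$ and reads off one-sided spectral deviations. Your worry is justified, and the commuting fallback does not fix it. $\bfP_t^K$ is a product of non-commuting symmetric matrices, so it is not symmetric. Hence ``$\bfP_t^M\preceq(1-\alpha h)^M$'' has no meaning, and $\|\bfP_t^K-\bfI_d\|$ is not the larger of an upward and a downward deviation.

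Non-commuting configurations can be strictly worse than every commuting one. Take $K=2$, $M=L=1$, $\alpha=1/H$, any $0<h<H$, $\bfu=\mathbf{e}_1$, $\bfv=(\mathbf{e}_1+\mathbf{e}_2)/\sqrt{2}$, and
\begin{equation*}
\alpha\bfH_t^0=\bfI_d-2\bfu\bfu^\top,\qquad \alpha\bfH_t^1=\bfI_d-(1-\alpha h)\bfv\bfv^\top.
\end{equation*}
Both Hessians satisfy every per-step spectral constraint of the assumptions. Yet $\bfP_t^2=(1-\alpha h)\bigl[\begin{smallmatrix}1&1\\0&0\end{smallmatrix}\bigr]$, so
\begin{equation*}
\|(\bfP_t^2-\bfI_d)\mathbf{e}_2\|=\sqrt{1+(1-\alpha h)^2}>1,
\end{equation*}
whereas the right-hand side of~\eqref{eq:err-strong-cvx-FO} equals $1$. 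Bound (a) therefore cannot be derived from these spectral bounds without an extra commutativity assumption, such as Hessians sharing an eigenbasis. The same caveat applies to the paper's proof.

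Your super-exponential argument is at the paper's level of rigor. For the second term, spell out the step, either via Lemma~\ref{lemma:partial-sum-equiv} or via a ratio bound on consecutive binomial terms.
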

Again, the estimation error bound of BinomMAML decays more rapidly and is markedly smaller than TruncMAML in this setup, as corroborated by Figure~\ref{subfig:error-upper-bounds-strong-cvx}. 
Since Assumption~\ref{as:strong-cvx} requires $M \le \min \{ L, K-L \}$, the bounds are plotted with $M = 1$, so that $L \in \{1,\ldots,5\}$. Other parameters are the same as in Figures~\ref{subfig:error-upper-bounds-lip} and~\ref{subfig:error-upper-bounds-cvx}, and $h = 0.1$. 

\section{Numerical tests} \label{sec:numerical-tests}
With analytical error bounds established, here we test the empirical performance of BinomMAML on both synthetic and real datasets. Test details can be found in Appendix~\ref{app:test-setups}. 
\begin{figure*}[b]
    \vspace{-0.4cm}
    \centering
    \begin{subfigure}[t]{0.38\textwidth}
        \centering
        \includegraphics[width=1.0\textwidth]{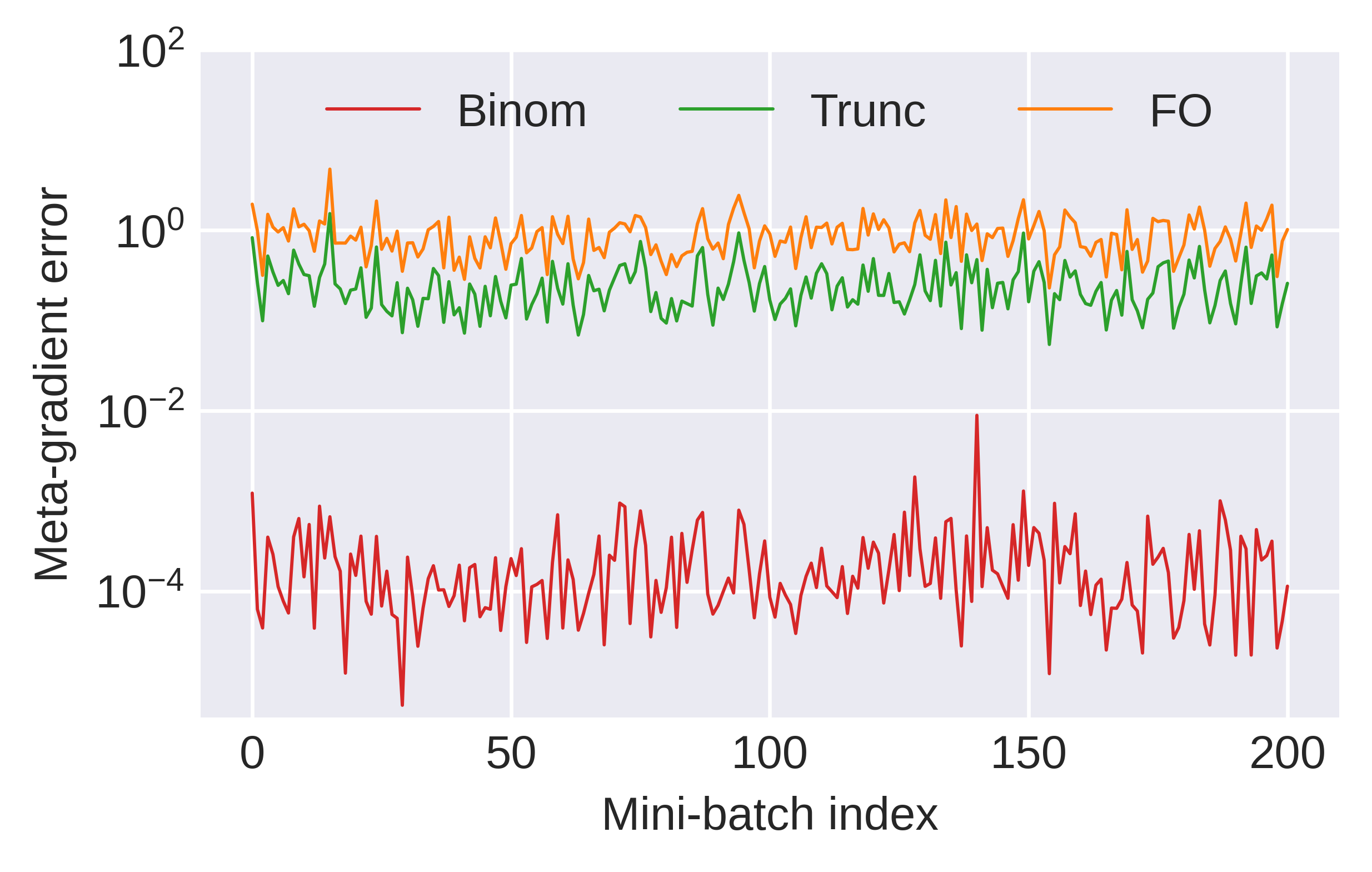}
        \vspace{-0.70cm}
        \caption{}
        \label{subfig:L=4-grad-estimates}
    \end{subfigure}
    \hspace{0.005\textwidth}
    \begin{subfigure}[t]{0.38\textwidth}
        \centering
        \includegraphics[width=1.0\textwidth]{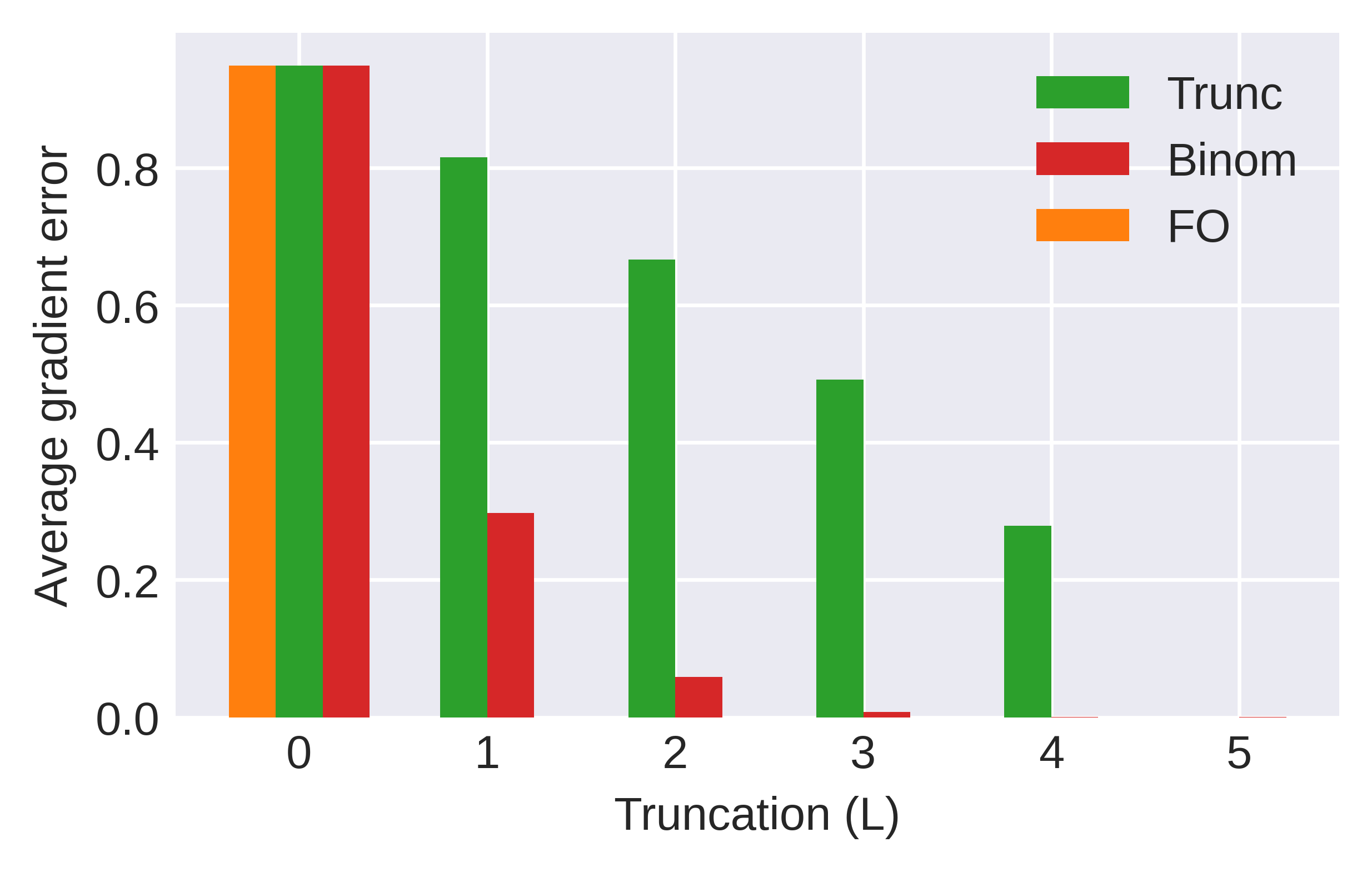}
        \vspace{-0.70cm}
        \caption{}
        \label{subfig:averaged-sine-grad-errors}
    \end{subfigure}
    \vspace{-0.25cm}
    \caption{Actual meta-gradient error against (a) different mini-batches of tasks, and (b) truncation $L$.}
    \label{fig:synth-error-plots}
\end{figure*}
\subsection{Synthetic data} \label{subsec:synth-data}
The first test is a few-shot sinusoid regression problem~\citep{finn2017model}, where data are sampled from sinusoids with random phases and amplitudes; each phase-amplitude combination defines a distinct task.
Figure~\ref{fig:synth-error-plots} compares the actual errors of different meta-gradient estimates calculated on the synthetic sinusoid datasets. Specifically, the left subfigure~\ref{subfig:L=4-grad-estimates} displays meta-gradient error averaged on a batch of tasks, where the horizontal axis represents different random batches. With the same truncation $L=4$, BinomMAML consistently outperforms TruncMAML on varying tasks by an order of $10^3$ to $10^4$. The right subfigure~\ref{subfig:averaged-sine-grad-errors} depicts the meta-gradient error across various $L$ values. It is observed that BinomMAML significantly reduces the error relative to TruncMAML. In particular, BinomMAML with $L = 1$ exhibits an error comparable to TruncMAML of $L = 4$, and the error becomes negligible when $L \ge 2$. This suggests BinomMAML can potentially achieve MAML-level performance in certain settings even with a small $L$.
These observations corroborate our analysis in Section~\ref{subsec:theoretical-error-bounds}. The empirical benefits of BinomMAML are larger than the analytical ones in Fig.~\ref{fig:error-upper-bounds}. 

\subsection{Real data}
To assess BinomMAML's performance in practice, real datasets are tested here, including miniImageNet~\citep{vinyals2016matching} and tieredImageNet~\citep{ren2018meta}. The tests follow the ``$W$-way $S$-shot'' few-shot classification protocol in~\citep{ravi2017optimization, finn2017model}, where ``way'' stands for the number of classes, and ``shot'' refers to training data per class; that is, $| \data{trn}_t | = WS$. The hyperparameters are as in~\citep{finn2017model}. 

\begin{table*}[tb]
\vskip -0.1in
\caption{Few-shot classification accuracies on real datasets with early stopping, where $\pm$ represents \blue{sample standard deviation}, and the number in parentheses \blue{indicates} the mean accuracy relative to MAML, \blue{ highest} one marked in bold.}
\vskip -0.1in
\centering
\resizebox{\textwidth}{!}{%
  \begin{tabular}{ c c c c c c }
    \toprule
    
    \multirow{2}{*}{\textbf{L}} & \multirow{2}{*}{\textbf{Method}} & \multicolumn{2}{c}{5-way miniImageNet (\%)} & \multicolumn{2}{c }{5-way tieredImageNet (\%)} \\ \cline{3-6}

    &  & 1-shot & 5-shot & 1-shot & 5-shot \\
    \hline

    \multirow{2}{*}{0} &
     FOMAML & 44.57 $\pm$ 0.92 (-1.93) & 62.97 $\pm$ 0.50 (-1.26) & 43.53 $\pm$ 0.94 (-3.50) & 63.31 $\pm$ 0.52 (-1.81) \\
        & Reptile & 42.11 $\pm$ 0.92 (-4.39) & 61.07 $\pm$ 0.51 (-3.16) & 45.06 $\pm$ 0.91 (-1.97) & 61.27 $\pm$ 0.50 (-3.85) \\
    \hline
    
    \multirow{3}{*}{1} &
      iMAML & 44.47 $\pm$ 0.92 (-2.03) & 60.32 $\pm$ 0.49 (-3.91) & 44.77 $\pm$ 0.98 (-2.26) & 60.81 $\pm$ 0.50 (-4.31) \\ &
      TruncMAML & 44.53 $\pm$ 0.89 (-1.97) & \textbf{62.43 $\pm$ 0.50 (-1.80)} & 43.67 $\pm$ 0.96 (-3.36) & 64.15 $\pm$ 0.51 (-0.97) \\
    & BinomMAML & \textbf{45.50 $\pm$ 0.91 (-1.00)} & 62.36 $\pm$ 0.48 (-1.87) & \textbf{46.23 $\pm$ 0.95 (-0.80)} & \textbf{64.49 $\pm$ 0.51 (-0.63)}  \\ \hline
    
    \multirow{3}{*}{2} &
      iMAML & 44.57 $\pm$ 0.95 (-1.93) & 60.63 $\pm$ 0.48 (-3.60) & 44.13 $\pm$ 0.93 (-2.90) & 61.15 $\pm$ 0.50 (-3.97) \\ &
      TruncMAML & 44.93 $\pm$ 0.90 (-1.57) & \textbf{63.61 $\pm$ 0.48 (-0.62)} & 45.93 $\pm$ 0.99 (-1.10) & \textbf{64.81 $\pm$ 0.49 (-0.31)} \\
    & BinomMAML & \textbf{46.23 $\pm$ 0.92 (-0.27)} & 63.49 $\pm$ 0.48 (-0.74) & \textbf{46.20 $\pm$ 0.92 (-0.83)} & 64.41 $\pm$ 0.52 (-0.71)  \\ \hline
    
    \multirow{3}{*}{3} &
      iMAML & 45.03 $\pm$ 0.92 (-1.47) & 62.77 $\pm$ 0.48 (-1.46) & 44.80 $\pm$ 0.95 (-2.23) & 62.38 $\pm$ 0.50 (-2.74) \\ &
      TruncMAML & 44.33 $\pm$ 0.92 (-2.17) & 63.67 $\pm$ 0.48 (-0.56) & 45.62 $\pm$ 0.98 (-1.41) & 63.69 $\pm$ 0.51 (-1.43) \\
    & BinomMAML & \textbf{46.00 $\pm$ 0.92 (-0.50)} & \textbf{64.17 $\pm$ 0.47 (-0.06)} & \textbf{46.43 $\pm$ 0.93 (-0.60)} & \textbf{65.37 $\pm$ 0.49 (+0.25)}  \\ \hline

    \multirow{3}{*}{4} &
      iMAML & 45.43 $\pm$ 0.92 (-1.07) & 62.64 $\pm$ 0.47 (-1.59) & 45.40 $\pm$ 0.95 (-1.63) & 61.80 $\pm$ 0.51 (-3.32) \\ &
      TruncMAML & 44.83 $\pm$ 0.93 (-1.67) & 63.44 $\pm$ 0.48 (-0.79) & 44.93 $\pm$ 0.95 (-2.10) & \textbf{64.83 $\pm$ 0.51 (-0.29)} \\
    & BinomMAML & \textbf{46.24 $\pm$ 0.94 (-0.16)} & \textbf{63.86 $\pm$ 0.48 (-0.37)} & \textbf{46.73 $\pm$ 0.94 (-0.30)} & 64.63 $\pm$ 0.50 (-0.49)  \\ \hline

    5 &
    MAML & 46.50 $\pm$ 0.93 & 64.23 $\pm$ 0.50 & 47.03 $\pm$ 0.91 & 65.12 $\pm$ 0.50  \\ 
    \bottomrule
\end{tabular}}
\label{tab:real-data-accuracies}
\vspace{-0.5cm}
\end{table*}

The first test compares the performance of different meta-gradient estimates. To highlight the role of estimation errors, meta-training is \emph{early-stopped} at $20,000$ iterations. Note that MAML can readily converge in such a setup, while the meta-gradient error can slow down  
convergence when using an estimator. The results are summarized in Table~\ref{tab:real-data-accuracies}, where $L$ represents the truncation parameter of BinomMAML and TruncMAML, and the number of conjugate gradient iterations for iMAML.
BinomMAML surpasses TruncMAML in most cases when adopting the same $L$ and exhibits comparable performance for other cases, while outperforming iMAML across the board.
Of major note is that BinomMAML dominates TruncMAML (+1.33 average) in the 1-shot regime, while the gap narrows (+0.27 average) in the 5-shot regime. This observation suggests TruncMAML benefits from gradient averaging when data are abundant, whereas BinomMAML remains well-equipped for low-data settings. 
Moreover, it is seen that the performance gap of BinomMAML relative to vanilla MAML shrinks rapidly with $L$, as opposed to the slow improvement of TruncMAML, and a small $L$ suffices for BinomMAML to attain desirable performance, supporting our analytical results.

The next test showcases the actual time, space, and compute consumption of BinomMAML, as measured under the same miniImageNet test setup. Figure~\ref{fig:space-time-complexities} displays the elapsed time per meta-training step, peak GPU memory occupation, and GPU compute core utilization across  $L$ values. It should be highlighted that the parallelism, alongside the dynamic creation and release of computation graphs, both incur computational overhead, which varies with  $L$. Consequently, the time and compute of BinomMAML slightly outpaces TruncMAML, and the space complexity is not strictly affine in $L$. Further, as no parallelism is required when $L = 0$ or $L = 5$, the time complexity of BinomMAML matches FOMAML and MAML in these two cases. Additionally, BinomMAML leads to markedly reduced memory and compute consumption over vanilla MAML, thanks to the dynamic management of computational graphs; cf. Remark~\ref{remark:BinomMAML-compare}. 

Lastly, Figure~\ref{subfig:mini-imagenet-acc} depicts the change of accuracy and loss during meta-training, where the curves are smoothed for visualization. Compared to TruncMAML, the convergence of BinomMAML aligns better with MAML. This underscores the importance of reducing meta-gradient error, and in turn explains BinomMAML's superior performance in Table~\ref{tab:real-data-accuracies}. 
\begin{figure*}[t]
    \vspace{-0.10cm}
    \centering
    \begin{subfigure}[t]{0.38\textwidth}
        \centering
        \includegraphics[width=1.0\textwidth]{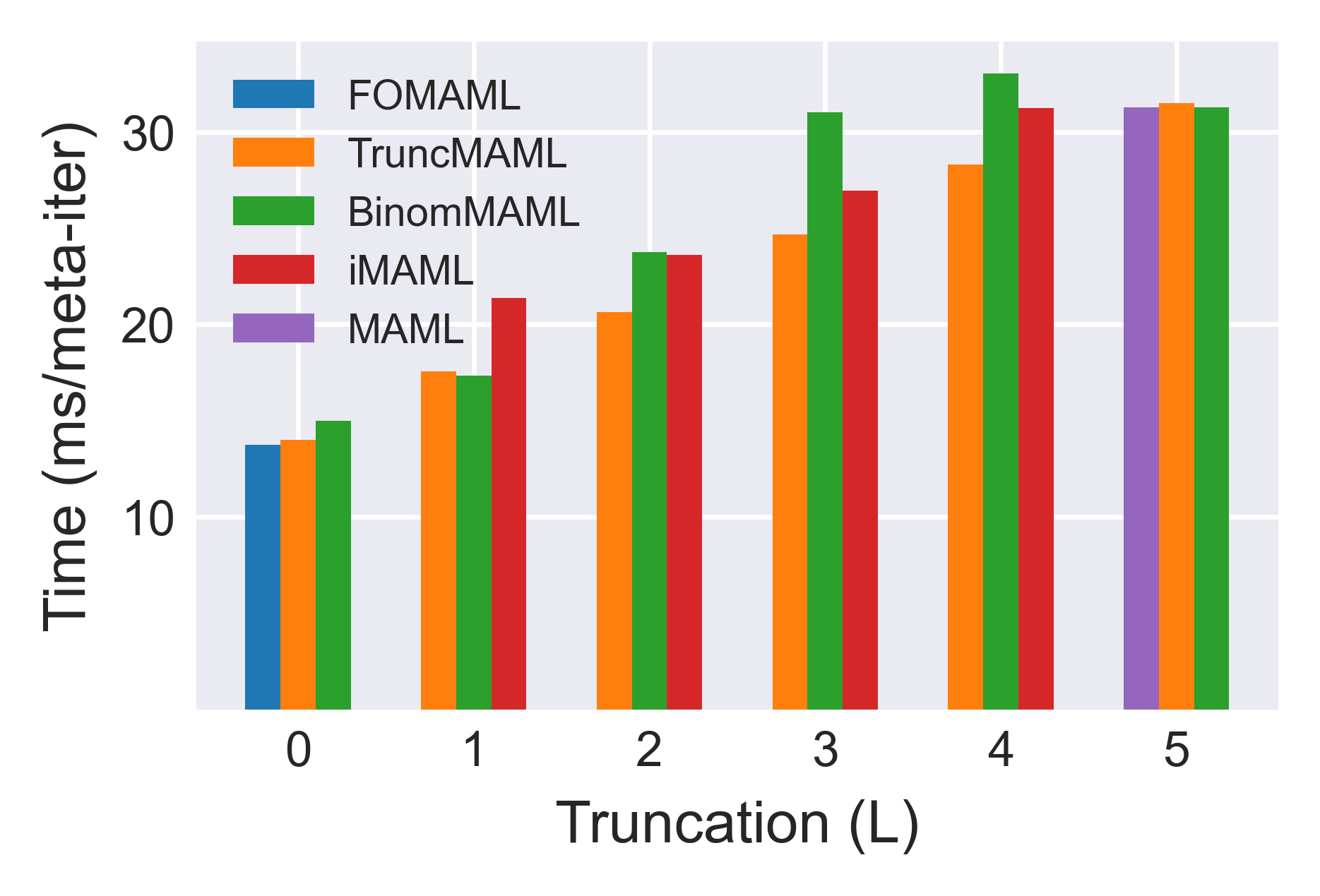}
        \vspace{-0.65cm}
        \caption{}
        \label{subfig:time-complex}
    \end{subfigure}
    \begin{subfigure}[t]{0.38\textwidth}
        \centering
        \includegraphics[width=1.0\textwidth]{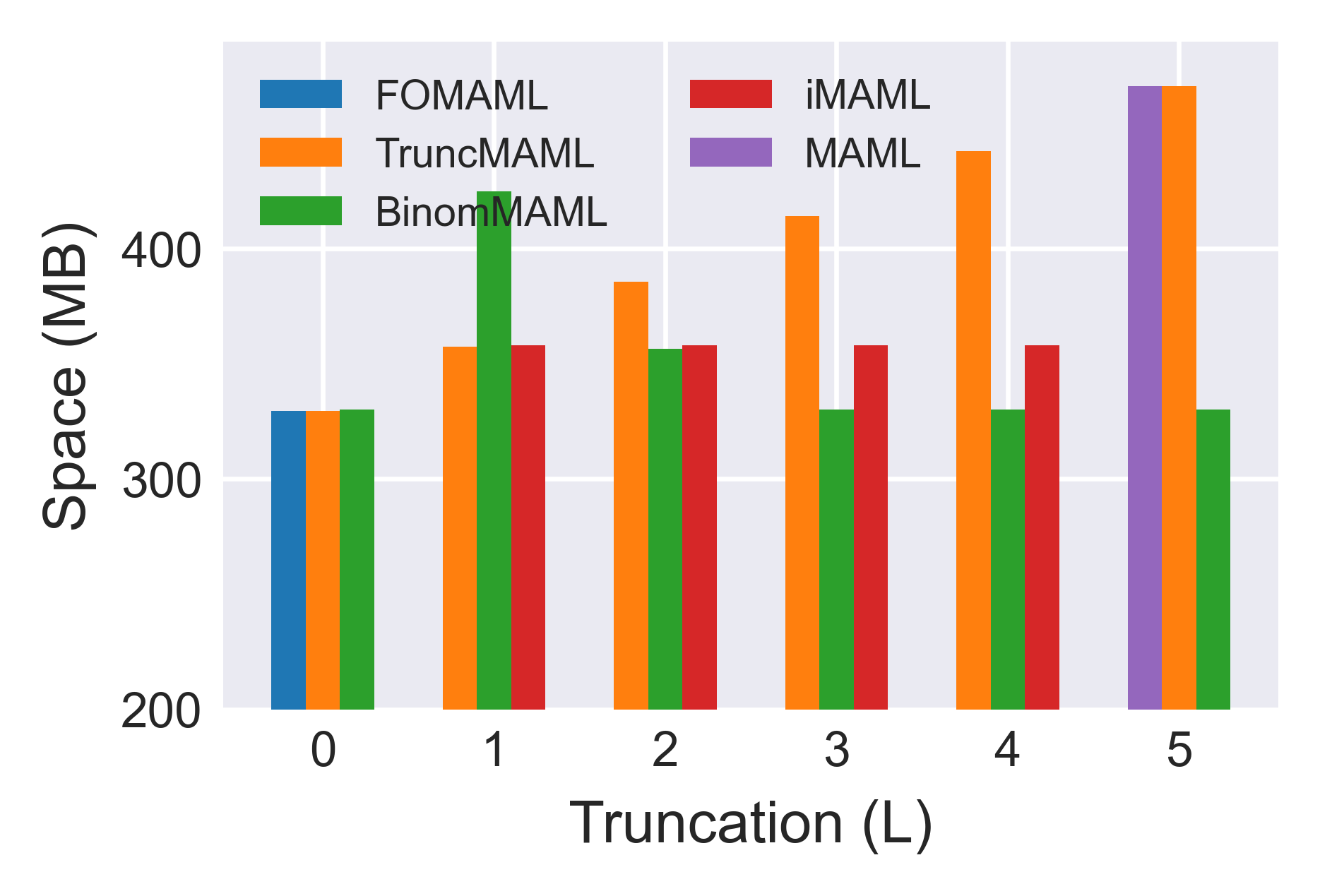}
        \vspace{-0.65cm}
        \caption{}
        \label{subfig:space-complex}
    \end{subfigure}
    \vspace{-0.15cm}
        \begin{subfigure}[t]{0.38\textwidth}
        \centering
        \includegraphics[width=1.0\textwidth]{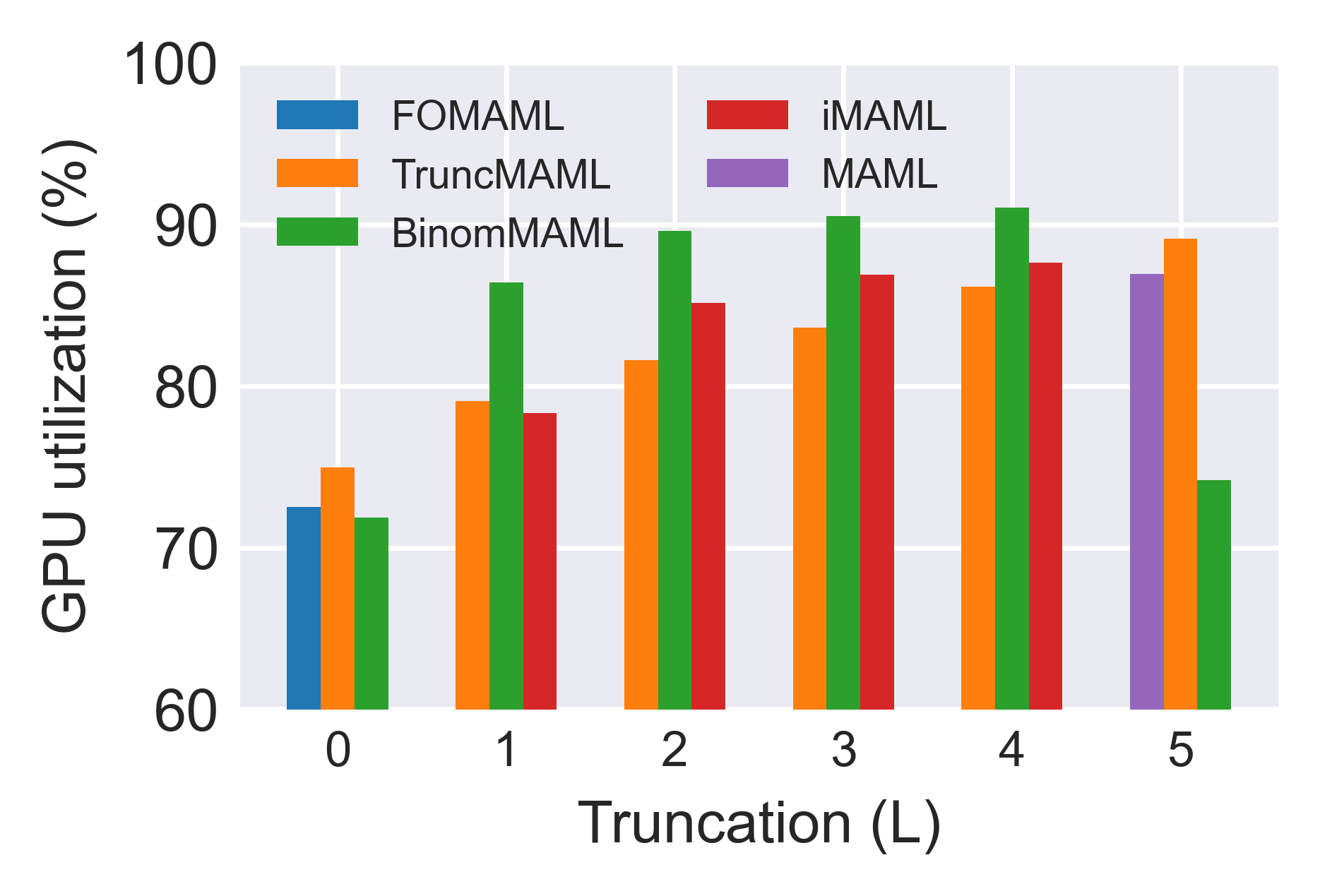}
        \vspace{-0.65cm}
        \caption{}
        \label{subfig:gpu-util}
    \end{subfigure}
    \vspace{-0.10cm}
    \caption{(a) Time complexity; (b) space complexity; \blue{and (c) GPU utilization} of GBML algorithms on miniImageNet.}
    \label{fig:space-time-complexities}
    \vspace{-0.5cm}
\end{figure*}
\begin{figure*}[b]
\vspace{-0.5cm}
    \centering
    \begin{subfigure}[t]{0.38\textwidth}
        \centering
        \includegraphics[width=1.0\textwidth]{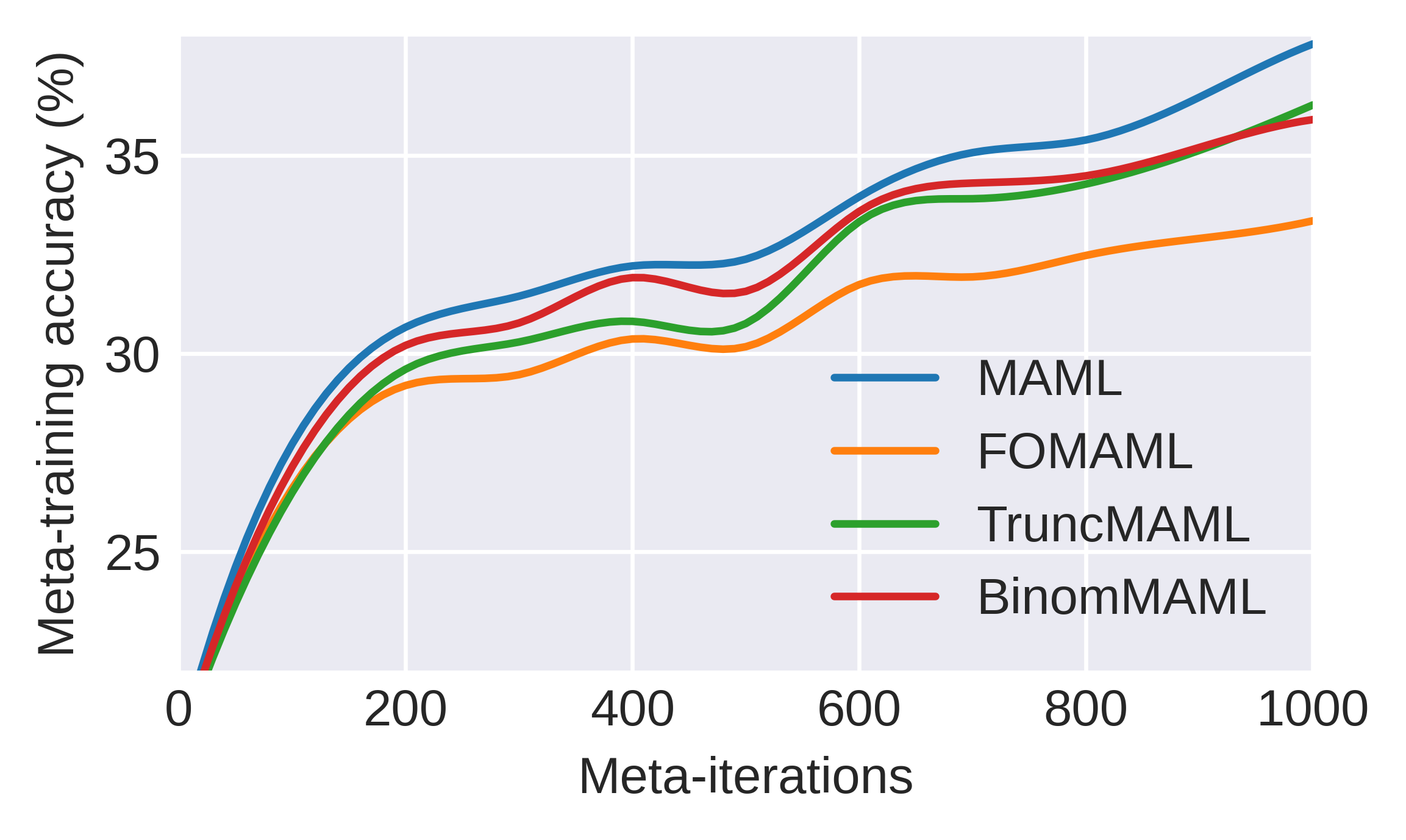}
        \vspace{-0.70cm}
        \caption{}
        \label{subfig:mini-imagenet-acc}
    \end{subfigure}%
    \begin{subfigure}[t]{0.38\textwidth}
        \centering
        \includegraphics[width=1.0\textwidth]{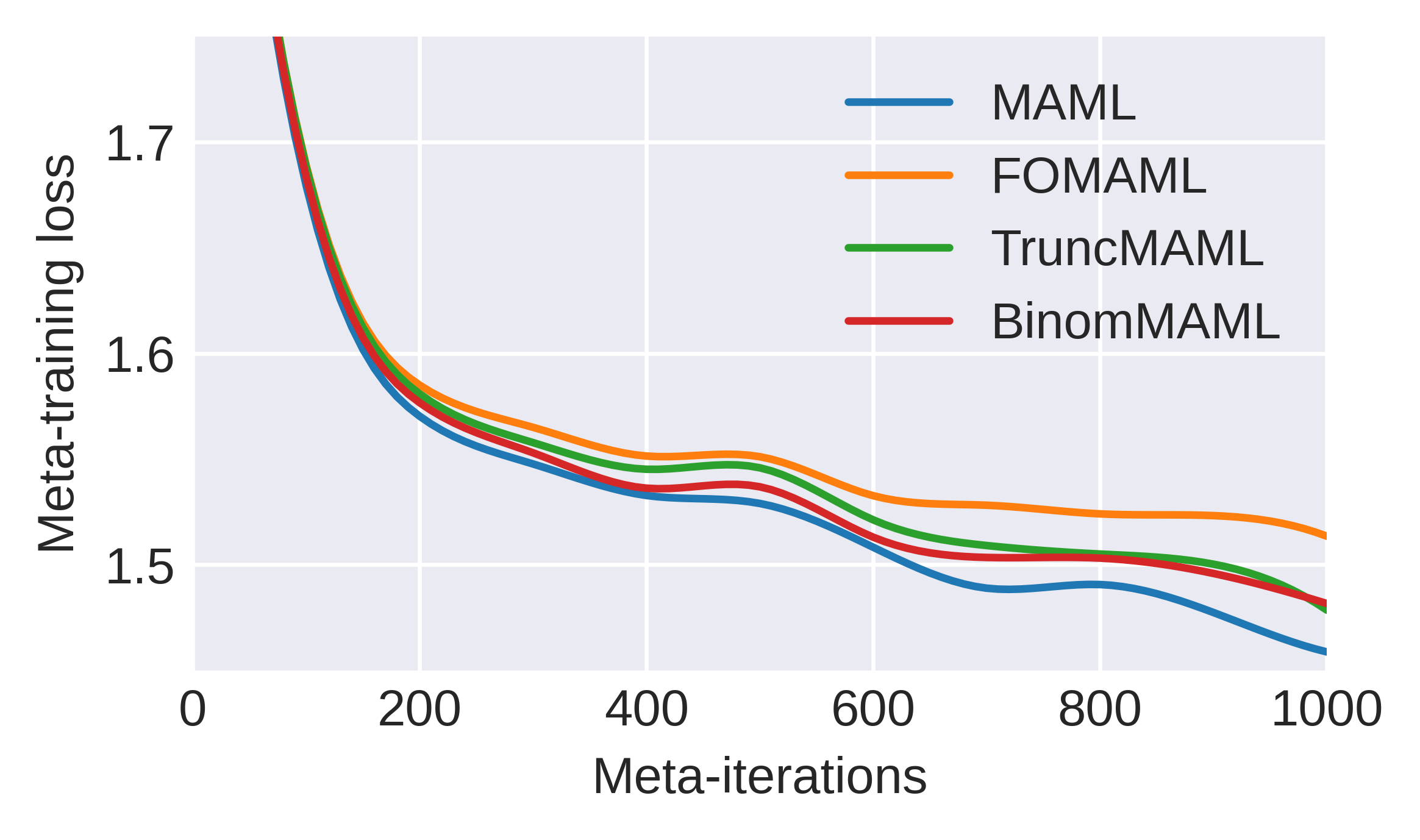}
        \vspace{-0.70cm}
        \caption{}
        \label{subfig:mini-imagenet-loss}
    \end{subfigure}
    \vspace{-0.25cm}
    \caption{Meta-training (a) accuracy and (b) loss of GBML algorithms on miniImageNet} 
    \label{fig:mini-imagenet-training}
\end{figure*}
\section{Conclusions}
This work delved into the computational complexity of GBML, and developed a reduced-complexity meta-gradient estimator named binomial (Binom)GBML. Inspired by the unique GBML structure in backpropagation, BinomGBML leverages parallelization to maximize the information captured in its meta-gradient estimate. BinomMAML was developed as a running paradigm, showcasing improved error bounds relative to existing approaches. Experiments on synthetic and real datasets corroborate the derived error bounds, which demonstrate the superiority of BinomMAML.

\pagebreak
\section*{Ethics Statement}
As this work represents foundational research in meta-learning, we do not anticipate any societal or ethical issues beyond the general and broad-reaching impacts of the advancement of machine learning.

\section*{Reproducibility Statement}
We include as supplementary material the code needed to reproduce the main results on miniImageNet and tieredImageNet. Additionally, all proofs of theoretical results are included in appendix \ref{app:error-anal}.

\bibliographystyle{iclr2026_conference}
\bibliography{bib}


\newpage
\appendix
\onecolumn
\section{BinomMAML Algorithm Derivation} \label{app:binom-maml-deriv}
\begin{proposition}[\textit{Proposition~\ref{prop:binom-partial-sum} restated}]
\label{prop:binom-partial-sum-app}
Let $\bfP_t^i := \prod_{k=K-i}^{K-1} (\bfI_d - \alpha \bfH_t^k), \;i=1,\ldots,K$, $\bfP_t^0 := \bfI_d$, and dummy index $k_0=-1$. Define operator $\oprt_t^i: \real^{d \times d} \mapsto \real^{d \times d}$ such that for any matrix $\mathbf{M} \in \real^{d\times d}$, $\oprt_t^i \mathbf{M} := \bfP_t^i - \alpha \sum_{k_{L-i}=k_{L-1-i}+1}^{K-1-i} \bfH_t^{k_{L-i}} \mathbf{M}, \;i=0,\ldots,L-1$. It holds for $0 \le L \le K$ that
\begin{equation}
\label{eq:prop-bin-partial-sum-app}
	\bfI_d + \sum_{l=1}^{L} \sum_{0\le k_{1:l} \uparrow < K} \prod_{i=1}^l (-\alpha \bfH_t^{k_i}) = \oprt_t^{L-1} \oprt_t^{L-2} \ldots \oprt_t^0 \bfI_d 
\end{equation}
\end{proposition}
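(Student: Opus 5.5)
The plan is to prove a slightly more general identity by induction on the truncation level, treating the cascade $\oprt_t^{L-1}\cdots\oprt_t^0\bfI_d$ as nested sums. In that reading, the free index $k_{L-1-i}$ in $\oprt_t^i$ is bound by the summation of the operator applied after it, namely $\oprt_t^{i+1}$.

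For an integer $0\le j\le K$ and a lower index $a$, define the truncated partial expansion
\begin{equation*}
F_j(a) := \bfI_d + \sum_{l=1}^{j}\sum_{a\le k_{1:l}\uparrow<K}\prod_{i=1}^l(-\alpha\bfH_t^{k_i}),
\end{equation*}
which sums the ordered products over all subsets of $\{a,\ldots,K-1\}$ of size at most $j$. We have $F_0(a)=\bfI_d$, and the left-hand side of \eqref{eq:prop-bin-partial-sum-app} equals $F_L(0)$, since $k_0+1=0$.

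\textbf{Step 1 (recursion by smallest index).} Grouping the terms by their smallest index $k$ gives
\begin{equation*}
F_j(a) = \bfI_d + \sum_{k=a}^{K-1}(-\alpha\bfH_t^{k})\,F_{j-1}(k+1).
\end{equation*}
This holds because removing the smallest element from a subset of size at most $j$ leaves a subset of $\{k+1,\ldots,K-1\}$ of size at most $j-1$.

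\textbf{Step 2 (splitting off the inactive-truncation tail).} I split the sum at $k=K-j$. For $k>K-j$, the set $\{k+1,\ldots,K-1\}$ has fewer than $j-1$ elements, so the truncation in $F_{j-1}(k+1)$ is inactive. The full binomial expansion \eqref{eq:binom-theorem-matrix}, applied to the last $K-1-k$ factors, then gives $F_{j-1}(k+1)=\bfP_t^{K-1-k}$. Hence the identity term together with the tail is
\begin{equation*}
\bfI_d+\sum_{k=K-j+1}^{K-1}(-\alpha\bfH_t^k)\bfP_t^{K-1-k}.
\end{equation*}
This equals $\bfP_t^{j-1}$. To see it, expand $\bfP_t^{j-1}=\prod_{k=K-j+1}^{K-1}(\bfI_d-\alpha\bfH_t^k)$ by the position of its first non-identity factor, or equivalently telescope $\bfP_t^{m}=\bfP_t^{m-1}-\alpha\bfH_t^{K-m}\bfP_t^{m-1}$. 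This step requires $a\le K-j+1$; if $a$ were larger, the tail would start above $K-j+1$ and the identity would fail. It follows that
\begin{equation*}
F_j(a)=\bfP_t^{j-1}-\alpha\sum_{k=a}^{K-j}\bfH_t^{k}F_{j-1}(k+1).
\end{equation*}
This is exactly $\oprt_t^{i}$ with $i=j-1$, upper limit $K-1-i=K-j$, and $\mathbf{M}=F_{j-1}(k+1)$.

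\textbf{Step 3 (unrolling).} Starting from $F_L(0)$ with $a=0$, the condition $a\le K-L+1$ holds. At each subsequent level the new lower index is $a=k+1$ with $k\le K-j$, so $a\le K-(j-1)+1$, and the side condition propagates down to level $j-1$. Unrolling $j=L,L-1,\ldots,1$ and terminating with $F_0=\bfI_d$ therefore yields $\oprt_t^{L-1}\cdots\oprt_t^{0}\bfI_d$. The case $L=0$ is trivial, since both sides equal $\bfI_d$. Formally, this is an induction on $j$ of the statement: for all $a\le K-j+1$, $F_j(a)$ equals the nested cascade of $j$ operators with innermost lower index $a$.

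The main obstacle is notational rather than mathematical. One must make precise that the operators are not independent maps but nested sums sharing bound indices, and must keep track of the range condition $a\le K-j+1$. That condition is what makes the tail collapse to exactly $\bfP_t^{j-1}$, so I would verify it carefully at each level.
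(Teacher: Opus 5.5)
Your proof is correct, and it is organized differently from the paper's.

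\textbf{The paper's route.} It inducts on the truncation order. It writes the order-$(L'+1)$ expansion as the order-$L'$ expansion plus the new degree-$(L'+1)$ terms, and replaces the former by the $L'$-cascade through the induction hypothesis. It then invokes Lemma~\ref{lemma:operator-recurse} to peel the boundary branch $k_1=K-L'$ off the outermost operator. All later indices are forced on that branch, so it turns $\bfP_t^{L'-1}$ into $\bfP_t^{L'}$. The surviving branches $k_1\le K-L'-1$ are merged with the new terms grouped by smallest index, and step (d) repeats this peel-and-merge at every nesting level.

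\textbf{Your route.} You fix $L$ and induct on the depth $j$ with a free lower index $a$. You group the whole truncated sum by smallest index at once. You then obtain $\bfP_t^{j-1}$ on the expansion side, as the identity plus every term whose smallest index exceeds $K-j$, where the degree cap is inactive.

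\textbf{What your route buys.} The paper's step (d) tacitly needs Lemma~\ref{lemma:operator-recurse} for sub-cascades whose outer lower index is $k_m+1$ rather than $0$. That version is never stated, though its proof carries over verbatim. Your $F_j(a)$, with the side condition $a\le K-j+1$, is exactly that generalization made explicit. The recursion therefore closes as a genuine induction, and no auxiliary lemma is required.

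\textbf{What the paper's route buys.} It produces the level-to-level identity of Lemma~\ref{lemma:operator-recurse} itself. The paper reuses that identity in Lemma~\ref{lemma:saved-hvp} to justify recycling stored vectors as $\bfP_t^l\bfg_t^K$ in Algorithm~\ref{alg:binommaml-for-loop}. Your framework recovers it in one line. The cap is still inactive at $k=K-j$, since that set has exactly $j-1$ elements. So for $a\le K-j$, splitting one index lower gives $F_j(a)=\bfP_t^{j}-\alpha\sum_{k=a}^{K-j-1}\bfH_t^kF_{j-1}(k+1)$.

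One wording slip: in Step 3 the lower index $a$ belongs to the outermost sum, the one in $\oprt_t^{j-1}$, not to the innermost sum.
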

\begin{proof}
The proposition is proved using mathematical induction on $L$. As $L$ will change in this proof, we will alternatively denote $\oprt_t^i$ as $\oprt_t^{L,i}$ for notational clarity. 

First consider the base case $L = 1$. It can be readily verified from the definition of $\oprt_t^{L,i}$ that
\begin{align}
	\bfI_d + \sum_{0 \le k_1 < K} (-\alpha \bfH_t^{k_1}) = \bfP_t^0 - \alpha \sum_{k_1 = k_0 + 1}^{K-1} \blue{\bfH_t^{k_1}} = \oprt_t^{1,0} \mathbf{I}_d. 
\end{align}

Now assume~\eqref{eq:prop-bin-partial-sum-app} holds for $L = 1,\ldots,L'$ where $L' < K$, we next prove that this equation also holds for $L = L' + 1$. It follows that
\begin{align}
	&\bfI_d + \sum_{l=1}^{L' + 1} \sum_{0\le k_1 < \ldots < k_l < K} \prod_{i=1}^l (-\alpha \bfH_t^{k_i}) \nonumber \\
	&= \bfI_d + \sum_{l=1}^{L'} \sum_{0\le k_1 < \ldots < k_l < K} \prod_{i=1}^l (-\alpha \bfH_t^{k_i}) + \sum_{0\le k_1 < \ldots < k_{L'+1} < K} \prod_{i=1}^l (-\alpha \bfH_t^{k_i}) \nonumber \\
	&\overset{(a)}{=} \oprt_t^{L',L'-1} \oprt_t^{L',L'-2} \ldots \oprt_t^{L',0} \bfI_d + \sum_{0\le k_1 < \ldots < k_{L'+1} < K} \prod_{i=1}^{L'+1} (-\alpha \bfH_t^{k_i}) \nonumber \\
	&\overset{(b)}{=} \bfP_t^{L'} - \alpha \sum_{k_1=0}^{K-L'-1} \bfH_t^{k_1} \oprt_t^{L',L'-2} \ldots \oprt_t^{L',0} \bfI_d + \sum_{k_1=0}^{K-L'-1} \sum_{k_1 < k_2 < \ldots < k_{L'+1} < K} \prod_{i=1}^{L'+1} (-\alpha \bfH_t^{k_i}) \nonumber \\
	&= \bfP_t^{L'} - \alpha \sum_{k_1=0}^{K-L'-1} \bfH_t^{k_1} \Bigg[ \oprt_t^{L',L'-2} \ldots \oprt_t^{L',0} \bfI_d + \sum_{k_1 < k_2 < \ldots < k_{L'+1} < K} \prod_{i=2}^{L'+1} (-\alpha \bfH_t^{k_i}) \Bigg] \nonumber \\
	&\overset{(c)}{=} \oprt_t^{L'+1,L'} \Bigg[ \oprt_t^{L',L'-2} \ldots \oprt_t^{L',0} \bfI_d + \sum_{k_1 < k_2 < \ldots < k_{L'+1} < K} \prod_{i=2}^{L'+1} (-\alpha \bfH_t^{k_i}) \Bigg] \nonumber \\
	&\overset{(d)}= \oprt_t^{L'+1,L'} \oprt_t^{L'+1,L'-1} \ldots \oprt_t^{L'+1,1} \Bigg[ \bfI_d + \sum_{k_{L'+1}=k_{L'}+1}^{K-1} (-\alpha \bfH_t^{k_{L'+1}}) \Bigg] \nonumber \\
	&= \oprt_t^{L'+1,L'} \oprt_t^{L'+1,L'-1} \ldots \oprt_t^{L'+1,0} \bfI_d
\end{align}
where $(a)$ relies on the inductive hypothesis for $L = L'$, $(b)$ follows from Lemma~\ref{lemma:operator-recurse} and the fact that $\sum_{0\le k_1 < \ldots < k_{L'+1} < K} = \sum_{k_1=0}^{K-L'-1} \sum_{k_1 < k_2 < \ldots < k_{L'+1} < K}$, $(c)$ is from the definition of $\oprt_t^{L,i}$, and $(d)$ is by recursively applying $(a)$ through $(c)$. This demonstrates~\eqref{eq:prop-bin-partial-sum-app} also holds for $L = L' + 1$.

By induction, we conclude that~\eqref{eq:prop-bin-partial-sum-app} holds for any $L = 1,\ldots,K$, which completes the proof. 
\end{proof}

\begin{theorem}[\textit{Theorem \ref{thm:binom-maml-cascade} restated}]
\label{thm:binom-maml-cascade-app}
Let $\bfP_t^i := \prod_{k=K-i}^{K-1} (\bfI_d - \alpha \bfH_t^k), \;i=1,\ldots,K$, $\bfP_t^0 := \bfI_d$, and $k_0=-1$. Given vector $\bfg \in \real^d$, define operator $\oprt_t^{\bfg, i}: \real^d \mapsto \real^d$ such that for any vector $\mathbf{v} \in \real^{d}$, $\oprt_t^{\bfg,i} \mathbf{v} := \bfP_t^i \bfg - \alpha \sum_{k_{L-i}=k_{L-1-i}+1}^{K-1-i} \bfH_t^{k_{L-i}} \mathbf{v}, \;i=0,\ldots,L-1$. It holds for $0 \le L \le K$ that 
\begin{equation} \label{eq:binom-operator-cascade-app}
	\estgrad{Bi} \metaloss (\bftheta) = \oprt_t^{\bfg_t^K, L-1} \oprt_t^{\bfg_t^K, L-2} \ldots \oprt_t^{\bfg_t^K,0} \bfg_t^K. 
\end{equation}
\end{theorem}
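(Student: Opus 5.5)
The plan is to derive Theorem~\ref{thm:binom-maml-cascade-app} from Proposition~\ref{prop:binom-partial-sum-app} by right-multiplying the matrix identity~\eqref{eq:prop-bin-partial-sum-app} by $\bfg_t^K$. By definition~\eqref{eq:BinomMAML}, the left-hand side of~\eqref{eq:prop-bin-partial-sum-app} applied to $\bfg_t^K$ equals $\estgrad{Bi}\metaloss_t(\bftheta)$. It therefore suffices to show that the matrix cascade applied to $\bfg_t^K$ coincides with the vector cascade, that is,
\begin{equation*}
\bigl(\oprt_t^{L-1}\cdots\oprt_t^{0}\bfI_d\bigr)\bfg_t^K = \oprt_t^{\bfg_t^K,L-1}\cdots\oprt_t^{\bfg_t^K,0}\bfg_t^K .
\end{equation*}

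The key step is a commutation identity, which I expect to be the only substantive one. For each $i$ and any matrix $\mathbf{M}$,
\begin{equation*}
(\oprt_t^i\mathbf{M})\bfg = \bfP_t^i\bfg - \alpha\sum_{k_{L-i}} \bfH_t^{k_{L-i}}\mathbf{M}\bfg = \oprt_t^{\bfg,i}(\mathbf{M}\bfg).
\end{equation*}
This holds because $\oprt_t^i$ is affine in $\mathbf{M}$, its constant term $\bfP_t^i$ sends to $\bfP_t^i\bfg$, and its linear part left-multiplies $\mathbf{M}$ by Hessians, which commutes with right-multiplication by $\bfg$. The inner sums are over the same index ranges in both operators, so nothing is lost in the translation.

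Iterating this identity with $\bfg=\bfg_t^K$ gives the result. Set $\mathbf{M}_0=\bfI_d$ and $\mathbf{M}_{j+1}=\oprt_t^j\mathbf{M}_j$. Then $\mathbf{M}_0\bfg_t^K=\bfg_t^K$ and $\mathbf{M}_{j+1}\bfg_t^K=\oprt_t^{\bfg_t^K,j}(\mathbf{M}_j\bfg_t^K)$. A trivial induction on $j$ up to $L$ yields~\eqref{eq:binom-operator-cascade-app}.

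The main subtlety is the nested dummy indices. The operators $\oprt_t^{i}$ in the Proposition are understood with the outer summation index $k_{L-1-i}$ bound by the next operator, so the cascade is really a nested sum rather than a literal composition of fixed maps. I will handle this by interpreting both cascades identically as nested sums over $k_1<\dots<k_L$. In that reading each inner quantity is a family indexed by the free outer index, and the commutation identity above holds termwise for each fixed value of that index.

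For the edge cases, $L=0$ is the empty cascade, where both sides equal $\bfg_t^K$, consistent with FOMAML. For $L\ge1$ the argument above applies directly, with the range $0\le L\le K$ inherited from the Proposition.
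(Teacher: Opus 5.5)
Your proposal is correct and follows essentially the same route as the paper: it right-multiplies Proposition~\ref{prop:binom-partial-sum-app} by $\bfg_t^K$, uses the commutation identity $(\oprt_t^i\mathbf{M})\bfg=\oprt_t^{\bfg,i}(\mathbf{M}\bfg)$, and peels off the operators recursively. Your explicit handling of the nested dummy indices (applying the identity termwise for each fixed outer index) and of the $L=0$ case are clarifications the paper leaves implicit.
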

\begin{proof}
For any $\mathbf{M} \in \real^{d \times d}$, it can be verified from the definition of $\oprt_t^i$ that
\begin{equation}
\label{eq:BM2BMg}
	[\oprt_t^{i} \mathbf{M}] \bfg = \Bigg[ \bfP_t^i - \alpha \sum_{k_{L-i}=k_{L-1-i}+1}^{K-1-i} \bfH_t^{k_{L-i}} \mathbf{M} \Bigg] \mathbf{g} = \bfP_t^i \bfg - \alpha \sum_{k_{L-i}=k_{L-1-i}+1}^{K-1-i} \bfH_t^{k_{L-i}} \mathbf{M} \bfg = \oprt_t^{\bfg, i} [\mathbf{M} \bfg].
\end{equation}
Therefore, using Proposition~\ref{prop:binom-partial-sum-app} results in
\begin{align}
	\estgrad{Bi} \metaloss (\bftheta) &= (\oprt_t^{L-1} \oprt_t^{L-2} \ldots \oprt_t^0 \bfI_d) \bfg_t^K = [ \oprt_t^{L-1} (\oprt_t^{L-2} \ldots \oprt_t^0 \bfI_d) ] \bfg_t^K \nonumber \\
	&\overset{(a)}{=} \oprt_t^{\bfg_t^K, L-1} [(\oprt_t^{L-2} \ldots \oprt_t^0 \bfI_d) \bfg] \nonumber \\
	&\overset{(b)}{=} \oprt_t^{\bfg_t^K, L-1} \oprt_t^{\bfg_t^K, L-2} \ldots \oprt_t^{\bfg_t^K,0} [\bfI_d \bfg_t^K] \nonumber \\
	&= \oprt_t^{\bfg_t^K, L-1} \oprt_t^{\bfg_t^K, L-2} \ldots \oprt_t^{\bfg_t^K,0} \bfg_t^K.
\end{align}
where $(a)$ is by applying~\eqref{eq:BM2BMg}, and $(b)$ is from recursion. 
\end{proof}

\begin{remark}
As stated in section \ref{sec:binomGBML}, each $\oprt_t^{\bfg_t^K,i}$ corresponds to one iteration of $\mathcal{O}(d)$ parallel HVP computation. Likewise, the meta-gradient calculation in MAML and TruncMAML can be also respectively reformulated as a chain of $K$ and $L$ vector operators. 
\end{remark}

\begin{lemma}
\label{lemma:operator-recurse}
Consider the notation convention in Proposition~\ref{prop:binom-partial-sum}. It holds that
\begin{equation}
\label{eq:lemma-operator-recurse}
	\oprt_t^{L-1} \oprt_t^{L-2} \ldots \oprt_t^0 \bfI_d = \bfP_t^L - \alpha \sum_{k_1=0}^{K-L-1} \bfH_t^{k_1} \oprt_t^{L-2} \ldots \oprt_t^0 \bfI_d. 
\end{equation}
\end{lemma}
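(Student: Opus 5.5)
The plan is to peel off the outermost operator $\oprt_t^{L-1}$ directly from its definition and then show that the discrepancy with the right-hand side of~\eqref{eq:lemma-operator-recurse} vanishes. Write $\mathbf{M}(k_1) := \oprt_t^{L-2}\ldots\oprt_t^{0}\bfI_d$. This inner cascade depends on the free index $k_1$, because the summation range of $\oprt_t^{L-2}$ starts at $k_1+1$, and the same holds recursively further in. With $i=L-1$ and $k_0=-1$, the definition gives
\begin{equation*}
\oprt_t^{L-1}\mathbf{M} = \bfP_t^{L-1} - \alpha\sum_{k_1=0}^{K-L}\bfH_t^{k_1}\mathbf{M}(k_1).
\end{equation*}
The proof then reduces to two observations.

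The first is a telescoping identity. From the definition of $\bfP_t^i$ we have $\bfP_t^L = (\bfI_d-\alpha\bfH_t^{K-L})\bfP_t^{L-1}$, so $\bfP_t^{L-1} = \bfP_t^L + \alpha\bfH_t^{K-L}\bfP_t^{L-1}$. Separately, we split off the last summand $k_1=K-L$ from the sum. Substituting both, the right-hand side above becomes
\begin{equation*}
\bfP_t^L - \alpha\sum_{k_1=0}^{K-L-1}\bfH_t^{k_1}\mathbf{M}(k_1) + \alpha\bfH_t^{K-L}\bigl[\bfP_t^{L-1}-\mathbf{M}(K-L)\bigr].
\end{equation*}
It therefore suffices to prove $\mathbf{M}(K-L)=\bfP_t^{L-1}$.

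The second observation, which is the main step, is that fixing $k_1=K-L$ collapses every inner sum to a single term. We prove by induction on $i=0,\ldots,L-2$ that, along the extreme index path, the partial cascade satisfies $\oprt_t^{i}\cdots\oprt_t^{0}\bfI_d=\bfP_t^{i+1}$. When $k_1=K-L$, the range for $k_{L-i}$ runs from $k_{L-1-i}+1$ to $K-1-i$. Inductively this forces $k_j=K-L+j-1$, so each range contains only the index $K-1-i$. Base case: $\oprt_t^0\bfI_d=\bfI_d-\alpha\bfH_t^{K-1}=\bfP_t^1$. Inductive step: $\oprt_t^{i}\bfP_t^{i}=\bfP_t^{i}-\alpha\bfH_t^{K-1-i}\bfP_t^{i}=(\bfI_d-\alpha\bfH_t^{K-1-i})\bfP_t^{i}=\bfP_t^{i+1}$, using the ordering in the product defining $\bfP_t^i$. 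Taking $i=L-2$ gives $\mathbf{M}(K-L)=\bfP_t^{L-1}$, which kills the bracket and proves the lemma. The degenerate case $L=1$, where $\mathbf{M}=\bfI_d=\bfP_t^0$, is immediate.

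The main obstacle is bookkeeping rather than any estimate. One has to state precisely that the operators $\oprt_t^i$ carry implicit dependence on the outer dummy indices. One also has to verify that fixing $k_1=K-L$ indeed pins down each subsequent index uniquely. That second check is exactly the observation that the upper limit $K-1-i$ coincides with the lower limit $k_{L-1-i}+1$ along this path.
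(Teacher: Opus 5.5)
Your proposal is correct and follows the paper's own route: split off the $k_1=K-L$ summand, note that along this index path every inner sum collapses to the single term $k_{L-i}=K-1-i$, and collapse the resulting nest with $(\bfI_d-\alpha\bfH_t^{K-1-i})\bfP_t^i=\bfP_t^{i+1}$. Your reduction to $\mathbf{M}(K-L)=\bfP_t^{L-1}$ and the explicit induction simply make rigorous what the paper conveys with an ellipsis in its step (b).
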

\begin{proof}
By the definition of $\oprt_t^i$, we have
\begin{align}
	&\oprt_t^{L-1} \oprt_t^{L-2} \ldots \oprt_t^0 \bfI_d \nonumber \\
	&= \bfP_t^{L-1} - \alpha \sum_{k_1=0}^{K-L} \bfH_t^{k_1} \oprt_t^{L-2} \ldots \oprt_t^0 \bfI_d \nonumber \\
	&\overset{(a)}{=} \bfP_t^{L-1} - \alpha \bfH_t^{K-L} \left\{ \bfP_t^{L-2} - \alpha \sum_{k_2=K-L+1}^{K-L+1} \bfH_t^{k_2} \Big[ \bfP_t^{L-3} - \ldots (\bfP_t^0 - \alpha \sum_{k_L=K-1}^{K-1} \bfH_t^{k_L} \bfI_d) \Big] \right\} \nonumber \\
	&\qquad - \alpha \sum_{k_1=0}^{K-L-1} \bfH_t^{k_1} \oprt_t^{L-2} \ldots \oprt_t^0 \bfI_d \nonumber \\
	&= \bfP_t^{L-1} - \alpha \bfH_t^{K-L} \left\{ \bfP_t^{L-2} - \alpha \bfH_t^{K-L+1} \Big[ \bfP_t^{L-3} - \ldots (\bfP_t^0 - \alpha \bfH_t^{K-1}) \Big] \right\}  - \alpha \sum_{k_1=0}^{K-L-1} \bfH_t^{k_1} \oprt_t^{L-2} \ldots \oprt_t^0 \bfI_d \nonumber \\
	&\overset{(b)}{=} \bfP_t^L - \alpha \sum_{k_1=0}^{K-L-1} \bfH_t^{k_1} \oprt_t^{L-2} \ldots \oprt_t^0 \bfI_d
\end{align}
where $(a)$ separates out the $k_1 = K- L$ term from the summation, and $(b)$ is because $\bfP_t^i - \alpha \bfH_t^{K-1-i} \bfP_t^i = (\bfI_d - \alpha \bfH_t^{K-1-i}) \bfP_t^i = \bfP_t^{i+1},\; i=0,\ldots,L-1$ and the definition $\bfP_t^0 = \bfI_d$. 
\end{proof}

\subsection{Case study for $L = 2$} \label{app:L=2}
To give some intuition on the derivation of the BinomMAML algorithm~\ref{alg:binommaml}, the BinomMAML expansion \eqref{eq:BinomMAML} truncated to $L=2$ is examined.
\begin{align*}
    \prod_{k=0}^{K-1}\bigl[ \bfI_d- \alpha \bfH_t^k \bigr] \bfg_t^K = \Bigl[\bfI_d - \alpha\sum_{k_1=0}^{K-1}\bfH_t^k + \alpha^2\sum_{k_1=0}^{K-2} \Bigl(\sum_{k_2=k_1+1}^{K-1} \bfH_t^{k_1}\bfH_t^{k_2} \Bigr)\Bigr]\bfg_t^K + o(\alpha^3).
\end{align*}
The double sum encapsulates all possible combinations of $\bfH_t^{k_1}\bfH_t^{k_2}$ from the product expansion. Expand and rewrite
\begin{align}
    \prod_{k=0}^{K-1}\bigl[ \bfI_d- \alpha \bfH_t^{k} \bigr] \bfg_t^K
         &= \Bigl[\bfI_d - \alpha\bfH_t^{K-1} - \alpha\sum_{k_1=0}^{K-2}\bfH_t^{k_1} + \blue{\alpha^2}\sum_{k_1=0}^{K-2} \Bigl(\sum_{k_2=k_1+1}^{K-1} \bfH_t^{k_1}\bfH_t^{k_2} \Bigr)\Bigr]\bfg_t^K \nonumber\\
        &= \Bigl[\bfg_t^K- \alpha\bfH_t^{K-1}\bfg_t^K - \alpha\sum_{k_1=0}^{K-2}\bfH_t^{k_1}\Bigl(\bfg_t^K - \alpha\sum_{k_2=k_1+1}^{K-1} \bfH_t^{k_2}\bfg_t^K\Bigr)\Bigr] \label{eq:l=2-expansion}
\end{align}
How can we calculate~\eqref{eq:l=2-expansion}? Note for each $k_1$, a unique sum of HVPs over $k_2$ is required. Denote HVPs $\bfu^{1,k}=\bfH_t^{k}\bfg_t^K,~ k=1,\dots,K-1$.
\begin{align*}
    \prod_{k=0}^{K-1}\bigl[ \bfI_d- \alpha \bfH_t^{k} \bigr] \bfg_t^K = \Bigl[\bfg_t^K- \alpha\bfH_t^{K-1}\bfg_t^K - \alpha\sum_{k_1=0}^{K-2}\bfH_t^{k_1}\Bigl(\bfg_t^K - \alpha\sum_{k_2=k_1+1}^{K-1} \bfu^{\blue{0},k_2}\Bigr)\Bigr]
\end{align*}
Then denote each unique sum as $\bfv^{1,k}=\bfg_t^K-\alpha\sum_{k'=k}^{K-1}\bfu^{\blue{0},k'}, ~k=1,\dots,K-1$. Letting $\bfv^{1,K}=\bfg_t^K$, then this implies $\bfv^{1,k}=\bfv^{1,k+1}-\alpha\bfu^{\blue{0},k}, ~k=1,\dots,K-1$. Finally, note $\bfg_t^K - \alpha\bfH_t^{K-1}\bfg_t^K=\bfv^{1,K-1}$.
\begin{align*}
    \prod_{k=0}^{K-1}\bigl[ \bfI_d- \alpha \bfH_t^{k} \bigr] \bfg_t^K = \Bigl[\bfv^{1,K-1} - \alpha\sum_{k_1=0}^{K-2}\bfH_t^{k_1}\bfv^{1,k_1+1}\Bigr]
\end{align*}
The expression is similar to that inside the parentheses above. Indeed, we can define $\bfu^{\blue{1},k}=\bfH_t^{k}\bfv^{1,k+1},~k=0,\dots,K-2$ and $\bfv^{2,k}=\bfv^{1,K-1} - \alpha\sum_{k'=k}^{K-2}\bfu^{1,k'}$  Rewrite
\begin{align*}
    \prod_{k=0}^{K-1}\bigl[ \bfI_d- \alpha \bfH_t^k \bigr] \bfg_t^K &= \Bigl[\bfv^{1,K-1} - \alpha\sum_{k_1=0}^{K-2}\bfu^{\blue{1},k_1}\Bigr]\\
    &= \bfv^{2,0}
\end{align*}
Which is the final gradient estimation. This procedure boils down to (1) calculate HVPs $\{\bfu^{\blue{0},k}\}_{k=1}^{K-1}$ and sums $\{\bfv^{1,k}\}_{k=1}^{K-1}$, then (2) calculate HVPs $\{\bfu^{\blue{1},k}\}_{k=0}^{K-2}$ and sums $\{\bfv^{2,k}\}_{k=0}^{K-2}$. This illustrates some intuition behind the operators $\oprt_t^{\bfg_t^K,i}$ introduced in theorem \ref{thm:binom-maml-cascade}.

\subsection{Computation of BinomMAML's vector operators} \label{app:compute-vec-oprt}
Now consider the more general BinomMAML truncation where $1\leq L \leq K$. It will be shown that computing each vector operator $\oprt_t^{\bfg_t^K}$ can be done with $K-L+1$ parallel HVPs in $\mcO(d)$ time and $\mcO((K-L+1)d)$ space. First, denote 
\begin{subequations}
\begin{align}
    \bfv_t^{l,k} &= \oprt_t^{\bfg_t^K,l-1}\dots \oprt_t^{\bfg_t^K,0}\bfg_t^K \Big|_{k_{L-l}=k-1} \in \real^d,\ \ \ l=1,\dots,L, \label{subeq:vector-v}\\
    \bfv_t^{0,k} &= \bfg_t^K,\ \forall\ k, \nonumber\\
    \bfu_t^{l,k} &= \bfH_t^{k}\bfv_t^{l,k} \in \real^d,\ \ \ l = 0,\dots,L-1. \label{subeq:vector-u}
\end{align}
\end{subequations}
$\bfv_t^{l,k}$ an the intermediate term resulting from the chain of operators $\oprt_t^{\bfg_t^K,l-1}\dots \oprt_t^{\bfg_t^K,0}\bfg_t^K$. Since $\oprt_t^{\bfg_t^K,l-1}\dots \oprt_t^{\bfg_t^K,0}\bfg_t^K$ can represent a set of $\bfv_t^{l}$ depending on the value of the prior index $k_{L-l}$, the second superscript $k$ dictates $k_{L-l}$. Additionally, $\bfu_t^{l,k}$ represents an HVP operation whose role is described shortly. With this notation, each $\oprt_t^{\bfg_t^K,l}$ is computed using the set of intermediate terms $\{ \bfv_t^{l-1,k+1} \}_{k=L-l+1}^{K-l+1}$ and HVPs $\{ \bfu_t^{l,k} \}_{k=L-l}^{K-l}$.
To see how this leads to the BinomMAML algorithm~\ref{alg:binommaml}, first note that the definitions of $\bfv_t^{l,k}$~\eqref{subeq:vector-v} and $\oprt_t^{\bfg_t^K,l}$ (Theorem ~\ref{thm:binom-maml-cascade-app}) lead to
\begin{subequations}
\begin{align}
    \bfv_t^{l,k} = \oprt_t^{\bfg_t^K,l-1}\bfv_t^{l-1,k+1}\big|_{k_{L-l}=k-1} &= \Bigl( \bfP_t^{l-1} \bfg_t^K - \alpha \sum_{k_{L+1-l}=k_{L-l}+1}^{K-l}\bfH_t^{k_{L-1-l}}\bfv_t^{l-1,k_{L+1-l}+1} \Bigr) \bigg|_{k_{L-l}=k-1} \label{subeq:v-summation-of-hvps}, \\ 
    &= \Bigl( \bfP_t^{l-1} \bfg_t^K - \alpha \sum_{k_{L+1-l}=k_{L-l}+1}^{K-l}\bfu_t^{l-1,k_{L+1-l}+1} \Bigr) \bigg|_{k_{L-l}=k-1}. \label{subeq:v-summation-of-u}
\end{align}    
\end{subequations}
The above equations expand $\oprt_t^{\bfg_t^K,l-1}$ as expressions with $\bfv_t^{l,k}$ and $\bfu_t^{l,k}$. In~\eqref{subeq:v-summation-of-hvps}, each $\bfv_t^{l,k}$ depends on a unique subset of the previously calculated terms $\{ \bfv_t^{l+1,k} \}_{k=L-l-1}^{K-l-1}$, which requires that $\{ \bfv_t^{l,k} \}_{k=L-l}^{K-l}$ be saved (rather than summed to find $\oprt_t^{\bfg_t^K,l-1}$ directly). \eqref{subeq:v-summation-of-u} reveals that, though originally unclear, $\bfv_t^{l,k}$ is simply a sum of HVPs $\{ \bfu_t^{l-1,k} \}_{k=L-l+1}^{K-l+1}$. In fact, \eqref{subeq:v-summation-of-u} implies that 
$\bfv_t^{l,k}=\bfv_t^{l,k+1} - \alpha \bfu_t^{l-1,k+1}$, meaning
$\{ \bfv_t^{l,k} \}_{k=L-l}^{K-l}$ can be computed as a sequence of computationally trivial additions $\{ \bfv_t^{l,k}=\bfv_t^{l,k+1} - \alpha \bfu_t^{l-1,k+1} \}_{k=L-l}^{K-l}$ starting from the highest index $k=K-l$. The $K-L+1$ computationally independent HVPs can be performed in parallel with $\mcO(d)$ time and $\mcO((K-L+1)d)$ space.

Using this framework, BinomMAML~\eqref{eq:binom-operator-cascade-app}, as cascade of $L \times \oprt_t^{\bfg_t^K,l}$ operators, can be calculated by recursively finding $\{ \bfv_t^{l+1,k} \}_{k=L-l-1}^{K-l-1}$ via the HVPs $\{ \bfu_t^{l,k} \}_{k=L-l}^{K-l}$ for $l=0,\dots,L-1$. The total complexity of BinomMAML is then $\mcO(Ld)$ time and $\mcO((K-L+1)d)$ space, meaning the space requirement actually decreases affinely with $L$.

The final estimate $\estgrad{Bi}\metaloss(\bftheta)$ is found as
\begin{align*}
    \estgrad{Bi} \metaloss (\bftheta) &= \oprt_t^{\bfg_t^K, L-1} \oprt_t^{\bfg_t^K, L-2} \ldots \oprt_t^{\bfg_t^K,0} \bfv_t^{0,k} \\
    &= \oprt_t^{\bfg_t^K, L-1} \bfv_t^{L-1,k_1+1} \\
    &= \bigl[\bfP_t^{L-1} \bfg_t^K - \alpha \sum_{k_1=k_0+1}^{K-L} \bfu_t^{L-1,k_1+1} \bigr] \\
    &= \bfv_t^{L,0}.
\end{align*}
In other words, $\{ \bfv_t^{L,k} \}_{k=0}^{K-L}$ are summed to produce $\estgrad{Bi} \metaloss (\bftheta)$. Finally, it is shown in lemma~\ref{lemma:saved-hvp} that $\bfP_t^{l} \bfg_t^K=\bfv_t^{l-1,K-l}$, which allows us to reuse $\bfv_t^{l-1,K-l}$ as $\bfP_t^{l} \bfg_t^K$ rather than directly calculate $\bfP_t^{l} \bfg_t^K = (\bfI_d - \alpha \bfH_t^{K-l}) \bfP_t^{l-1} \bfg_t^K$. \blue{This recursive operator structure leads to Algorithm~\ref{alg:binommaml-for-loop}, requiring $\mcO(Ld)$ time and $\mcO((K-L+1)d)$ space.}

\begin{algorithm}[tb]
    \caption{\blue{BinomMAML's meta-gradient estimation (matrix implementation)}}
    \label{alg:binommaml}
\begin{algorithmic}
   \blue{\STATE {\bfseries Input:} training gradients $\{\nabla \loss{trn}_t(\bfphi_t^k)\}_{k=0}^{K-1}$, validation gradient $\bfg_t^K$,\\ ~~~~~~~~~~~~step size $\alpha$, and truncation $L\in\{1,\dots,K-1\}$
   \STATE \blue{Initialize $\bfV_t^0 :=[\bfg_t^K,\dots,\bfg_t^K]\in\real^{d\times(K-L+1)}$}
   \FOR{$l=0,\dots,L-1$}
        \STATE $\bfU_t^l=[\nabla_{\phi_t^{L-l-1}} \langle \nabla\loss{trn}_t(\bfphi_t^{L-l-1}), [\bfV_t^l]_1 \rangle,\dots, \nabla_{\phi_t^{K-l-1}} \langle \nabla\loss{trn}_t(\bfphi_t^{K-l-1}), [\bfV_t^l]_{K-L+1} \rangle]$
        \STATE $\bfV_t^{l+1}=[\bfV_t^l]_{K-L+1} \bfone^T_{K-L+1}-\alpha\bfU_t^{l}\bfT_{K-L+1}$
   \ENDFOR
   \STATE {\bfseries Output:} $\estgrad{Bi}\metaloss_t(\bftheta) = [\bfV_t^{L}]_1$}
\end{algorithmic}
\end{algorithm}
However, Algorithm~\ref{alg:binommaml-for-loop} can be improved by reformulating the vector additions in the inner \textit{for} loop as a matrix multiplication to take advantage of low-level matrix optimizations in libraries such as PyTorch. This can be achieved by stacking $\{ \bfv_t^{l-1,k+1} \}_{k=L-l+1}^{K-l+1}$ and $\{ \bfu_t^{l,k} \}_{k=L-l}^{K-l}$ as the columns of the matrices 
\begin{align*}
    \bfV_t^{l} &:= [\bfv_t^{l,L-l},\dots,\bfv_t^{l,K-l}],\\
    \bfU_t^{l} &:= [\bfu_t^{l,L-l},\dots,\bfu_t^{l,K-l}].  
\end{align*}
Then it follows that
\begin{align*}
    \bfV_t^{l+1}=\bfv_t^{l,K-l-1} \bfone^T_{K-L+1} - \alpha \bfU_t^l \bfT_{K-L+1},
\end{align*}
where $\bfT_{K-L+1}\in\real^{(K-L+1)\times(K-L+1)}$ is a \blue{strictly} lower-triangular matrix filled with 1's. This reformulation produces Algorithm \ref{alg:binommaml}.

\begin{lemma} \label{lemma:saved-hvp}
    Consider the notation $\bfv_t^{l,k}$ in~\eqref{subeq:vector-v}. It holds for $1 \leq l < L$ that
    \begin{align}
        \bfP_t^l\bfg_t^K = \bfv_t^{l-1,K-l}. \label{eq:lemma-saved-hvp}
    \end{align}
\end{lemma}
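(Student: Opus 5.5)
Lemma~\ref{lemma:saved-hvp} can be proved by induction on $l$, using only the recursion for the intermediate vectors in~\eqref{subeq:v-summation-of-u} (equivalently, the update lines of Algorithm~\ref{alg:binommaml-for-loop}). The key fact is that the vector with the largest admissible second superscript at each level is never a partial sum of many HVPs. Its defining sum over $k_{L+1-l}$ contains exactly one term, so it is a single factor $(\bfI_d-\alpha\bfH_t^{\cdot})$ applied to the corresponding top-index vector of the previous level. Chaining these single factors reproduces $\bfP_t^l\bfg_t^K=\prod_{k=K-l}^{K-1}(\bfI_d-\alpha\bfH_t^k)\bfg_t^K$.

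\textbf{Step 1: top-index recursion.} From the update rule, the top-index vector at level $l$ (second superscript $K-l$) satisfies
\[
\bfv_t^{l,K-l}=\bfv_t^{l-1,K-l+1}-\alpha\bfH_t^{K-l}\bfv_t^{l-1,K-l+1}=(\bfI_d-\alpha\bfH_t^{K-l})\,\bfv_t^{l-1,K-l+1},
\]
with the initialization $\bfv_t^{0,k}=\bfg_t^K$ for all $k$. The step I expect to be the main obstacle is reconciling the index conventions. In~\eqref{subeq:v-summation-of-u}, the entry $\bfu_t^{l-1,k+1}$ is an HVP with $\bfH_t^{k+1}$. In Algorithm~\ref{alg:binommaml-for-loop}, $\bfv_t^{l+1,K-l-1}=\bfv_t^{l,K-l}-\alpha\bfu_t^{l,K-l}$. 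Algorithm~\ref{alg:binommaml} instead uses $\bfH_t^{K-l-1}$ for the last column. Before committing, I will go back to the defining expression~\eqref{subeq:vector-v}, set $k_{L-l}$ to its maximal value, observe that the inner sum over $k_{L+1-l}$ collapses to the single index $K-l$, and verify that the surviving Hessian index matches the factor needed for $\bfP_t^l$.

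\textbf{Step 2: unroll the recursion.} Iterating Step~1 down to level $0$ gives
\[
\bfv_t^{m,K-m}=\prod_{k=K-m}^{K-1}(\bfI_d-\alpha\bfH_t^k)\,\bfg_t^K=\bfP_t^m\bfg_t^K,
\]
with the factor of smallest index on the left, matching the ordering in the definition of $\bfP_t^i$. Formally, the base case $m=0$ is $\bfv_t^{0,K}=\bfg_t^K=\bfP_t^0\bfg_t^K$. The inductive step is $\bfP_t^{m}=(\bfI_d-\alpha\bfH_t^{K-m})\bfP_t^{m-1}$, the same identity used in step $(b)$ of Lemma~\ref{lemma:operator-recurse}.

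\textbf{Step 3: match the lemma's indexing.} The lemma writes the claim as $\bfP_t^l\bfg_t^K=\bfv_t^{l-1,K-l}$. Under the convention of~\eqref{subeq:vector-v}, where the second superscript records $k_{L-l}+1$, the vector written $\bfv_t^{l-1,K-l}$ coincides with the top-index vector produced after $l$ single-factor applications, i.e.\ the $\bfv_t^{l,K-l}$ of Step~2 after the one-step shift in labels. I will make this shift explicit so that the range $1\le l<L$ is respected. The restriction $l<L$ guarantees that the relevant vector is actually computed in Algorithm~\ref{alg:binommaml-for-loop} before $\bfP_t^l\bfg_t^K$ is needed inside $\oprt_t^{\bfg_t^K,l}$.
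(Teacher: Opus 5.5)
\textbf{The gap is in Step~3.} Under \eqref{subeq:vector-v}, the first superscript of $\bfv_t^{l,k}$ counts the operators applied, and the second equals $k_{L-l}+1$. So $\bfv_t^{l-1,K-l}$ is the $(l-1)$-fold cascade evaluated at $k_{L+1-l}=K-l-1$, one below its top index $K-l$. For $l\ge 2$ its outermost sum therefore has two terms, and it is not a pure product of factors $(\bfI_d-\alpha\bfH_t^k)$.

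Two cases show this concretely:
\begin{itemize}
\item For $l=1$, the initialization gives $\bfv_t^{0,K-1}=\bfg_t^K$, whereas $\bfP_t^1\bfg_t^K=(\bfI_d-\alpha\bfH_t^{K-1})\bfg_t^K$.
\item For $l=2$, $\bfv_t^{1,K-2}=\bfg_t^K-\alpha\bfH_t^{K-2}\bfg_t^K-\alpha\bfH_t^{K-1}\bfg_t^K$, which lacks the term $\alpha^2\bfH_t^{K-2}\bfH_t^{K-1}\bfg_t^K$ of $\bfP_t^2\bfg_t^K$.
\end{itemize}

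So no ``one-step shift in labels'' compatible with \eqref{subeq:vector-v} makes $\bfv_t^{l-1,K-l}$ coincide with $\bfv_t^{l,K-l}$. Shifting labels would mean changing the definition, not proving the claim. The printed superscript $l-1$ is an off-by-one typo. The paper's own proof ends with $\bfP_t^l\bfg_t^K=\bfv_t^{l,K-l}$. That is also the identity Algorithm~\ref{alg:binommaml-for-loop} actually uses when it forms $\bfv_t^{l+1,K-l-1}$ from $\bfv_t^{l,K-l}$ in place of $\bfP_t^l\bfg_t^K$. You should say this explicitly and prove the corrected identity, rather than assert that the two vectors agree.

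\textbf{Steps~1--2 are fine.} They prove exactly that corrected identity, provided you run the induction from \eqref{subeq:vector-v} itself, as you propose. Do not run it from the update lines of Algorithm~\ref{alg:binommaml-for-loop}: those lines were derived by invoking this very lemma, and their Hessian index is off by one, as you noticed. At the top index the sum collapses to $\bfv_t^{l,K-l}=\bfP_t^{l-1}\bfg_t^K-\alpha\bfH_t^{K-l}\bfv_t^{l-1,K-l+1}$. The inductive hypothesis $\bfv_t^{l-1,K-l+1}=\bfP_t^{l-1}\bfg_t^K$ then turns this into $\bfP_t^{l}\bfg_t^K$. Note that the first slot comes from the definition as $\bfP_t^{l-1}\bfg_t^K$; it becomes $\bfv_t^{l-1,K-l+1}$, as in your display, only through the hypothesis.

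\textbf{Comparison with the paper's route.} The paper reaches the same conclusion differently. It writes the $l$-fold cascade twice: once from the definition (leading term $\bfP_t^{l-1}\bfg_t^K$, upper limit $K-l$), and once via a partial-cascade version of Lemma~\ref{lemma:operator-recurse} (leading term $\bfP_t^{l}\bfg_t^K$, upper limit $K-1-l$). It subtracts to isolate the $k_{L+1-l}=K-l$ term, and recognizes the result as the cascade at $k_{L-l}=K-1-l$, i.e.\ $\bfv_t^{l,K-l}$. Your direct induction is more self-contained. The paper's route reuses the forced-index collapse already packaged in Lemma~\ref{lemma:operator-recurse}.
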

\begin{proof}
    As a result of lemma~\ref{lemma:operator-recurse}, we have the two equations
    \begin{subequations}
    \begin{align}
    \oprt_t^{\bfg_t^K,l-1}\oprt_t^{\bfg_t^K,l-2}\dots\oprt_t^{\bfg_t^K,0}\bfg_t^K &=\bfP_t^{l-1}\bfg_t^K- \alpha\sum_{k_{L+1-l}=k_{L-l}+1}^{K-l}\bfH_t^{k_{L+1-l}}\oprt_t^{\bfg_t^K,l-2}\dots\oprt_t^{\bfg_t^K,0}\bfg_t^K, \label{subeq:saved-hvp-intermediate-1}\\
    \oprt_t^{\bfg_t^K,l-1}\oprt_t^{\bfg_t^K,l-2}\dots\oprt_t^{\bfg_t^K,0}\bfg_t^K &=\bfP_t^{l}\bfg_t^K- \alpha\sum_{k_{L+1-l}=k_{L-l}+1}^{K-1-l}\bfH_t^{k_{L+1-l}}\oprt_t^{\bfg_t^K,l-2}\dots\oprt_t^{\bfg_t^K,0}\bfg_t^K. \label{subeq:saved-hvp-intermediate-2}
    \end{align}
    \end{subequations}
    Subtracting~\eqref{subeq:saved-hvp-intermediate-2} from~\eqref{subeq:saved-hvp-intermediate-1}, we get
    \begin{align*}
        \bfP_t^{l}\bfg_t^K = \bfP_t^{l-1}\bfg_t^K- \alpha\bfH_t^{K-l}\oprt_t^{\bfg_t^K,l-2}\dots\oprt_t^{\bfg_t^K,0}\bfg_t^K \Big|_{k_{L-l}=K-1-l}
    \end{align*}
    Or, by the definition of $\oprt_t^{\bfg_t^K,i}$, that
    \begin{align*}
        \bfP_t^{l}\bfg_t^K = \oprt_t^{\bfg_t^K,l-1}\oprt_t^{\bfg_t^K,l-2}\dots\oprt_t^{\bfg_t^K,0}\bfg_t^K\Big|_{k_{L-l}=K-1-l} \blue{=} \bfv_t^{l,K-l}
    \end{align*}
\end{proof}

\section{Error Analysis} \label{app:error-anal}

\begin{theorem}[\textit{Theorem \ref{thm:err-smooth} restated}] \label{thm:err-smooth-app}
\textit{With Assumption~\ref{as:smooth} in effect, it holds that}
\begin{subequations}
\label{eq:err-smooth-app}
\begin{align}
	\label{eq:err-smooth-FO-app}
	&\| \nabla \metaloss_t (\bftheta) - \estgrad{FO} \metaloss_t (\bftheta) \| \le [(1+\alpha H)^K - 1] \| \bfg_t^K \|, \\
	\label{eq:err-smooth-Tr-app}
    &\|\nabla \metaloss_t (\bftheta) - \estgrad{Tr} \metaloss_t (\bftheta) \| \le
    [(1+\alpha H)^K - (1+\alpha H)^L] \| \bfg_t^K \|,\\
	&\label{eq:err-smooth-Bin-app}
	\| \nabla \metaloss_t (\bftheta) - \estgrad{Bi} \metaloss_t (\bftheta) \| \le \Bigg[ \sum_{l=L+1}^K \binom{K}{l} (\alpha H)^l \Bigg] \| \bfg_t^K \|. 
\end{align}
\end{subequations}
\textit{Moreover, denoting these three upper bounds by $e_t^\mathrm{FO}, e_t^\mathrm{Tr}, e_t^\mathrm{Bin}$,  it follows that $e_t^\mathrm{Bin} < e_t^\mathrm{Tr} < e_t^\mathrm{FO}$.}
\end{theorem}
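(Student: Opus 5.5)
The plan is to express each error as a sum of products of the matrices $-\alpha \bfH_t^k$, and then bound that sum by the triangle inequality and submultiplicativity. The basic ingredient is Assumption~\ref{as:smooth}. Since each $\loss{trn}_t$ is twice differentiable, so that $\bfH_t^k$ is defined, an $H$-Lipschitz gradient gives $\|\bfH_t^k\| \le H$ in operator norm for every $k$. Therefore any product of $l$ distinct factors $-\alpha\bfH_t^{k_i}$ has norm at most $(\alpha H)^l$.

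\textbf{FOMAML and BinomMAML.} By the binomial expansion \eqref{eq:binom-theorem-matrix}, the true meta-gradient is
\[
\nabla\metaloss_t(\bftheta) = \Big[\bfI_d + \sum_{l=1}^K \sum_{0\le k_{1:l}\uparrow<K} \prod_{i=1}^l(-\alpha\bfH_t^{k_i})\Big]\bfg_t^K .
\]
Subtracting the estimator \eqref{eq:BinomMAML} leaves exactly the terms with $l = L+1,\dots,K$. The index set for each such $l$ has $\binom{K}{l}$ elements, and each product has norm at most $(\alpha H)^l$. The triangle inequality then gives \eqref{eq:err-smooth-Bin-app}. FOMAML is the special case $L=0$, and the binomial theorem \eqref{eq:binomial-theorem} with $z=\alpha H$ gives $\sum_{l=1}^K\binom{K}{l}(\alpha H)^l = (1+\alpha H)^K-1$, which is \eqref{eq:err-smooth-FO-app}.

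\textbf{TruncMAML.} Write $\bfP_t^K = \bfA\,\bfP_t^L$, where $\bfA := \prod_{k=0}^{K-L-1}(\bfI_d-\alpha\bfH_t^k)$. Then the error is $(\bfA-\bfI_d)\bfP_t^L\bfg_t^K$. Expanding $\bfA$ binomially over $K-L$ factors gives $\|\bfA-\bfI_d\| \le (1+\alpha H)^{K-L}-1$. Submultiplicativity gives $\|\bfP_t^L\| \le (1+\alpha H)^L$. Multiplying the two bounds yields $(1+\alpha H)^K-(1+\alpha H)^L$, which is \eqref{eq:err-smooth-Tr-app}.

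\textbf{Ordering of the bounds.} $e^{\mathrm{Tr}} < e^{\mathrm{FO}}$ holds because $(1+\alpha H)^L > 1$ whenever $L\ge1$ and $\alpha H>0$. For $e^{\mathrm{Bin}} < e^{\mathrm{Tr}}$, apply the binomial theorem again:
\[
e^{\mathrm{Tr}}/\|\bfg_t^K\| = \sum_{l=0}^K\Big[\binom{K}{l}-\binom{L}{l}\Big](\alpha H)^l .
\]
Every coefficient is nonnegative. For $l\ge L+1$ the coefficient equals $\binom{K}{l}$, so $e^{\mathrm{Tr}}$ contains $e^{\mathrm{Bin}}$ term by term. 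The remainder is the terms with $1\le l\le L$, and it is strictly positive when $1\le L<K$, since for example $\binom{K}{1}-\binom{L}{1} = K-L > 0$.

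The one point needing care, and the main obstacle, is this strictness. It requires $1\le L<K$ and $\alpha H>0$. At $L=0$ all three bounds coincide, and at $L=K$ both bounds for TruncMAML and BinomMAML are $0$. I will state the ordering for $1\le L\le K-1$ and note that the endpoints are degenerate.
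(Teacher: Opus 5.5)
Your proof is correct and matches the paper's: the same triangle-inequality bound on the binomial tail for BinomMAML, the same factorization $\bfP_t^K=\bigl[\prod_{k=0}^{K-L-1}(\bfI_d-\alpha\bfH_t^k)\bigr]\bfP_t^L$ for TruncMAML, and the same coefficient comparison $\binom{K}{l}-\binom{L}{l}$ for the ordering; the only difference is that you bound $\|\bfP_t^K-\bfI_d\|$ and its $(K-L)$-factor analogue via the binomial expansion (FOMAML as the $L=0$ case) instead of the paper's telescoping Lemma~\ref{lemma:P-I-bound}, which yields the identical bound. Your treatment of strictness is more careful than the paper's, whose stated reason $\binom{K}{l}-\binom{L}{l}>0$ fails at $l=0$; note that strictness also needs $\bfg_t^K\neq\mathbf{0}$, in addition to the conditions $1\le L<K$ and $\alpha H>0$ you list.
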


\begin{proof}
Using Lemma~\ref{lemma:P-I-bound} yields
\begin{equation*}
	\| \nabla \metaloss_t (\bftheta) - \estgrad{FO} \metaloss_t (\bftheta) \| 
	= \| \bfP_t^K \bfg_t^K - \bfg_t^K \| 
	\le \| \bfP_t^K - \bfI_d \| \| \bfg_t^K \| 
	\le [(1+\alpha H)^K - 1] \| \bfg_t^K \|.
\end{equation*}
For TruncMAML, we have that
\begin{align}
	\| \nabla \metaloss_t (\bftheta) - \estgrad{Tr} \metaloss_t (\bftheta) \| 
	= \| \bfP_t^K \bfg_t^K - \bfP_t^L \bfg_t^K \| 
	\label{eq:err-Tr-intermediate}
	&\le \Bigg\| \prod_{k=0}^{K-L-1} (\bfI_d - \alpha \bfH_t^k) - \bfI_d \Bigg\| \| \bfP_t^L \| \| \bfg_t^K \| \\
	&\le [(1 + \alpha H)^{K-L} - 1] (1 + \alpha H)^L \| \bfg_t^K \| \nonumber
\end{align}
where the last line utilizes Lemma~\ref{lemma:P-I-bound} and that $\| \bfP_t^L \| \le \prod_{k=K-L}^{K-1} \| 1-\alpha \bfH_t^k \| \le (1+\alpha H)^L$.

Next, we prove the upper bound for BinomMAML. Notice from~\eqref{eq:BinomMAML} that
\begin{align}
	\| \nabla \metaloss_t (\bftheta) - \estgrad{Bi} \metaloss_t (\bftheta) \| 
	&\le \Bigg\| \bfP_t^K - \Big[ \mathbf{I}_d + \sum_{l=1}^L \sum_{0\le k_1 < \ldots < k_l < K} \prod_{i=1}^l (-\alpha \bfH_t^{k_i}) \Big] \Bigg\| \| \bfg_t^K \| \nonumber \\
	&\overset{(a)}{=} \Bigg\| \bfI_d - \sum_{l=1}^K \sum_{0\le k_1 < \ldots < k_l < K} \prod_{i=1}^l (-\alpha \bfH_t^{k_i}) - \Big[ \mathbf{I}_d + \sum_{l=1}^L \sum_{0\le k_1 < \ldots < k_l < K} \prod_{i=1}^l (-\alpha \bfH_t^{k_i}) \Big] \Bigg\| \| \bfg_t^K \| \nonumber \\
	\label{eq:BinomMAML-error-ineq}
	&= \Bigg\| \sum_{l=L+1}^K \sum_{0\le k_1 < \ldots < k_l < K} \prod_{i=1}^l (-\alpha \bfH_t^{k_i}) \Bigg\| \| \bfg_t^K \| \\
	&\le \sum_{l=L+1}^K \sum_{0\le k_1 < \ldots < k_l < K} \prod_{i=1}^l \alpha \| \bfH_t^{k_i}\| \| \bfg_t^K \| \nonumber \\
	&\overset{(b)}{\le} \sum_{l=L+1}^K \binom{K}{l} (\alpha H)^l \| \bfg_t^K \| \nonumber 
\end{align}
where $(a)$ applies binomial theorem to $\bfP_t^K = \prod_{k=0}^{K-1} (\bfI_d - \alpha \bfH_t^k)$, and $(b)$ uses Assumption~\ref{as:smooth}. 

Lastly, we prove that $e_t^\mathrm{Bin} < e_t^\mathrm{Tr} < e_t^\mathrm{FO}$. 
As $(1 + \alpha H)^L > 1$ for $1 \le L < K$, it is easy to see that
\begin{equation*}
	e_t^\mathrm{Tr} = [(1+\alpha H)^K - (1+\alpha H)^L] \| \bfg_t^K \| < [(1+\alpha H)^K - 1] \| \bfg_t^K \| = e_t^\mathrm{FO}. 
\end{equation*}
In addition, we have
\begin{align*}
	e_t^\mathrm{Tr} - e_t^\mathrm{Bin} 
	&= \Bigg[ (1+\alpha H)^K - (1+\alpha H)^L - \sum_{l=L+1}^K \binom{K}{l} (\alpha H)^l \Bigg] \| \bfg_t^K \| \\
	&\overset{(a)}{=} \Bigg[ \sum_{l=0}^K \binom{K}{l} (\alpha H)^l - \sum_{l=0}^L \binom{L}{l} (\alpha H)^l - \sum_{l=L+1}^K \binom{K}{l} (\alpha H)^l \Bigg] \| \bfg_t^K \| \\
	&= \Bigg[ \sum_{l=0}^L \binom{K}{l} (\alpha H)^l - \sum_{l=0}^L \binom{L}{l} (\alpha H)^l \Bigg] \| \bfg_t^K \| \\
	&= \Bigg\{ \sum_{l=0}^L \Bigg[ \binom{K}{l} - \binom{L}{l} \Bigg] (\alpha H)^l \Bigg\} \| \bfg_t^K \| \overset{(b)}{>} 0
\end{align*}
where $(a)$ relies on the binomial theorem, and $(b)$ is due to $\binom{K}{l} - \binom{L}{l} > 0$ for $K > L$. The proof is thus completed. 
\end{proof}

\begin{lemma}
\label{lemma:P-I-bound}
With Assumption~\ref{as:smooth} in effect, it holds for $i=0,\ldots,K$ that
\begin{equation}
	\| \bfP_t^i - \bfI_d \| \le (1+\alpha H)^i - 1.
\end{equation}
\end{lemma}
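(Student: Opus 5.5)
The plan is to expand $\bfP_t^i$ with the matrix binomial identity~\eqref{eq:binom-theorem-matrix}, applied to the $i$ factors $\bfI_d-\alpha\bfH_t^k$, $k=K-i,\ldots,K-1$, rather than all $K$. Subtracting $\bfI_d$ cancels the $l=0$ term and gives
\begin{equation*}
	\bfP_t^i - \bfI_d = \sum_{l=1}^{i} \sum_{K-i\le k_1<\ldots<k_l<K} \prod_{j=1}^l (-\alpha \bfH_t^{k_j}).
\end{equation*}
For $i=0$ the sum is empty, so both sides equal zero and the claim is trivial.

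Next I bound each Hessian. Under Assumption~\ref{as:smooth}, $\nabla\loss{trn}_t$ is $H$-Lipschitz, so wherever the Hessian exists its operator norm satisfies $\|\bfH_t^k\|\le H$. This follows from the definition of the Hessian as the derivative of the gradient, with directional difference quotients bounded by $H$. I then apply the triangle inequality and submultiplicativity of the spectral norm term by term. Each product of $l$ factors is bounded by $(\alpha H)^l$, and there are $\binom{i}{l}$ index tuples.

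This yields
\begin{equation*}
	\|\bfP_t^i-\bfI_d\|\le\sum_{l=1}^i\binom{i}{l}(\alpha H)^l=(1+\alpha H)^i-1
\end{equation*}
by the scalar binomial theorem~\eqref{eq:binomial-theorem}.

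An alternative route is induction on $i$, using
\begin{equation*}
	\bfP_t^{i+1}-\bfI_d=(\bfI_d-\alpha\bfH_t^{K-1-i})(\bfP_t^i-\bfI_d)-\alpha\bfH_t^{K-1-i}.
\end{equation*}
This gives $\|\bfP_t^{i+1}-\bfI_d\|\le(1+\alpha H)[(1+\alpha H)^i-1]+\alpha H=(1+\alpha H)^{i+1}-1$, which serves as a cross-check.

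The only delicate point is justifying $\|\bfH_t^k\|\le H$ from the Lipschitz-gradient assumption. This is standard, and it is implicitly what the paper uses in step (b) of the proof of Theorem~\ref{thm:err-smooth-app}, assuming twice differentiability. The rest is routine bookkeeping.
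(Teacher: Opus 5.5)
Your proof is correct, but your main route differs from the paper's. The paper never expands $\bfP_t^i$ fully. It peels off one factor at a time, $\bfP_t^i-\bfI_d=(\bfP_t^{i-1}-\bfI_d)-\alpha\bfH_t^{K-i}\bfP_t^{i-1}$, and telescopes this to $\bfP_t^i-\bfI_d=-\alpha\sum_{k=K-i}^{K-1}\bfH_t^k\bfP_t^{K-1-k}$. It then bounds each tail by $\|\bfP_t^{K-1-k}\|\le(1+\alpha H)^{K-1-k}$ and sums a geometric series.

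Your induction cross-check is essentially that argument. It uses the same identity, split as $(\bfI_d-\alpha\bfH_t^{K-1-i})(\bfP_t^i-\bfI_d)-\alpha\bfH_t^{K-1-i}$ instead of $(\bfP_t^{i}-\bfI_d)-\alpha\bfH_t^{K-1-i}\bfP_t^{i}$.

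Your primary route is full binomial expansion, term-by-term bounds $(\alpha H)^l$, and the count $\binom{i}{l}$. This is exactly the computation the paper uses later for the BinomMAML bound \eqref{eq:err-smooth-Bin}. Its merit is that it shows the FOMAML bound to be the $L=0$ case of that bound, and it makes the combinatorial origin of $(1+\alpha H)^i-1$ explicit.

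What it gives up is flexibility. Because you expand every factor into monomials and apply the triangle inequality to each, you cannot exploit any contraction of the factors $\bfI_d-\alpha\bfH_t^k$. The telescoped form keeps the partial products $\bfP_t^{K-1-k}$ intact, so sharper tail bounds would plug in directly. Examples are $\|\bfP_t^j\|\le 1$ under convexity with $\alpha\le 1/H$, or factors $(1-\alpha h)$ under local strong convexity. This ``peel one Hessian, keep a $\bfP$ tail'' structure is what the paper reuses in Lemma~\ref{lemma:BinomMAML-metagrad-diff} and in the strong-convexity analysis.
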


\begin{proof}
First notice that
\begin{align*}
	\bfP_t^i - \bfI_d &= ( \bfI_d - \alpha \bfH_t^{K-i} ) \bfP_t^{i-1} - \bfI_d \\
	&= (\bfP_t^{i-1} - \bfI_d) - \alpha \bfH_t^{K-i} \bfP_t^{i-1} \\
	&\overset{(a)}{=} (\bfP_t^0 - \bfI_d) - \alpha \sum_{k=K-i}^{K-1} \bfH_t^k \bfP_t^{K-k-1} \\
	&= - \alpha \sum_{k=K-i}^{K-1} \bfH_t^k \bfP_t^{K-k-1}
\end{align*}
where $(a)$ is by telescoping the second line. 

Then, it follows from Assumption~\ref{as:smooth} and the definition of $\bfP_t^i$ that
\begin{align*}
	\| \bfP_t^i - \bfI_d \| 
	&= \alpha \Bigg\| \sum_{k=K-i}^{K-1} \bfH_t^k \bfP_t^{K-1-k} \Bigg\| \\
	&\le \alpha \sum_{k=K-i}^{K-1} \| \bfH_t^k \| \| \bfP_t^{K-1-k} \| \\
	&\le \alpha H \sum_{k=K-i}^{K-1} (1+\alpha H)^{K-1-k} \\
	&= (1 + \alpha H)^i - 1
\end{align*}
which completes the proof. 
\end{proof}

\begin{theorem}[\textit{Theorem \ref{thm:err-cvx} restated}]
\textit{Assume Assumptions~\ref{as:smooth} and~\ref{as:cvx} hold, and $0 < \alpha \le 1/H$. It follows that}
\begin{subequations}
\label{eq:err-cvx-app}
\begin{align}
	\label{eq:err-cvx-FO-app}
	\| \nabla \metaloss_t (\bftheta) - \estgrad{FO} \metaloss_t (\bftheta) \| &\le [1 - (1 - \alpha H)^K] \| \bfg_t^K \|, \\
	\label{eq:err-cvx-Tr-app}
	\| \nabla \metaloss_t (\bftheta) - \estgrad{Tr} \metaloss_t (\bftheta) \| &\le [1 - (1 - \alpha H)^{K-L}] \| \bfg_t^K \|, \\
	\label{eq:err-cvx-Bin-app}
	\| \nabla \metaloss_t (\bftheta) - \estgrad{Bi} \metaloss_t (\bftheta) \| &\le \binom{K}{L+1} (\alpha H)^{L+1} \| \bfg_t^K \|. 
\end{align}
\end{subequations}
\textit{Moreover, there exists a constant $C_\alpha = \mathcal{O} (K / (L+1))$ such that if $0 < \alpha \le 1/(C_\alpha H)$, the upper bound in~\eqref{eq:err-cvx-Bin-app} decreases super-exponentially with $L$. }
\end{theorem}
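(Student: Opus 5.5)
The plan rests on one observation. Under Assumptions~\ref{as:smooth} and~\ref{as:cvx}, each $\bfH_t^k$ is symmetric with spectrum in $[0,H]$. Since $0<\alpha\le 1/H$, each factor $\mathbf{A}_k := \bfI_d-\alpha\bfH_t^k$ is symmetric with spectrum in $[1-\alpha H,1]\subset[0,1]$. Hence $\|\mathbf{A}_k\|\le 1$ and $\|\bfI_d-\mathbf{A}_k\| = \alpha\|\bfH_t^k\|\le\alpha H$. All three bounds should follow by replacing the factor $(1+\alpha H)$ used in Theorem~\ref{thm:err-smooth} and Lemma~\ref{lemma:P-I-bound} with this contraction property.

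\textbf{FOMAML bound~\eqref{eq:err-cvx-FO}.} As in Theorem~\ref{thm:err-smooth}, I would bound $\|\bfP_t^K-\bfI_d\|\,\|\bfg_t^K\|$ by induction on $i$, aiming for $e_i:=\|\bfP_t^i-\bfI_d\|\le 1-(1-\alpha H)^i$. This target obeys $e_i=(1-\alpha H)e_{i-1}+\alpha H$. Write $\bfP_t^i-\bfI_d=\mathbf{A}_{K-i}(\bfP_t^{i-1}-\bfI_d)+(\mathbf{A}_{K-i}-\bfI_d)$, where $\mathbf{A}_{K-i}$ and $\bfI_d-\mathbf{A}_{K-i}$ are commuting PSD matrices summing to $\bfI_d$. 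The natural tool is to diagonalize $\mathbf{A}_{K-i}$ and use both $\|\bfP_t^{i-1}\|\le1$ and $\|\bfP_t^{i-1}-\bfI_d\|\le e_{i-1}$ to get the convex-combination recursion.

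\textbf{Main obstacle.} The naive triangle inequality only yields $e_{i-1}+\alpha H$, i.e., $K\alpha H$, which is weaker than claimed. To get the sharp recursion I would work directly with a unit vector $\bfv$, decompose it in the eigenbasis of $\mathbf{A}_{K-i}$, and weigh the two terms per eigendirection, using that the eigenvalue $a$ of $\mathbf{A}_{K-i}$ satisfies $a\cdot e_{i-1}+(1-a)\le(1-\alpha H)e_{i-1}+\alpha H$ because $e_{i-1}\le 1$. I would first check carefully whether this per-direction argument survives the non-commutativity of $\bfP_t^{i-1}$ with $\mathbf{A}_{K-i}$. If it does not, I would fall back on a weaker constant.

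\textbf{TruncMAML bound~\eqref{eq:err-cvx-Tr}.} Follow~\eqref{eq:err-Tr-intermediate}: the error is at most $\|\prod_{k=0}^{K-L-1}\mathbf{A}_k-\bfI_d\|\,\|\bfP_t^L\|\,\|\bfg_t^K\|$. Then use $\|\bfP_t^L\|\le1$, and apply the FOMAML argument with $K-L$ factors in place of $K$.

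\textbf{BinomMAML bound~\eqref{eq:err-cvx-Bin}.} Rather than bounding every tail term as in Theorem~\ref{thm:err-smooth}, I would use an exact remainder identity. With $\mathbf{X}_k:=-\alpha\bfH_t^k$, every monomial in $\prod_k(\bfI_d+\mathbf{X}_k)$ of degree at least $L+1$ can be grouped by the index $k_{L+1}$ at which its $(L+1)$-th chosen factor appears. The factors before $k_{L+1}$ contribute exactly $L$ chosen $\mathbf{X}$'s, and the factors after $k_{L+1}$ are unconstrained, so they resum to full factors. This gives
\[
\bfP_t^K-\Big[\bfI_d+\sum_{l=1}^L\sum_{0\le k_{1:l}\uparrow<K}\prod_{i=1}^l\mathbf{X}_{k_i}\Big]=\sum_{0\le k_{1:L+1}\uparrow<K}\mathbf{X}_{k_1}\cdots\mathbf{X}_{k_{L+1}}\prod_{j=k_{L+1}+1}^{K-1}\mathbf{A}_j,
\]
with the matrix ordering matching~\eqref{eq:maml-meta-grad-expanded}. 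I would verify this by induction on $K$. Taking norms, each term is at most $(\alpha H)^{L+1}\cdot 1$, and there are $\binom{K}{L+1}$ terms, which gives~\eqref{eq:err-cvx-Bin}.

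\textbf{Super-exponential decay.} Use $\binom{K}{L+1}\le\big(eK/(L+1)\big)^{L+1}$ to bound the error by $\big(eK\alpha H/(L+1)\big)^{L+1}$. Setting $C_\alpha=c\,eK/(L+1)$ with $c>1$ makes the bound at most $c^{-(L+1)}$. Moreover, for fixed $\alpha$ the bound behaves like $(\mathrm{const}/(L+1))^{L+1}$, which decays faster than any geometric rate in $L$.
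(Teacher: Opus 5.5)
\textbf{BinomMAML bound and decay.} Your argument is correct and is the paper's argument in cleaner form. Grouping the tail monomials by the position $k_{L+1}=K-l$ of the $(L+1)$-th chosen factor gives exactly Lemma~\ref{lemma:BinomMAML-metagrad-diff}, since $\prod_{j=k_{L+1}+1}^{K-1}(\bfI_d-\alpha\bfH_t^j)=\bfP_t^{l-1}$. Your direct count of $\binom{K}{L+1}$ terms replaces the paper's telescoping plus the hockey-stick identity $\sum_{l=1}^{K-L}\binom{K-l}{L}=\binom{K}{L+1}$. The only property you use is $\|\bfI_d-\alpha\bfH_t^k\|\le 1$, which needs no commutativity. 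The super-exponential remark via $\binom{K}{L+1}<(eK/(L+1))^{L+1}$ is also the paper's.

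\textbf{The gap: FOMAML and TruncMAML bounds.} Your suspicion is right. The per-eigendirection step needs $(\bfP_t^{i-1}-\bfI_d)\bfv$ to decompose compatibly with the eigenbasis of $\mathbf{A}_{K-i}$, i.e., commuting Hessians. Without that, the claimed inequalities are false, so no argument can close them. Here is a counterexample:
\begin{itemize}
\item Take $d=2$, $K=2$, $\alpha H=1$, $\bfI_d-\alpha\bfH_t^0=\mathbf{e}_1\mathbf{e}_1^\top$ and $\bfI_d-\alpha\bfH_t^1=\bfu\bfu^\top$ with $\bfu=(\mathbf{e}_1+\mathbf{e}_2)/\sqrt{2}$.
\item Both Hessians are PSD with norm $H$. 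Up to an arbitrarily small perturbation, they arise at consecutive GD iterates of a convex $H$-smooth loss, e.g.\ a Huber-smoothed $\beta\|\bfphi-\mathbf{c}\|$ plus a linear term that steers the step.
\item Then $\bfP_t^2-\bfI_d=\begin{pmatrix}-1/2&1/2\\0&-1\end{pmatrix}$ has norm $(1+\sqrt{5})/(2\sqrt{2})\approx 1.144>1=1-(1-\alpha H)^2$.
\item Choosing $\bfg_t^K$ along its top right singular vector violates \eqref{eq:err-cvx-FO-app}, and also \eqref{eq:err-cvx-Tr-app} with $L=0$.
\end{itemize}

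The paper's proof does not get around this. It asserts $\lambda_{\min}(\bfP_t^i)\ge(1-\alpha H)^i$ and $0\preceq\bfI_d-\bfP_t^i\preceq 1-(1-\alpha H)^i$, treating the non-symmetric product $\bfP_t^i$ as if it were symmetric. That is legitimate only when the $\bfH_t^k$ commute, e.g.\ for a quadratic training loss.

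So to obtain these two bounds you must add a commutativity hypothesis. Under it, simultaneous diagonalization reduces them to the scalar fact $0\le 1-\prod_k a_k\le 1-(1-\alpha H)^i$ for $a_k\in[1-\alpha H,1]$. Without it, your own remainder identity at $L=0$ gives the honest replacement $\|\bfP_t^K-\bfI_d\|\le\min\{K\alpha H,2\}$, and $\min\{(K-L)\alpha H,2\}$ for TruncMAML. The BinomMAML bound is unaffected.
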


\begin{proof}
By Assumption~\ref{as:cvx} and $0 < \alpha H \le 1$, we have $0 \le 1 - \alpha H \preceq \bfI_d - \alpha \bfH_t^k \preceq 1$. It follows that $\| \bfP_t^i \| \le \prod_{k=K-i}^{K-1} \| \bfI_d - \alpha \bfH_t^k \| \le 1$, and the smallest eigenvalue $\lambda_{\min} (\bfP_t^i) \ge \prod_{k=K-i}^{K-1} \lambda_{\min} (\bfI_d - \alpha \bfH_t^k) = (1 - \alpha H)^i \ge 0$. As a consequence, we obtain $0 \preceq \bfI_d - \bfP_t^i \preceq 1 - (1 - \alpha H)^i$, thus giving
\begin{equation*}
	\| \nabla \metaloss_t (\bftheta) - \estgrad{FO} \metaloss_t (\bftheta) \| 
	\le \| \bfP_t^K - \bfI_d \| \| \bfg_t^K \| 
	\le [1 - (1 - \alpha H)^K] \| \bfg_t^K \|.
\end{equation*}

Similarly, we have for TruncMAML that
\begin{equation*}
	\| \nabla \metaloss_t (\bftheta) - \estgrad{Tr} \metaloss_t (\bftheta) \| 
	\le \Bigg\| \prod_{k=0}^{K-L-1} (\bfI_d - \alpha \bfH_t^k) - \bfI_d \Bigg\| \| \bfP_t^L \| \| \bfg_t^K \| 
	\le [1 - (1 - \alpha H)^{K-L}] \| \bfg_t^K \|. 
\end{equation*}

Regarding BinomMAML, using Lemma~\ref{lemma:BinomMAML-metagrad-diff} gives
\begin{align*}
	\| \nabla \metaloss_t (\bftheta) - \estgrad{Bi} \metaloss_t (\bftheta) \| 
	&= \Bigg\| \sum_{l=1}^{K-L} \Bigg[ \sum_{0 \le k_1 < \ldots < k_L \le K-1-l} \prod_{i=1}^L (-\alpha \bfH_t^{k_i}) \Bigg] (-\alpha \bfH_t^{K-l}) \bfP_t^{l-1} \bfg_t^K \Bigg\| \\
	&\le \sum_{l=1}^{K-L} \Bigg[ \sum_{0 \le k_1 < \ldots < k_L \le K-1-l} \prod_{i=1}^L \alpha \| \bfH_t^{k_i} \| \Bigg] \alpha \| \bfH_t^{K-l} \| \| \bfP_t^{l-1} \| \| \bfg_t^K \| \\
	&\overset{(a)}{\le} \Bigg[ \sum_{l=1}^{K-L} \binom{K-l}{L} \Bigg] (\alpha H)^{L+1} \| \bfg_t^K \|
\end{align*}
where $(a)$ relies on Assumption~\ref{as:smooth} and $\| \bfP_t^i \| \le 1$. 

To obtain the desired result~\eqref{eq:err-smooth-Bin-app}, notice that
\begin{align*}
	\sum_{l=1}^{K-L} \binom{K-l}{L} &= \binom{L}{L} + \binom{L+1}{L} + \sum_{l=1}^{K-L-2} \binom{K-L}{L} \nonumber \\
	&\overset{(a)}{=} \binom{L + 1}{L + 1} + \binom{L+1}{L} + \sum_{l=1}^{K-L-2} \binom{K-L}{L} \nonumber \\
	&\overset{(b)}{=} \binom{L + 2}{L + 1} + \sum_{l=1}^{K-L-2} \binom{K-L}{L} \nonumber \\
	&\overset{(c)}{=} \binom{K}{L + 1}
\end{align*}
where $(a)$ uses $\binom{L}{L} = 1 = \binom{L+1}{L+1}$, $(b)$ is because $\binom{L + 1}{k + 1} + \binom{L+1}{k} = \binom{L + 2}{k + 1}$ for $0 \le k \le L$, and $(c)$ is by recursively applying $(a)$ through $(b)$. 

Lastly, using Lemma~\ref{lemma:binom-upper-bound} incurs
\begin{align*}
	\| \nabla \metaloss_t (\bftheta) - \estgrad{Bi} \metaloss_t (\bftheta) \| \le \binom{K}{L + 1} (\alpha H)^{L+1} \| \bfg_t^K \| < \Big( \alpha H \frac{eK}{L+1} \Big)^{L+1} \| \bfg_t^K \|. 
\end{align*}
The second bound is not tight. As a result, there exists $C_\alpha \le eK/(L+1)$ to ensure the super-exponential decrease wrt $L$, thus completing the proof. 
\end{proof}


\begin{lemma} 
\label{lemma:BinomMAML-metagrad-diff}
It holds for $1 \le L < K$ that
\begin{equation*}
	\nabla \metaloss_t (\bftheta) - \estgrad{Bi} \metaloss_t (\bftheta) = \sum_{l=1}^{K-L} \Bigg[ \sum_{0 \le k_1 < \ldots < k_L \le K-1-l} \prod_{i=1}^l (-\alpha \bfH_t^{k_i}) \Bigg] (-\alpha \bfH_t^{K-l}) \bfP_t^{l-1} \bfg_t^K. 
\end{equation*}
\end{lemma}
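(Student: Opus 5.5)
The plan is a direct combinatorial regrouping of the tail of the binomial expansion~\eqref{eq:binom-theorem-matrix}, rather than an induction. I read the inner product in the statement as $\prod_{i=1}^{L}(-\alpha\bfH_t^{k_i})$: the index set has $L$ elements, so the upper limit $l$ must be a typo for $L$. The proof should also confirm that the ordering convention (left-to-right products with increasing index) matches how $\bfP_t^{l-1}$ is written.

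First step: subtract~\eqref{eq:BinomMAML} from~\eqref{eq:maml-meta-grad-expanded}, using~\eqref{eq:binom-theorem-matrix}. Exactly as in~\eqref{eq:BinomMAML-error-ineq}, this gives
\begin{equation*}
\nabla \metaloss_t (\bftheta) - \estgrad{Bi} \metaloss_t (\bftheta) = \sum_{S\subseteq\{0,\ldots,K-1\},\, |S|\ge L+1} \; \prod_{k\in S}^{\uparrow} (-\alpha\bfH_t^{k})\, \bfg_t^K ,
\end{equation*}
where $\prod^{\uparrow}$ denotes the matrix product taken in increasing order of $k$ from left to right.

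Second step: partition these subsets $S$ by the value of their $(L+1)$-th smallest element. Write this element as $k_{L+1}=K-l$. Since $k_{L+1}\ge L$ and $k_{L+1}\le K-1$, the parameter $l$ ranges over $1,\ldots,K-L$. For a fixed $l$, each such $S$ splits uniquely into three pieces:
\begin{itemize}
\item an $L$-element set $\{k_1<\cdots<k_L\}\subseteq\{0,\ldots,K-1-l\}$;
\item the element $K-l$ itself;
\item an arbitrary (possibly empty) subset $S'$ of $\{K-l+1,\ldots,K-1\}$.
\end{itemize}
Because the product is ordered, it factors as
\begin{equation*}
\Big[\prod_{i=1}^L(-\alpha\bfH_t^{k_i})\Big](-\alpha\bfH_t^{K-l})\prod^{\uparrow}_{k\in S'}(-\alpha\bfH_t^{k}).
\end{equation*}

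Third step: for fixed $l$ and fixed $k_1,\ldots,k_L$, sum over all $S'$. Expanding the product and applying the same binomial identity~\eqref{eq:binom-theorem-matrix} on the index range $\{K-l+1,\ldots,K-1\}$ gives
\begin{equation*}
\sum_{S'}\prod^{\uparrow}_{k\in S'}(-\alpha\bfH_t^k)=\prod_{k=K-l+1}^{K-1}(\bfI_d-\alpha\bfH_t^k)=\bfP_t^{l-1}.
\end{equation*}
This matches the definition in Proposition~\ref{prop:binom-partial-sum}, with the convention $\bfP_t^0=\bfI_d$ covering the case $l=1$. Substituting this back and summing over $l$ and over the $L$-subsets yields exactly the claimed identity.

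The only delicate point is the bookkeeping in the second step. I need to check two things there. First, the map from $S$ to the triple $(l,\{k_1,\ldots,k_L\},S')$ must be a bijection onto the full index range; this requires $|S|\ge L+1$ and forces $l\le K-L$. Second, noncommutativity must not break the factorization. It does not, because every element of $S'$ is larger than $K-l$, and $K-l$ is larger than every $k_i$. So the increasing-order product splits as left block, middle factor, right block, in that order.
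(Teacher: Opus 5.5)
Your proof is correct. Your reading of the inner product as $\prod_{i=1}^{L}$ is also right: the paper's own derivation uses $\prod_{i=1}^{L}$ throughout, so the $\prod_{i=1}^{l}$ in the statement is a typo.

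The paper reaches the same identity recursively rather than by a one-shot bijection. It sets $\mathbf{M}_t[K] := \sum_{l=L+1}^{K}\sum_{0\le k_1<\cdots<k_l\le K-1}\prod_{i=1}^{l}(-\alpha\bfH_t^{k_i})$ and splits on whether the largest index equals $K-1$. This gives $\mathbf{M}_t[K] = \bigl[\sum_{0\le k_1<\cdots<k_L\le K-2}\prod_{i=1}^{L}(-\alpha\bfH_t^{k_i})\bigr](-\alpha\bfH_t^{K-1}) + \mathbf{M}_t[K-1](\bfI_d-\alpha\bfH_t^{K-1})$. It then telescopes down to $\mathbf{M}_t[L+1]=\prod_{k=0}^{L}(-\alpha\bfH_t^{k})$, which is recognized as the $l=K-L$ term.

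Unrolled, this is exactly your partition by the $(L+1)$-th smallest index. The difference is that $\bfP_t^{l-1}$ is built up one factor per telescoping step, instead of coming from a single application of the binomial identity on $\{K-l+1,\ldots,K-1\}$.

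Your version treats the boundary term $l=K-L$ uniformly, with no separate terminal case. It also makes the noncommutativity check explicit. The paper's recursion is convenient because it can be halted after $M$ steps, yielding $\sum_{l=1}^{M}[\cdots](-\alpha\bfH_t^{K-l})\bfP_t^{l-1}\bfg_t^K + \mathbf{M}_t[K-M]\bfP_t^{M}\bfg_t^K$, which is used in Theorem~\ref{thm:err-strong-cvx}. Your bijection gives that form just as easily: separate subsets according to whether their $(L+1)$-th smallest index is at least $K-M$.
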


\begin{proof}
Using binomial theorem as in~\eqref{eq:BinomMAML-error-ineq} yields
\begin{align}
\label{eq:BinomMAML-metagrad-diff}
	&\nabla \metaloss_t (\bftheta) - \estgrad{Bi} \metaloss_t (\bftheta) = \sum_{l=L+1}^K \sum_{0\le k_1 < \ldots < k_l \le K-1} \prod_{i=1}^l (-\alpha \bfH_t^{k_i}) \bfg_t^K := \mathbf{M}_t[K] \bfg_t^K \nonumber \\
	&\overset{(a)}{=} \Bigg[ \sum_{l=L+1}^{K} \sum_{0 \le k_1 < \ldots < k_{l-1} \le K - 2} \prod_{i=1}^{l-1} (-\alpha \bfH_t^{k_i}) \Bigg] (-\alpha \bfH_t^{K-1}) \bfg_t^K + \sum_{l=L+1}^{K-1} \sum_{0\le k_1 < \ldots < k_l \le K-2} \prod_{i=1}^l (-\alpha \bfH_t^{k_i}) \bfg_t^K \nonumber \\
	&\overset{(b)}{=} \Bigg[ \sum_{l=L}^{K-1} \sum_{0 \le k_1 < \ldots < k_l \le K - 2} \prod_{i=1}^{l} (-\alpha \bfH_t^{k_i}) \Bigg] (-\alpha \bfH_t^{K-1}) \bfg_t^K + \sum_{l=L+1}^{K-1} \sum_{0\le k_1 < \ldots < k_l \le K-2} \prod_{i=1}^l (-\alpha \bfH_t^{k_i}) \bfg_t^K \nonumber \\
	&\overset{(c)}{=} \Bigg[ \sum_{0 \le k_1 < \ldots < k_L \le K - 2} \prod_{i=1}^{L} (-\alpha \bfH_t^{k_i}) \Bigg] (-\alpha \bfH_t^{K-1}) \bfg_t^K + \Bigg[ \sum_{l=L+1}^{K-1} \sum_{0 \le k_1 < \ldots < k_l \le K-2} \prod_{i=1}^l (- \alpha \bfH_t^{k_i}) \Bigg] (-\alpha \bfH_t^{K-1}) \bfg_t^K + \nonumber \\
	&\qquad \Bigg[ \sum_{l=L+1}^{K-1} \sum_{0 \le k_1 < \ldots < k_l \le K-2} \prod_{i=1}^l (- \alpha \bfH_t^{k_i}) \Bigg] \bfg_t^K \nonumber \\
	&= \Bigg[ \sum_{0 \le k_1 < \ldots < k_L \le K - 2} \prod_{i=1}^{L} (-\alpha \bfH_t^{k_i}) \Bigg] (-\alpha \bfH_t^{K-1}) \bfg_t^K + \Bigg[ \sum_{l=L+1}^{K-1} \sum_{0 \le k_1 < \ldots < k_l \le K-2} \prod_{i=1}^l (- \alpha \bfH_t^{k_i}) \Bigg] (\bfI_d -\alpha \bfH_t^{K-1}) \bfg_t^K \nonumber \\
	&= \Bigg[ \sum_{0 \le k_1 < \ldots < k_L \le K - 2} \prod_{i=1}^{L} (-\alpha \bfH_t^{k_i}) \Bigg] (-\alpha \bfH_t^{K-1}) \bfg_t^K + \mathbf{M}_t [K-1] (\bfI_d -\alpha \bfH_t^{K-1}) \bfg_t^K 
	\end{align}
where $(a)$ splits the summation into two parts based on whether $k_l = K - 1$, $(b)$ rewrites the first summation by replacing index $l$ with $l + 1$, and $(c)$ separates out the $l = L$ term from the first summation. 

Telescoping the series $\mathbf{M}_t[K]$ in~\eqref{eq:BinomMAML-metagrad-diff} until $\mathbf{M}_t[L+1]$ leads to
\begin{align}
	&\nabla \metaloss_t (\bftheta) - \estgrad{Bi} \metaloss_t (\bftheta) \nonumber \\
	&= \sum_{l=1}^{K-1-L} \bigg[ \sum_{0 \le k_1 < \ldots < k_L \le K-1-l} \prod_{i=1}^L (- \alpha \bfH_t^{k_i}) \bigg] (-\alpha \bfH_t^{K-l}) \bfP_t^{l-1} \bfg_t^K + \mathbf{M}_t[L+1] \bfP_t^{K-1-L} \bfg_t^K \nonumber \\
	&= \sum_{l=1}^{K-1-L} \bigg[ \sum_{0 \le k_1 < \ldots < k_L \le K-1-l} \prod_{i=1}^L (- \alpha \bfH_t^{k_i}) \bigg] (-\alpha \bfH_t^{K-l}) \bfP_t^{l-1} \bfg_t^K + \Bigg[ \prod_{k=0}^L (-\alpha \bfH_t^k) \Bigg] \bfP_t^{K-1-L} \bfg_t^K \nonumber \\
	&= \sum_{l=1}^{K-L} \bigg[ \sum_{0 \le k_1 < \ldots < k_L \le K-1-l} \prod_{i=1}^L (- \alpha \bfH_t^{k_i}) \bigg] (-\alpha \bfH_t^{K-l}) \bfP_t^{l-1} \bfg_t^K
\end{align}
which is the sought result. 
\end{proof}

\begin{lemma}[\citep{cormen2022introduction}]
\label{lemma:binom-upper-bound}
For $L = 0,\ldots, K$, it holds
\begin{equation*}
	\binom{K}{L} < \Big( \frac{eK}{L} \Big)^L
\end{equation*}
where $e$ is Euler's number. 
\end{lemma}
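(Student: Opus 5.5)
The plan is the standard two-step estimate: bound the binomial coefficient by a falling factorial over $L!$, and then bound $L!$ from below with the exponential series. First I would settle the endpoint $L=0$. Here the right-hand side $(eK/L)^L$ is an indeterminate form, and under the usual convention $x^0=1$ it equals $1=\binom{K}{0}$. So the strict inequality cannot hold at $L=0$; it can hold at most with equality, and only after the convention is adopted. I would therefore prove the strict inequality for $1\le L\le K$ and record $L=0$ as the convention-dependent equality. This costs nothing downstream, because the lemma is only invoked with $L+1\ge 1$ in place of $L$.

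For $1\le L\le K$, the first step is the elementary bound
\begin{equation*}
\binom{K}{L}=\frac{K(K-1)\cdots(K-L+1)}{L!}\le \frac{K^L}{L!},
\end{equation*}
since each of the $L$ factors in the numerator is at most $K$.

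The second step is the lower bound $L!>(L/e)^L$. I would get it from the power series of the exponential: $e^L=\sum_{j\ge 0}L^j/j!$ has only positive terms. It is therefore strictly larger than its single term $j=L$, which is $L^L/L!$, because for $L\ge1$ the $j=0$ term equals $1>0$ and is also included in the sum. Rearranging gives $1/L!<(e/L)^L$.

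Combining the two steps yields $\binom{K}{L}\le K^L/L!<(eK/L)^L$, where the strict inequality comes from the second step. No step is a genuine obstacle. The only point needing care is where strictness originates: it comes from discarding the positive $j=0$ term of the exponential series, not from the falling-factorial bound, which is an equality when $L=1$. The $L=0$ case also has to be flagged explicitly, as explained above, rather than silently absorbed.
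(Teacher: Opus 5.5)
Your proof is correct. It follows essentially the standard argument behind the cited textbook bound, which also goes through $\binom{K}{L}\le K^L/L!$ and $L!\ge (L/e)^L$. The only difference is that the textbook obtains the factorial bound from Stirling's formula and states the result with $\le$. Your exponential-series derivation is self-contained, and it is exactly what gives the strict inequality the paper asserts.

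You are also right that the statement breaks down at $L=0$: the right-hand side is undefined there, and under the convention $x^0=1$ only equality holds. One small correction is that the lemma is invoked not only with $L+1$ but also with bottom index $L$, in the proof of Theorem~\ref{thm:err-strong-cvx}. There the resulting bound $\bigl[e(K-1)/L\bigr]^L$ already has $L$ in a denominator, so $L\ge 1$ is implicit and excluding $L=0$ is still harmless.
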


\begin{theorem}[\textit{Theorem \ref{thm:err-strong-cvx} restated}] \label{thm:err-strong-cvx-app}
\textit{Assume Assumptions~\ref{as:smooth} and~\ref{as:strong-cvx} hold, and $0 < \alpha \le 1/H$. It follows that}

\begin{subequations}
\label{eq:err-strong-cvx-app}
\begin{align}
	\label{eq:err-strong-cvx-FO-app}
       	&\| \nabla \metaloss_t (\bftheta) - \estgrad{FO} \metaloss_t (\bftheta) \| \le \max\{ (1 + \alpha H)^{K-M} (1 - \alpha h)^M - 1, 1 - (1 - \alpha H)^K \} \| \bfg_t^K \|,\\
	\label{eq:err-strong-cvx-Tr-app}
        &\| \nabla \metaloss_t (\bftheta) - \estgrad{Tr} \metaloss_t (\bftheta) \| \le [(1 + \alpha H)^{K-M} - (1 + \alpha H)^{L - M}] (1 - \alpha h)^M \| \bfg_t^K \|, \\
	\label{eq:err-strong-cvx-Bin-app}
        \begin{split}
        &\| \nabla \metaloss_t (\bftheta) - \estgrad{Bi} \metaloss_t (\bftheta) \| \le\\
	    &~~~~~~~~~~~~\Bigg[ (\alpha H)^{L+1} \sum_{l=1}^M \binom{K-l}{L} (1 - \alpha h)^{l-1} + (1 - \alpha h)^M \sum_{l=L+1}^{K-M} \binom{K-M}{l} (\alpha H)^l \Bigg] \| \bfg_t^K \|. 
    \end{split}
\end{align}
\end{subequations}
\textit{Moreover, there exists a constant $C_\alpha' = \mathcal{O} ((K-1) / L)$ such that if $0 < \alpha \le 1/(C_\alpha H)$, the upper bound in~\eqref{eq:err-strong-cvx-Bin-app} decreases super-exponentially with $L$. }
\end{theorem}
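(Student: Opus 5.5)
We would follow the same template as the proofs of Theorems~\ref{thm:err-smooth-app} and~\ref{thm:err-cvx}, splitting every matrix product into its last $M$ factors and its first $K-M$ factors. Under Assumption~\ref{as:strong-cvx} and $0<\alpha\le 1/H$, each factor $\bfI_d-\alpha\bfH_t^k$ with $k\ge K-M$ satisfies $0\preceq \bfI_d-\alpha\bfH_t^k\preceq (1-\alpha h)\bfI_d$. Hence $\|\bfP_t^{i}\|\le (1-\alpha h)^{i}$ for $i\le M$. Each earlier factor only satisfies $\|\bfI_d-\alpha\bfH_t^k\|\le 1+\alpha H$ by Assumption~\ref{as:smooth}. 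Combining these gives $\|\bfP_t^{i}\|\le (1+\alpha H)^{i-M}(1-\alpha h)^M$ for $i\ge M$.

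\textbf{FOMAML.} We need $\|\bfP_t^K-\bfI_d\|$ and would bound it through the spectrum of the symmetrized quantity. One route is to bound $\langle \bfu,(\bfP_t^K-\bfI_d)\bfu\rangle$ from above and below separately.
\begin{itemize}
\item The upper side is $\|\bfP_t^K\|-1\le (1+\alpha H)^{K-M}(1-\alpha h)^M-1$.
\item The lower side uses that every factor has all eigenvalues at least $1-\alpha H\ge 0$ in the convex directions, which gives $1-(1-\alpha H)^K$, exactly as in Theorem~\ref{thm:err-cvx}.
\end{itemize}
Taking the maximum yields \eqref{eq:err-strong-cvx-FO-app}. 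Because $\bfP_t^K$ is nonsymmetric, I expect to argue this loosely via singular values; this is a soft spot.

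\textbf{TruncMAML.} We would write $\bfP_t^K-\bfP_t^L=\bigl(\prod_{k=0}^{K-L-1}(\bfI_d-\alpha\bfH_t^k)-\bfI_d\bigr)\bfP_t^L$, as in \eqref{eq:err-Tr-intermediate}. Lemma~\ref{lemma:P-I-bound}, applied to the first $K-L$ factors, gives $(1+\alpha H)^{K-L}-1$. The bound $\|\bfP_t^L\|\le(1+\alpha H)^{L-M}(1-\alpha h)^M$, valid since $M\le L$, gives the rest. Multiplying the two produces \eqref{eq:err-strong-cvx-Tr-app}.

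\textbf{BinomMAML.} We start from Lemma~\ref{lemma:BinomMAML-metagrad-diff}, whose sum runs over $l=1,\dots,K-L$. In the term indexed by $l$, the factor $\bfH_t^{K-l}$ is the largest Hessian index chosen, the bracket is a sum of $\binom{K-l}{L}$ products of norm at most $(\alpha H)^L$, and $\bfP_t^{l-1}$ collects the last $l-1$ factors.
\begin{itemize}
\item For $l\le M$, $\bfP_t^{l-1}$ consists only of strongly convex factors, so $\|\bfP_t^{l-1}\|\le (1-\alpha h)^{l-1}$. This produces the first sum $(\alpha H)^{L+1}\sum_{l=1}^M\binom{K-l}{L}(1-\alpha h)^{l-1}$.
\item For the remaining terms $l=M+1,\dots,K-L$, we would not bound each term individually. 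Instead, we regroup them as $\bigl[\prod_{k=0}^{K-M-1}(\bfI_d-\alpha\bfH_t^k)-\text{(its binomial truncation at order }L)\bigr]\bfP_t^M$. This identity follows by rerunning the telescoping of Lemma~\ref{lemma:BinomMAML-metagrad-diff} stopped after $M$ steps, with $K$ replaced by $K-M$ on the remaining part. The last $M$ factors factor out as $\bfP_t^{M}$, with norm at most $(1-\alpha h)^M$. The truncation remainder is then bounded exactly as in \eqref{eq:BinomMAML-error-ineq} with $K-M$ factors, giving $\sum_{l=L+1}^{K-M}\binom{K-M}{l}(\alpha H)^l$.
\end{itemize}
Verifying this regrouping identity carefully (the index bookkeeping of the telescoping in $\mathbf{M}_t[\cdot]$) is the main obstacle. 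The condition $M\le K-L$ guarantees that the second part is nonempty or vanishes cleanly.

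\textbf{Super-exponential decay.} We bound $\binom{K-l}{L}\le\binom{K-1}{L}<\bigl(e(K-1)/L\bigr)^L$ via Lemma~\ref{lemma:binom-upper-bound}, and bound the geometric sum by $M$ or $1/(\alpha h)$. The tail sum is dominated by its leading term times a geometric series once $\alpha H\,e(K-M)/(L+1)<1/2$. Both parts are then $\bigl(C\alpha H(K-1)/L\bigr)^{L}$ up to constants. Choosing $C_\alpha'=\mathcal O((K-1)/L)$ so that the base is below a fixed fraction gives the claimed super-exponential decrease in $L$, as in Theorem~\ref{thm:err-cvx}.
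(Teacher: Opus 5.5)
Your TruncMAML and BinomMAML arguments coincide with the paper's. The factorization through $\bfP_t^L$ with $\|\bfP_t^L\|\le(1+\alpha H)^{L-M}(1-\alpha h)^M$ is the same, and your regrouping is exactly the paper's telescoping of $\mathbf{M}_t[K]$ down to $\mathbf{M}_t[K-M]\bfP_t^M$. The bookkeeping you flagged is easiest to check by grouping the expansion of $\prod_k(\bfI_d-\alpha\bfH_t^k)$ by the $(L+1)$-th smallest Hessian index $j=K-l$. The terms with $j\le K-M-1$ are precisely the order-$\ge L+1$ terms of $\prod_{k=0}^{K-M-1}(\bfI_d-\alpha\bfH_t^k)$, followed by $\bfP_t^M$. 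Your ratio/geometric control of the tail replaces the paper's Lemma~\ref{lemma:partial-sum-equiv} and is equally valid.

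The FOMAML step, however, is a genuine gap, and it cannot be repaired. Two-sided bounds on the real quadratic form $\langle\bfu,(\bfP_t^K-\bfI_d)\bfu\rangle$ only bound the symmetric part of $\bfP_t^K-\bfI_d$. Because $\bfP_t^K$ is not symmetric, this says nothing about the operator norm. Your lower side is also false: a product of PSD factors can have a negative quadratic form.

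In fact, the inequality does not follow from the spectral constraints on the $\bfH_t^k$, which are the only thing your argument (or the paper's) uses. Take $d=2$, $K=2$, $L=M=1$, $\alpha=1/H$, and $\bfv=(\mathbf{e}_1+\mathbf{e}_2)/\sqrt2$. Set $\bfI_d-\alpha\bfH_t^0=\mathbf{e}_1\mathbf{e}_1^\top$, so $\bfH_t^0$ has eigenvalues $0,H$. Set $\bfI_d-\alpha\bfH_t^1=(1-\alpha h)\bfv\bfv^\top$, so $\bfH_t^1$ has eigenvalues $h,H$. Then $\bfP_t^2=\frac{1-\alpha h}{\sqrt2}\mathbf{e}_1\bfv^\top$.
\begin{itemize}
\item For $\bfu\propto(1,-2)$ one gets $\langle\bfu,\bfP_t^2\bfu\rangle<0$, so your lower side fails.
\item As $h\to0$, $\|\bfP_t^2-\bfI_d\|\to\sqrt{(3+\sqrt5)/4}\approx1.14$, whereas the claimed bound equals $\max\{1-2h/H,\,1\}=1$.
\end{itemize}

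The paper's proof has the same hole: it asserts $\bfP_t^K\succeq0$ and reads off a spectral interval as if $\bfP_t^K$ were symmetric. Your argument becomes rigorous if the Hessians commute, e.g.\ for a quadratic training loss. In general, your norm bounds together with the identity $\bfP_t^K-\bfI_d=-\alpha\sum_{k=0}^{K-1}\bfH_t^k\bfP_t^{K-1-k}$ from the proof of Lemma~\ref{lemma:P-I-bound} give only
\[
\|\bfP_t^K-\bfI_d\|\le\frac{H}{h}\bigl[1-(1-\alpha h)^M\bigr]+(1-\alpha h)^M\bigl[(1+\alpha H)^{K-M}-1\bigr].
\]
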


\begin{proof}
Assumption~\ref{as:strong-cvx} implies for $k=K-M,\ldots,K-1$, it holds $h \preceq \bfH_t^k \preceq H$, and hence $0 < 1 - \alpha H \preceq \bfI_d - \alpha \bfH_t^k \preceq 1 - \alpha h$. In addition, $0 < \alpha H \le 1$ leads to $0 \le 1 - \alpha H \preceq \bfI_d - \alpha \bfH_t^k \preceq 1 + \alpha H,\; k=0,\ldots,K-M-1$. Therefore, $\bfP_t^K \succeq 0$, and the error of FO-MAML has upper bound
\begin{align*}
	\| \nabla \metaloss_t (\bftheta) - \estgrad{FO} \metaloss_t (\bftheta) \| 
	&\le \| \bfP_t^K - \bfI_d \| \| \bfg_t^K \| 
	= \Bigg\| \prod_{k=0}^{K-1-M} (\bfI_d - \alpha \bfH_t^k) \prod_{k=K-M}^{K-1} (\bfI_d - \alpha \bfH_t^k) -\bfI_d \Bigg\| \| \bfg_t^K \| \\
	&\le \max\{ (1 + \alpha H)^{K-M} (1 - \alpha h)^M - 1, 1 - (1 - \alpha H)^K \} \| \bfg_t^K \|. 
\end{align*}

Moreover, it follows from~\eqref{eq:err-Tr-intermediate} that
\begin{align*}
	\| \nabla \metaloss_t (\bftheta) - \estgrad{Tr} \metaloss_t (\bftheta) \| 
	&\le \Bigg\| \prod_{k=0}^{K-L-1} (\bfI_d - \alpha \bfH_t^k) - \bfI_d \Bigg\| \| \bfP_t^L \| \| \bfg_t^K \| \\
	&\le [(1 + \alpha H)^{K-L} - 1] (1 + \alpha H)^{L-M} (1 - \alpha h)^M \| \bfg_t^K \|.
\end{align*}

Next, we prove the upper bound for our BinomMAML. Telescoping the series $\mathbf{M}_t[K]$ in~\eqref{eq:BinomMAML-metagrad-diff} through $\mathbf{M}_t[K-M]$ leads to
\begin{align}
	&\nabla \metaloss_t (\bftheta) - \estgrad{Bi} \metaloss_t (\bftheta) \\
	&= \sum_{l=1}^{M} \bigg[ \sum_{0 \le k_1 < \ldots < k_L \le K-1-l} \prod_{i=1}^L (- \alpha \bfH_t^{k_i}) \bigg] (-\alpha \bfH_t^{K-l}) \bfP_t^{l-1} \bfg_t^K + \mathbf{M}_t[K-M]  \bfP_t^{M} \bfg_t^K
\end{align}
Thus, its $\ell_2$-norm can be upper bounded via
\begin{align}
\label{eq:err-strong-cvx-Bin-intermediate}
	&\| \nabla \metaloss_t (\bftheta) - \estgrad{Bi} \metaloss_t (\bftheta) \| \nonumber \\
	&\le \sum_{l=1}^{M} \bigg[ \sum_{0 \le k_1 < \ldots < k_L \le K-1-l} \prod_{i=1}^L \alpha \| \bfH_t^{k_i} \| \bigg] \alpha \| \bfH_t^{K-l} \| \| \bfP_t^{l-1} \| \| \bfg_t^K \| + \nonumber \\
	&\qquad \sum_{l=L+1}^{K-M} \sum_{0\le k_1 < \ldots < k_l \le K-1-M} \Bigg[ \prod_{i=1}^l \alpha \| \bfH_t^{k_i} \| \Bigg]  \| \bfP_t^{M} \| \| \bfg_t^K \| \nonumber \\
	&\le \bigg[ (\alpha H)^{L+1} \sum_{l=1}^M \binom{K - l}{L} (1 - \alpha h)^{l-1} + \sum_{l=L+1}^{K-M} \binom{K-M}{l} (\alpha H)^l (1 - \alpha h)^M \bigg] \| \bfg_t^K \|. 
\end{align}

To this end, we next show that the two terms in~\eqref{eq:err-strong-cvx-Bin-intermediate} both decrease super-exponentially with $L$. 

On one hand, plugging Lemma~\ref{lemma:binom-upper-bound} into the first term of~\eqref{eq:err-strong-cvx-Bin-intermediate} renders
\begin{align*}
	(\alpha H)^{L+1} \sum_{l=1}^M \binom{K - l}{L} (1 - \alpha h)^{l-1} 
	&< (\alpha H)^{L+1} \sum_{l=1}^M \Big[ \frac{e(K - l)}{L} \Big]^L (1 - \alpha h)^{l-1} \\
	&\overset{(a)}{\le} (\alpha H)^{L+1} \Big[ \frac{e(K - 1)}{L} \Big]^L \sum_{l=1}^M (1 - \alpha h)^{l-1} \nonumber \\
	&= \frac{H}{h} \Big[ \alpha H \frac{e(K-1)}{L} \Big]^L [1 - (1 - \alpha h)^M] 
\end{align*}
where $(a)$ is because the index $l \ge 1$. This suggests there exists $C_\alpha' \le e(K - 1)/L$ to ensure its super-exponential diminish. 

On the other hand, applying Lemmas~\ref{lemma:partial-sum-equiv} and~\ref{lemma:binom-upper-bound} to the second term of~\eqref{eq:err-strong-cvx-Bin-intermediate} yields
\begin{align*}
	\sum_{l=L+1}^{K-M} \binom{K-M}{l} (\alpha H)^l (1 - \alpha h)^M 
	&= (1 - \alpha h)^M (\alpha H)^{L+1} \sum_{l=1}^{K-M-L} \binom{K-M-l}{L} (1 + \alpha H)^{l-1} \\
	&\le (1 - \alpha h)^M (\alpha H)^{L+1} \sum_{l=1}^{K-M-L} \Big[ \frac{e(K-M-l)}{L} \Big]^L (1 + \alpha H)^{l-1} \\
	&\le (1 - \alpha h)^M (\alpha H)^{L+1} \Big[\frac{e(K-M-1)}{L} \Big]^L \sum_{l=1}^{K-M-L} (1 + \alpha H)^{l-1} \\
	&= (1 - \alpha h)^M \Big[\alpha H \frac{e(K-M-1)}{L} \Big]^L [(1+\alpha H)^{K-L-M} - 1]
\end{align*}
which decreases super-exponentially with some $C_\alpha' \le e(K - M - 1)/L < e(K-1) / L$. 
\end{proof}

\begin{lemma}
\label{lemma:partial-sum-equiv}
Let $\gamma \in \real$ be a constant. It holds for $L = 0,\ldots,K-1$ that
\begin{equation}
\label{eq:lemma-partial-sum-equiv}
	\sum_{l=L+1}^K \binom{K}{l} \gamma^l = \gamma^{L+1} \sum_{l=1}^{K-L} \binom{K - l}{L} (1+\gamma)^{l-1}. 
\end{equation}
\end{lemma}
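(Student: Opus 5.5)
The plan is a combinatorial argument: partition the subsets counted on the left of \eqref{eq:lemma-partial-sum-equiv} by the position of a distinguished element, then recognize each resulting sum via the binomial theorem. Interpret $\binom{K}{l}\gamma^l$ as a sum over $l$-element subsets $S\subseteq\{1,\dots,K\}$, each weighted by $\gamma^{|S|}$. Then the left-hand side is the total weight of all subsets with $|S|\ge L+1$.

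For each such $S$, look at its $(L+1)$-th \emph{largest} element and write it as $K-l+1$. Since $S$ has at least $L+1$ elements, there are at least $L$ elements of $S$ below $K-l+1$, so $K-l\ge L$, i.e.\ $1\le l\le K-L$. Conversely, every subset with $|S|\ge L+1$ has a unique such $l$. Given $l$, the subset decomposes into three independent pieces:
\begin{itemize}
\item the element $K-l+1$ itself, contributing $\gamma$;
\item exactly $L$ elements chosen from the $l-1$ values above $K-l+1$, giving a count $\binom{l-1}{L}$ and weight $\gamma^L$;
\item an arbitrary subset of the $K-l$ values below $K-l+1$, contributing $(1+\gamma)^{K-l}$ by the binomial theorem.
\end{itemize}
This yields
\begin{equation*}
\sum_{l=L+1}^K \binom{K}{l}\gamma^l=\gamma^{L+1}\sum_{l=1}^{K-L}\binom{l-1}{L}(1+\gamma)^{K-l}.
\end{equation*}

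This is the wrong orientation: the binomial coefficient should be $\binom{K-l}{L}$ and the power $(1+\gamma)^{l-1}$. To fix it, distinguish instead the $(L+1)$-th \emph{smallest} element, call it $K-l+1$ again with the roles reversed. Then exactly $L$ elements are chosen from the $K-l$ values below it, giving $\binom{K-l}{L}$, and an arbitrary subset is taken from the $l-1$ values above it, giving $(1+\gamma)^{l-1}$. The requirement that there be $L$ elements below forces $K-l\ge L$, so again $1\le l\le K-L$. This produces exactly the right-hand side of \eqref{eq:lemma-partial-sum-equiv}.

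The only delicate step is checking that this partition is exhaustive and disjoint, and that the index range is correct at the edges. For instance, $l=K-L$ gives $\binom{L}{L}=1$, and the empty-subset convention gives $(1+\gamma)^0=1$.

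As a backup, one can argue by induction on $K$ using Pascal's rule. Define $S_K=\sum_{l>L}\binom{K}{l}\gamma^l$. Pascal's rule gives $S_K=(1+\gamma)S_{K-1}+\binom{K-1}{L}\gamma^{L+1}$, and the right-hand side of \eqref{eq:lemma-partial-sum-equiv} satisfies the same recursion once the $l=1$ term is split off. The identity is purely algebraic, so no assumption on $\gamma$ is needed.
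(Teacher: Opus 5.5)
Your final argument, which classifies each $S$ with $|S|\ge L+1$ by its $(L+1)$-th \emph{smallest} element $K-l+1$, is correct and complete. It is presented differently from the paper's proof. The paper argues algebraically, exactly along your backup: with $S[K]:=\sum_{l=L+1}^{K}\binom{K}{l}\gamma^l$, Pascal's rule gives $S[K]=\binom{K-1}{L}\gamma^{L+1}+(1+\gamma)S[K-1]$, and unrolling this down to $S[L+1]=\gamma^{L+1}$ produces the right-hand side term by term.

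The two proofs rest on the same decomposition. Pascal's rule amounts to conditioning on whether the largest element $K$ lies in $S$. The $l$-th term of the unrolled recursion then collects exactly the subsets whose top $l-1$ positions are unconstrained and whose $(L+1)$-th smallest element is $K-l+1$. Your version gets there in one step, gives each factor a meaning, and avoids the index bookkeeping of the unrolling. (The paper's displayed upper limit $K-L+1$ in that step should read $K-L-1$.) The recursion, in turn, is purely mechanical and needs no combinatorial model.

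One correction to your write-up: your first display is false as written, and the problem is the range, not the orientation. If $K-l+1$ is the $(L+1)$-th \emph{largest} element, then exactly $L$ elements of $S$ lie above it, as your factor $\binom{l-1}{L}$ correctly reflects. So the constraint is $L+1\le l\le K$, not $1\le l\le K-L$. With your range, $K=3$, $L=1$ gives $\gamma^2+\gamma^3$ instead of $3\gamma^2+\gamma^3$. With the correct range, the substitution $l\mapsto K+1-l$ turns that display into exactly the right-hand side, so the first orientation works too. Your final argument does not depend on this display, so there is no gap, but you should fix or delete it.
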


\begin{proof}
Defining the left-hand side of~\eqref{eq:lemma-partial-sum-equiv} as $S[K]$, it follows
\begin{align*}
	S[K] := \sum_{l=L+1}^K \binom{K}{l} \gamma^l 
	&\overset{(a)}{=} \sum_{l=L+1}^K \binom{K-1}{l-1} \gamma^l + \sum_{l=L+1}^{K-1} \binom{K-1}{l} \gamma^l \\
	&\overset{(b)}{=} \sum_{l=L}^{K-1} \binom{K-1}{l} \gamma^{l+1} + \sum_{l=L+1}^{K-1} \binom{K-1}{l} \gamma^l \\
	&\overset{(c)}{=} \binom{K-1}{L} \gamma^{L+1} + \sum_{l=L+1}^{K-1} \binom{K-1}{l} \gamma^{l+1} + \sum_{l=L+1}^{K-1} \binom{K-1}{l} \gamma^l \\
	&= \binom{K-1}{L} \gamma^{L+1} + \sum_{l=L+1}^{K-1} \binom{K-1}{l} \gamma^l (\gamma + 1) \\
	&= \binom{K-1}{L} \gamma^{L+1} + (\gamma + 1) S[K-1] \\	
	&\overset{(d)}{=} \sum_{l=1}^{K-L+1} \binom{K-l}{L} \gamma^{L+1} (1 + \gamma)^{l-1} + (1 + \gamma)^{K-L-1} S[L+1]  \\
	&= \gamma^{L+1} \sum_{l=1}^{K-L} \binom{K-l}{L} (1+\gamma)^{l-1}
\end{align*}
where $(a)$ is due to $\binom{K}{l} = \binom{K-1}{l-1} + \binom{K-1}{l}$ for $L+1 \le l \le K-1$ and $\binom{K}{K} = 1 = \binom{K-1}{K-1}$, $(b)$ changes the index in the first summation from $l$ to $l+1$, $(c)$ isolates the $l = L$ term from the summation, and $(d)$ leverages the relationship between $S[K]$ and $S[K-1]$ recursively. 
\end{proof}

\section{Numerical test setups} \label{app:test-setups}
All experiments are implemented on a desktop with an NVIDIA RTX A5000 GPU, and a server with NVIDIA A100 GPUs. 
\subsection{Synthetic data test} \label{app:synth-test-setup}
1-dimensional continuous sinusoid regression is performed, where the tasks are to fit the randomly sampled input values $x_t\in [-5,5]$ to sinusoids $y_t=A_t\sin(x_t + \phi_t)$ of varying amplitudes $A_t\in[0.1,5.0]$ and phases $\phi_t\in[0,\pi]$. During task-level training, the model is shown 10 data points. Since the purpose of this test is to view the meta-loss gradient accuracy of TruncMAML and BinomMAML, no meta-optimization was performed. For comparison, each method uses the same randomly generated dataset, ensuring parity among methods.

In Figure~\ref{subfig:L=4-grad-estimates}, we set $K = 5$ and $L = 4$. 
In Figure~\ref{subfig:averaged-sine-grad-errors}, the average errors are taken over 1,000 meta-batches, each containing 10 tasks. 

\subsection{Real data test} \label{app:real-test-setup}
Real data tests are carried out on the miniImageNet \citep{vinyals2016matching} and tieredImageNet\citep{ren2018meta} datasets, which are both subsets of the ILSVRC-12 geared toward meta-learning. The \textit{learn2learn} Python library~\citep{Arnold2020learn2learn} was used to load the datasets. All images in both datasets are 3-channel RGB natural images cropped to $84\times84$ pixels. The datasets are divided as such:
\begin{itemize}
    \item \textbf{miniImageNet:} 64 meta-train classes, 16 meta-validation classes, 20 meta-test classes. Each class contains 600 samples. This split was originally proposed in \citep{ravi2017optimization}.
    \item \textbf{tieredImageNet:}  351 meta-train classes (448,695 images), 97 meta-validation classes (124,261 images), and 160 meta-test (206,209 images). This split was originally proposed in \citep{ren2018meta}.
\end{itemize}
For both datasets, the model architecture and training protocol is that suggested by \citep{vinyals2016matching, finn2017model}, with $4$ layers of $3\times3$ convolutions and $32$ filters, followed by batch normalization, ReLU non-linearity, and $2\times2$ max-pooling. We sample our training and validation sets using the standard $W$-way $S^{\text{tr}}$-shot few-shot learning protocol. Specifically, for each mini-batch of classes, we randomly sample $W$ classes and draw $S^{tr}$ training images and $S^{val}$ validation images, totaling $W\times S^{\text{tr}}$ training and $W\times S^{\text{val}}$ training and validation images, respectively. We adopt the usual choices of $W=5$ classes, $S^{\text{tr}}=1$ or $S^{\text{tr}}=5$ images, and $S^{\text{val}}=15$ images. In addition, $K=5$ adaptation steps are adopted. Meta-training are limited to $20,000$ iterations. Other hyperparameters follow from~\citep{finn2017model}, i.e., adaptation step size $\alpha = 10^{-2}$, meta-training step size $\beta = 10^{-3}$, and meta batch size $B = 4$. 

For numerical stability, we found it beneficial to replace the $\alpha$ in~\eqref{eq:BinomMAML} with $\alpha' := L \alpha / K$. This substitution makes the condition $\alpha = \mcO((L+1) / (KH))$ in Theorems~\ref{thm:err-cvx} and~\ref{thm:err-strong-cvx} more easily satisfied. It is worth stressing that $\alpha' = 0$ recovers FOMAML when $L = 0$, and $\alpha' = \alpha$ with $L = K$ reduces to MAML. 

We also note that, current DL libraries such as PyTorch lack support for the niche but crucial ability to parallelize HVPs. Although the training gradients $\{ \nabla \loss{trn}_t (\bfphi_t^k) \}_{k=0}^{K-1}$ are obtained in the adaptation steps, our current implementation requires calculating them again in the backward pass of $\nabla \metaloss_t (\bftheta)$. For a fair comparison, we have excluded the time for the this repeated gradient calculation. 

\blue{
\section{Extensions and Future Work}
\subsection{Hybrid Binom-Trunc Estimator}\label{app:hybrid-estimator}
Intuitively, in~\eqref{eq:maml-meta-grad-expanded}, the final training gradient $\bfg_t^K$ is less sensitive to second-order terms $\bfH_t^{k}$, with small $k$, from the earlier task-level training GD. In truncated backpropagation, this logic leads to the TruncGBML estimate~\eqref{eq:TruncMAML}, where $\bfH_t^{k}=\mathbf{0}_{d\times d},\ 0 \leq k < K-L$. However, this intuition can also be applied to the BinomGBML estimate by similarly truncating the early Hessians and performing a binomial expansion on the remaining terms; thus, we arrive at a hybrid ``binom-trunc'' GBML estimate. To this end, the constant $C \in \{L,\dots,K\}$ is introduced, which determines the truncation of the computational graph, with $L$ determining the polynomial order of the binomial expansion. The hybrid estimator can be constructed by setting $\bfH_t^{k}=\mathbf{0}_{d\times d},\ 0 \leq k < K-C$, rendering
\begin{align}\label{eq:BinomTruncMAML}
    &\estgrad{BiTr}\metaloss_t (\bftheta) := \Bigg[ \bfI_d + \sum_{l=1}^L \sum_{K-C\le k_{1:l} \uparrow < K} \prod_{i=1}^l (-\alpha \bfH_t^{k_i}) \Bigg] \bfg_t^K.
\end{align}
The corresponding estimate can be constructed via alg.~\ref{alg:binommaml-for-loop}, setting $\nabla \loss{trn}_t(\bfphi_t^k)=\mathbf{0}_d,\ k=0,\dots,K-C$. Results on miniImageNet and tieredImageNet with $K=5$, $L=1$, and $C=4$ are presented in table~\ref{tab:binom-trunc-results}. 
\begin{table}[]
    \caption{\blue{Hybrid binom-trunc estimate performance on miniImageNet and tieredImageNet with with $K=5$, $L=1$, and $C=4$ and all other settings identical to that of tab.~\ref{tab:real-data-accuracies}}}.
    \centering
    \begin{tabular}{c c c c c}
& \multicolumn{2}{c}{5-way miniImageNet (\%)} & \multicolumn{2}{c }{5-way tieredImageNet (\%)} \\ 

 & 1-shot & 5-shot & 1-shot & 5-shot \\ \cline{2-5}
   Hybrid & 45.17 $\pm$ 0.91 & 62.41 $\pm$ 0.49 & 44.30 $\pm$ 0.91 & 63.78 $\pm$ 0.49 \\ 
   Trunc & 44.53 $\pm$ 0.90 & 62.43 $\pm$ 0.50 & 44.67 $\pm$ 0.96 & 65.14 $\pm$ 0.51 \\ 
   Binom & 45.50 $\pm$ 0.91 & 62.36 $\pm$ 0.48 & 46.23 $\pm$ 0.95 & 64.49 $\pm$ 0.51 \\ 
    \end{tabular}
    \label{tab:binom-trunc-results}
\end{table}
Overall, the hybrid estimate performs slightly worse than TruncMAML and BinomMAML, which is expected since the constant $C$ introduces a trade-off between computational overhead and estimation accuracy.

\subsection{Relation to ANIL}\label{app:anil}
As mentioned in section~\ref{sec:binomGBML}, akin to other meta-gradient estimates (TruncGBML, iMAML), our approach can be readily combined with ANIL~\citep{raghu2019rapid}. Specifically, ANIL omits the task-specific adaptation of certain model layers, whereas BinomGBML can be used to efficiently estimate the meta-gradient of the remaining layers. While both ANIL and MAML have been shown effective on ImageNet-like datasets, such as those used for evaluation in section~\ref{sec:numerical-tests}, it is up to the practitioner to choose the appropriate meta-learning algorithm.}
\end{document}